\newcommand{\1}[1]{\mathds{1}\left[#1\right]}
\def\eqref#1{equation~\ref{#1}}
\def\1{\bm{1}}
\def\eps{{\epsilon}}
\def\vxi{{\mathbf{\xi}}}
\def\rmA{{\mathbf{A}}}
\def\rmI{{\mathbf{I}}}
\def\rmU{{\mathbf{U}}}
\def\rmV{{\mathbf{V}}}
\def\vmu{{\bm{\mu}}}
\def\veps{{\bm{\epsilon}}}
\def\vb{{\bm{b}}}
\def\ve{{\bm{e}}}
\def\vu{{\bm{u}}}
\def\vv{{\bm{v}}}
\def\vbeta{{\bm{\beta}}}
\def\vw{{\bm{w}}}
\def\vx{{\bm{x}}}
\def\vy{{\bm{y}}}
\DeclareMathAlphabet{\mathsfit}{\encodingdefault}{\sfdefault}{m}{sl}
\SetMathAlphabet{\mathsfit}{bold}{\encodingdefault}{\sfdefault}{bx}{n}
\def\gB{{\mathcal{B}}}
\def\gC{{\mathcal{C}}}
\def\gF{{\mathcal{F}}}
\def\gH{{\mathcal{H}}}
\def\gL{{\mathcal{L}}}
\def\gM{{\mathcal{M}}}
\def\gN{{\mathcal{N}}}
\def\gO{{\mathcal{O}}}
\def\gP{{\mathcal{P}}}
\def\gQ{{\mathcal{Q}}}
\def\gR{{\mathcal{R}}}
\def\gS{{\mathcal{S}}}
\def\gU{{\mathcal{U}}}
\def\gV{{\mathcal{V}}}
\def\gW{{\mathcal{W}}}
\def\gX{{\mathcal{X}}}
\def\gY{{\mathcal{Y}}}
\def\sN{{\mathbb{N}}}
\def\sP{{\mathbb{P}}}
\def\sQ{{\mathbb{Q}}}
\def\sR{{\mathbb{R}}}
\def\sS{{\mathbb{S}}}
\def\sZ{{\mathbb{Z}}}
\DeclareMathOperator*{\E}{\mathbb{E}}
\DeclareMathOperator*{\argmin}{arg\,min}
\DeclarePairedDelimiter\norm{\lVert}{\rVert}
\newtheorem{theorem}{Theorem}[section]
\newtheorem{corollary}{Corollary}[theorem]
\newtheorem{lemma}{Lemma}[section]
\newtheorem{definition}{Definition}
\newtheorem{remark}{Remark}
\newtheoremstyle{sketchedtheorem}
  {10pt} % Space above
  {10pt} % Space below
  {} % Body font (same as standard theorem)
  {} % Indent amount (same as standard theorem)
  {\bfseries} % Theorem head font (same as standard theorem)
  {.} % Punctuation after theorem head (same as standard theorem)
  {5pt} % Space after theorem head (same as standard theorem)
  {} % Theorem head spec (can be left empty, meaning `normal`)
\theoremstyle{sketchedtheorem}
\title{
Role of Locality and Weight Sharing in Image-Based Tasks: A Sample Complexity Separation between CNNs, LCNs, and FCNs
}
\author{Aakash Lahoti$^1$, Stefani Karp$^{1,2}$, Ezra Winston$^1$, Aarti Singh$^1$ \& Yuanzhi Li$^1$\\
$^1$Machine Learning Department, Carnegie Mellon University, $^2$Google Research\\
\texttt{\{alahoti, shkarp, ewinston, aarti, yuanzhil\}@andrew.cmu.edu}
}
\begin{document}

\maketitle

\begin{abstract}
%(Add about translation invariance; gradient descent; computational)
    Vision tasks are characterized by the properties of locality and translation invariance. 
    The superior performance of convolutional neural networks (CNNs) on these tasks is widely attributed to the inductive bias of locality and weight sharing baked into their architecture.
    Existing attempts to quantify the statistical benefits of these biases in CNNs over locally connected convolutional neural networks (LCNs) and fully connected neural networks (FCNs) fall into one of the following categories: either they disregard the optimizer and only provide uniform convergence upper bounds with no separating lower bounds, 
    or they consider simplistic tasks that do not truly mirror the locality and translation invariance as found in real-world vision tasks.
    To address these deficiencies, we introduce the Dynamic Signal Distribution (DSD) classification task that models an image as consisting of $k$ patches, each of dimension $d$, and the label is determined by a $d$-sparse signal vector that can freely appear in any one of the $k$ patches. 
    On this task, for any orthogonally equivariant algorithm like gradient descent, we prove that CNNs require $\tilde{O}(k+d)$ samples, whereas LCNs require $\Omega(kd)$ samples, establishing the statistical advantages of weight sharing in translation invariant tasks. 
    Furthermore, LCNs need $\tilde{O}(k(k+d))$ samples, compared to $\Omega(k^2d)$ samples for FCNs, showcasing the benefits of locality in local tasks.
    Additionally, we develop information theoretic tools for analyzing randomized algorithms, which may be of interest for statistical research.
\end{abstract}

\section{Introduction}
Convolutional Neural Networks (CNNs) exhibit state-of-the-art performance across computer vision tasks, including Image Classification, Object Detection, and Out of Distribution Detection (\cite{imgclass, objdet, ood}). 
This efficacy is commonly attributed to the biases of locality and weight sharing encoded into CNNs' short convolutions. 
The rationale is that these biases align with the properties of vision tasks, where local and mobile signals determine the output (\cite{archbias1,archbias2}). In contrast, Locally Connected Neural Networks (LCNs) encode only locality, while Fully Connected Neural Networks (FCNs) encode neither locality nor weight sharing, thus resulting in a larger sample complexity compared to CNNs.

Previous works have attempted to quantify the statistical benefit of these architectural biases in CNNs.
For example, \cite{vardi}, \cite{du}  and \cite{hanie} derived Empirical Risk Minimization (ERM) bounds for CNNs which are tighter than that for FCNs.
However, they do not 
provide separating lower bounds for FCNs on the same task, and cannot rule out the possibility that FCNs can adaptively yield better bounds when the input satisfies locality and translation invariance.
In fact, as noted in \cite{zhiyuan}, without taking the training algorithm into consideration, standard lower bound techniques cannot be used to show a separation between the three models. 
This is because an algorithm can simulate CNNs and LCNs within FCNs. Thus, if the algorithm is unconstrained, the minimax lower bound for FCNs cannot be greater than any upper bound for CNNs or LCNs. 

Recently, \cite{zhiyuan} established a sample complexity separation between CNNs and FCNs that were trained on the restricted class of equivariant algorithms like gradient descent
\footnote{
Formally, equivariance is defined on the pair of the network architecture and the training algorithm. For brevity, we may refer to an algorithm as equivariant, when the underlying network(s) are clear from the context.
}.
\cite{wang} further extended this line of work to show a separation between FCNs, LCNs and CNNs.
However, the data models employed in these works are not truly reflective of the locality and translation invariance of vision tasks. 
Typically in such tasks, the output is determined by some local pattern, also known as ``signal''. 
For example, a cat within images labeled ``cat''. 
Often, this signal is embedded within uninformative background, also known as ``noise'', and can freely translate within the image, i.e. it can appear in any patch within the image, without changing the label (as illustrated in figure \ref{fig_cats}).
In contrast, in both \cite{zhiyuan} and \cite{wang}, the data model considered is as follows: the input $\mathbf{x} \sim \mathcal{N}(0, I_{4d})$,  and the label is given by $f(\vx)$, and $g(\vx)$ respectively,
\begin{align}
    f(\mathbf{x}) = \sum_{i=1}^{2d} x_i^2 -\sum_{i=2d+1}^{4d} x_i^2,
    \hspace{0.8cm}
    g(\mathbf{x}) = ( \sum_{i=1}^{d} x_{2i}^2 - x_{2i+1}^2 )( \sum_{i=d+1}^{2d} x_{2i}^2 - x_{2i+1}^2).
\end{align}
Both of these data models fail to capture the aforementioned desiderata of a model for a vision task.
Additionally, they lack the requisite structure to demonstrate how sample complexity varies with the 
``degree'' of locality and translation invariance within the input, or establish conditions on the input under which the differences between CNNs, LCNs, and FCNs are more pronounced.

Furthermore, it is worth noting that the driving force for their separation results is the interaction between two halves of the input. 
Specifically, their lower bound selects ``hard instances'' from the class of functions \( \gH = \{\vx_{1:d}^\top\rmU \vx_{d+1:2d}\} \), where $\rmU$ is a $d \times d$ orthonormal matrix, learning which results in a lower bound of $\Omega(d^2)$. 
While the interaction between the patches is an interesting phenomenon, it is not the primary characteristic of locality and translation invariance found in images.

\begin{figure}[t]
    \centering
    \includegraphics[width=\textwidth]{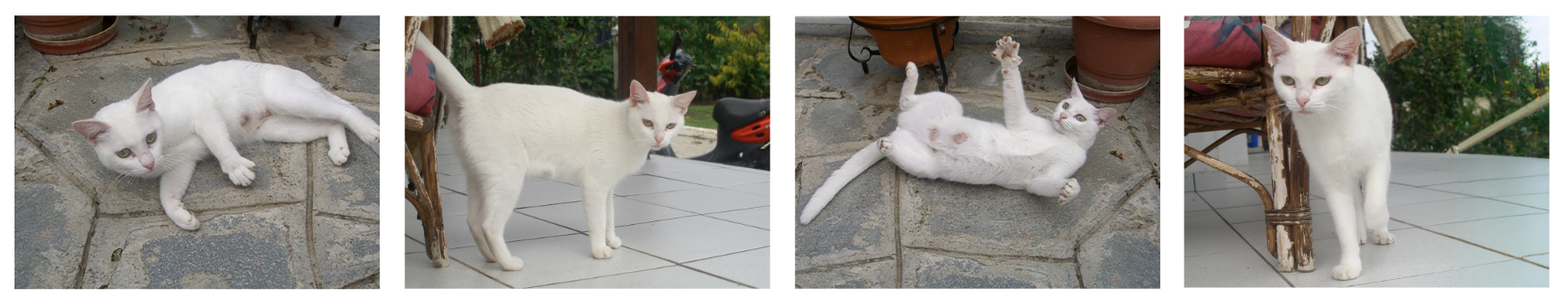}
    \caption{
    From the Cats Dataset \cite{cats}. 
    The cat, which is the class-determining signal, varies in position across images, showing the translation property amidst background noise.
    }
    \label{fig_cats}
\end{figure}

We introduce the Dynamic Signal Distribution (DSD) task, which is inspired by the setting in \cite{lsa}, as our data model for vision tasks.
The input $\vx \in \sR^{kd}$ is comprised of $k$ consecutive patches, each of dimension $d$. 
From amongst these $k$ patches, one of them is randomly filled with a noisy signed signal. 
The remaining patches are filled with isotropic Gaussian noise of variance $\sigma^2$. 
The binary label is set as the sign of the signal, so that all images with the same signal in any one of the patches have the same label.
By encapsulating concepts of signal, noise, locality, and translation invariance, the DSD task offers a higher fidelity to the complexities found in real-world vision tasks.

On this task, we establish a sample complexity separation of $\Omega(\sigma^2k^2d)$ vs $\tilde{O}(\sigma^2k(k+d))$ samples between FCNs and LCNs, as well as a separation of $\Omega(\sigma^2kd)$ vs $\tilde{O}(\sigma^2(k+d))$ samples between LCNs and CNNs.
Our analysis indicates that due to no architectural biases, FCNs incur a multiplicative cost factor of $k$ for each of the two reasons: identifying the location of the $k$ patches, and learning the signal vector for each patch. The factor of $d$ arises due to learning the signal which is $d$ dimensional.
For LCNs, we can eliminate the $k$ cost for identification of the patches since the location of all the patches is baked into the architecture. 
Finally for CNNs both these costs are removed as the architecture not only localizes all the patches, it also allows the signal to be jointly learnt across all patches via weight sharing.
It is noteworthy that both the LCN and the CNN upper bound feature a $k+d$ factor instead of the expected factor of $d$. This is an artifact of the gradient descent analysis, and is suggestive of being a potential cost for the algorithmic efficiency of gradient descent.

Our approach diverges from \cite{zhiyuan, wang}, because in our task, the marginal over the input
is not a $\mathbf{0}$-mean Gaussian, but a mixture of $k$ Gaussians, which is not an orthogonally invariant distribution. 
As a consequence, for deriving lower bounds, we cannot apply the Benedek-Itai bound from \cite{benedek} as done in \cite{zhiyuan}, nor can we directly use Fano's Theorem as done in \cite{wang} owing to the absence of the semi-metricness of $l_2$ loss under an invariant distribution, and analyzing the expected risk under a mixture of Gaussians is analytically difficult.
Instead, we utilize a novel technique that leverages the randomness of the training algorithm to break the original minimax lower-bound problem into $k$ simpler problems using a simulation-style argument.
In case of FCNs, we prove sample complexity lower bounds for the $k$ the simpler problems using a novel boosting technique to derive a reduction to the Gaussian mean estimation problem on the unit sphere.
To prove sample complexity lower bounds for the simpler problems in the case of LCNs, we prove a variant of Fano's Theorem that can be used for randomized algorithms.
Distinctively, our variant does not require the semi-metric property to hold on the entire space of output functions, as is needed in the "Fano's Theorem for Random Estimators" developed in \cite{wang}. 

Our sample complexity upper bounds depend on the analysis of an equivariant gradient descent style algorithm on LCNs and CNNs. This is unlike the separation proved in \cite{wang}, where they use covering number-based arguments for ERM analysis.
The advantage of doing a gradient descent analysis over an ERM analysis is two fold: 
First, it demonstrates a sample complexity separation for computationally-efficient (poly time) equivariant algorithms. This distinction is crucial because while a separation may exist for computationally inefficient algorithms, the separation might disappear under constraints of computational efficiency.
Second, for a valid separation, it is important to ensure that both the upper and lower bounds are derived for equivariant algorithms  since non-equivariant algorithms could potentially be more sample-efficient than their equivariant counterparts.
Furthermore, our approach differs significantly from \cite{lsa}, which analyzes population gradients by assuming enough ($\text{poly}(k,d)$) samples at each iteration to yield a representational gap between CNNs and CNTKs (Convolutional Neural Tangent Kernels). 
Since we are interested in sample complexity separation, we adopt a more direct analysis of empirical gradients.

\section{Other Related Works}
We already discussed some of the most relevant works, including \cite{zhiyuan, wang, lsa, vardi} in the introduction. Here, we will highlight a couple of additional works.

% A recent work, \cite{vardi}, established sample complexity bounds for one-hidden-layer neural networks, including CNNs. 
% However, their bounds depend on the input distribution only via the norm of the input. 
% This leads to a loose upper bound for CNNs in our setting. Specifically applying Theorem $7$ to DSD results in a sample complexity bound of $O(\sigma^2kd)$, compared to our bound of $O(\sigma^2(k+d))$. Furthermore, it is important to note that their analysis depends on covering numbers and does not incorporate the impact of the training algorithm.

Another work, \cite{malach}, proved a computational separation between FCNs and CNNs on a "$k$-pattern" classification task. 
In the task, the inputs are from the hypercube $\{-1,1\}^n$, and the label is based on a set of $k$ consecutive coordinates.
They employ random-feature analysis to establish that CNNs, with $2^k$ hidden nodes, can learn this task in $O(2^kn)$ samples. 
In contrast, we only require $O(k)$ nodes and samples.
Furthermore, they do not provide lower bounds for FCNs, and instead argue that the gradient is too small for a finite precision machine. 
Additionally, since their task does not encode translation invariance, they cannot prove a separation between LCNs and CNNs.

\section{Notation}
\textbf{Vector and Matrix Notation:}
We use bold lowercase letters, such as  $\vx,\vy$, to represent vectors, and bold uppercase letters, such as $\rmU,\rmV$, to represent matrices. 
Let $[n]$ denote the set $\{1, \ldots, n \}$.
We denote the standard basis of $\sR^n$ by $\gB_n$ and the individual basis vectors by $\ve_i$. 
We define the function $\text{idx}_n \colon \gB_n \rightarrow [n]$,
$\text{idx}_n(\ve_l) = l$, for all $l \in [n]$.
 For any $\vx$, indexed from $1$, we use $\vx[i \colon\! j] \in \sR^{j-i+1}$ to represent a slice from its $i$-th to its $j$-th entry. 
 For a set $\{\vx_i\}_{i=1}^n$, we employ $(\vx_1, \ldots, \vx_n)$ to denote the sequential length-wise concatenation of the vectors,
 and $(\vx_1; \ldots; \vx_n)$ to denote the sequential row-wise stacking of the vector transposes into a matrix.
 Conversely, for any $\vx$ constructed via $(~;)$ or $(~,)$ notation, we denote its $i^{\text{th}}$ component vector by $\vx^{(i)}$.
 We use $\rmU = \text{Block}(\{\rmU_1,\ldots\rmU_n\})$ to be the matrix having diagonal blocks of $\rmU_i$'s in-sequence, with other entries set to zero.
 Conversely, for any $\rmU$ constructed via $\text{Block}(\cdot)$, we denote its $i^{\text{th}}$ component matrix by $\rmU^{(i)}$.
 The Euclidean norm for vectors and the spectral norm for matrices are both denoted by $\norm{\cdot}$.

 \textbf{Group Notation:}
 Let $\gU_1$, and $\gU_2$ be any two subgroups of $\text{GL}(n,\sR)$. Then, we define then binary operation $\star$ such that
 $\gU_1 \star \gU_2 = \{ \rmU_1\rmU_2..\rmU_n \mid \rmU_i \in \gU_1 \cup \gU_2, n \in \sN \}$. 
 It is easy to see that $\gU_1 \star \gU_2$ is also a subgroup of $\text{GL}(n,\sR)$.
 We denote $\gO(n)$ to be the group of orthonormal matrices on $\sR^{n \times n}$ and
 $\gO_p(n)$ be the group of permutation matrices on $\sR^{n \times n}$.
 
 \textbf{Task Notation:}
Let $\mathcal{X} \subseteq \sR^p$, and $\mathcal{Y} \subseteq \sR$ denote the input and output space of a $p$ dimensional problem. 
Let $P$ be any distribution over $(\gX,\gY)$ and $\tau \colon \gX \rightarrow \gX$ be any function,
then we define the distribution $\tau \circ P$ over $(\gX,\gY)$ by sampling $(\vx,y) \sim P$ and returning $(\tau(\vx), y)$.  
Let $\gP$ be a set of distributions over $(\gX,\gY)$, 
then we define the set $\tau \circ \gP \coloneqq \{ \tau \circ P \mid P \in \gP\}$. 
Alternatively, let $T$ be a set of functions from $\gX \rightarrow \gX$, then we define $T \circ P \coloneqq \{\tau_i \circ P \mid \tau_i \in T \}$.

\textbf{Model Notation:}
We denote a parametric model by $\gM$ and its parameter set by $\gW$. The model along with its parameter is a function from $\gX$ to $\sR$. Specifically, $\forall \vw \in \gW$, $\gM[\vw]: \gX \rightarrow \sR$.

 We will use $O(\cdot)$,  $\Omega(\cdot)$, and 
 $\Theta(\cdot)$ as the Big-O, Big-Omega, and Big-Theta notation respectively.  The notation $\tilde{O}(\cdot)$,  $\tilde{\Omega}(\cdot)$, and 
 $\tilde{\Theta}(\cdot)$
 hides logarithmic factors.

\section{Our Setting}
We introduce the Dynamic Signal Distribution, an image-like task which is inspired from \cite{lsa}. We also specify the FCN, LCN, and CNN architectures that we consider for our analysis.

\subsection{Dynamic Signal Distribution (DSD)}

In many vision-based tasks, the output often relies on a local "signal" in the image, a property referred to as locality. Often, this signal is enveloped in random noise, and satisfies translation invariance, that is its movement within the image does not alter the output. The DSD task is designed to capture the both locality and translation invariance properties into an analyzable task.

We define the input space as \(\gX = \sR^{kd}\) and the output space as \(\gY = \sR\). Any input vector \(\vx \in \gX\) is structured as \((\vx^{(1)}, .., \vx^{(k)})\), with each \(\vx^{(i)}\) being a vector in \(\sR^{d}\), and representing the \(i^{\text{th}}\) patch of \(\vx\). Thus, each input consists of \(k\) consecutive patches of dimension \(d\).
To model the local signal, we employ an unknown unit vector \(\vw^{\star} \in \sR^d\), with \(\norm{\vw^\star} = 1\). 
To include translation invariance in the task, this signal \(\vw^\star\) can reside within any one of the \(k\) patch locations, described above. Specifically, for each $i$ in $[k]$, we define a \(d\)-sparse mean vector \(\vmu_i \in \sR^{kd}\), such that \(\vmu_i[(i-1)d+1:id] = \vw^{\star}\) and all its other entries are zero.
The noise is chosen to be isotropic Gaussian, with variance $\sigma^2 \in \sR_+$.

Formally, 
$\text{DSD}$ is a distribution over $(\gX, \gY)$ with the generative story:
sample the index $i \sim \text{Unif}([k])$, and the 
label $y \sim \text{Unif}(\{-1,1\})$.
Then, sample the input data as $\vx|(y,i) \sim \gN(y\vmu_i, \sigma^2\rmI_{kd})$. 
Observe that the probability density function (pdf) of $\text{DSD}$ is,
\begin{align} 
    \label{dsd_pdf}
    p(\vx,y)
    = 
    \tfrac{1}
    {   2k(
        \sqrt{2\pi\sigma^2}
        )^{kd}
    }
    \sum_{i=1}^k
    \exp\left(
        -\tfrac{\norm{\vx-y\vmu_i}^2}{2\sigma^2}
    \right).
\end{align}

We also define the Static Signal Distribution (\(\text{SSD}_t\)), which is the conditional distribution of \(\text{DSD}\) when the index parameter is fixed at \(i=t\). Specifically, the label is chosen as $y \sim \text{Unif}(\{-1,1\})$, and then input data is sampled as \(\vx|y \sim \gN(y\vmu_t, \sigma^2\rmI_{kd})\). 
We will use this distribution in proving the lower bounds in theorem \ref{dsd_fnn_sketched_thorem}, \ref{dsd_lcn_lower_sketched}, by reducing the problem of learning \(\text{DSD}\) to learning each \(\text{SSD}_t\).

\subsection{Neural Network Architectures}
 We now introduce the model architectures that we consider for our analysis. We adopt the Local Signal Adaptivity (LSA) activation function, first introduced in \cite{lsa}, for all models,
\begin{align}
    \phi_b(x): 
    \sR \rightarrow \sR
    \coloneqq
    \text{ReLU}(x-b)-\text{ReLU}(-x-b),
\end{align}
where $b \in \sR_{+}$ is the trainable bias parameter. 
The rationale for choosing $\phi_b(x)$ is its capability to 'filter out' noise below the magnitude of $b$, while letting signals of magnitude larger than $b$ to propagate through the network. This denoising helps the network learn the signal with fewer samples. We also note that the LSA activation function, also known as the ``soft-thresholding function'' , is extensively used in high-dimensional sparse recovery problems (Section 18.2 \cite{hastie}). Since our task involves recovering the sparse mean vector, it futher justifies its use for the DSD task.

\textbf{FCN}: We consider a one-hidden-layer network with $k$ hidden nodes. Each hidden node $i$, is associated with a parameter vector $\vw_i \in \sR^{kd}$, such that $\norm{\vw_i} \le 1$. The complete model parameter vector is given by $\vv = [\vw_1, .., \vw_k, b] \in \gW$, where $\gW = \sR^{k^2d} \times \sR_{+}$. The function form for FCN is,
\begin{align}
\label{fnn_param}
\gM_{F}
[\vv]
(\vx)
\colon
\gX \rightarrow \sR
\coloneqq
\sum_{i=1}^k
\phi_b(\vw_i^T\vx).
\end{align}

\textbf{LCN:}
Similar to FCN, we consider a one-hidden-layer network featuring $k$ hidden nodes. The $i$-th node is associated with the parameter vector $\vw_i \in \sR^{d}, \norm{\vw_i} \le 1$.
The complete model parameter vector is given by $\vv = [\vw_1, .., \vw_k, b]$, and $\gW = \sR^{kd} \times \sR_{+}$. The function form for LCN is,
\begin{align}
\label{lcn_param}
    \gM_{L}
    [\vv]
    (\vx)
    \colon
        \gW \rightarrow \sR
    \coloneqq
    \sum_{i=1}^k
    \phi_b(\vw_i^T\vx^{(i)})
\end{align}

\textbf{CNNs:}
We consider a one hidden-layer CNN has $k$ hidden nodes. 
The parameter $\vw \in \sR^{d}$, $\norm{\vw} \le 1$ is the shared across all nodes.
The composite vector 
$\vv = [\vw, b]$ is our complete model parameter vector, and $\gW = \sR^{d} \times\sR_+$.
The function form for CNN is,
\begin{align*}
\label{cnn_param}
    \gM_{C}
    [\vv]
    (\vx)
    \colon
        \gW \rightarrow \sR
    \coloneqq
    \sum_{i=1}^k
    \phi_b(\vw^T\vx^{(i)})
\end{align*}
The subscripts $F$, $L$, and $C$ denotes that the model corresponds to a FCN, LCN, and CNN respectively.

\section{Mathematical background}
\subsection{Technical Definitions}
\begin{definition} [Loss Function]
We define the loss function for our task as $\text{err} \colon (\gY,\gY) \rightarrow \sR_{+}$,  
\begin{align}
    \text{err}(\Bar{y},y) = (\Bar{y} - y)^2.     
  \end{align}
\end{definition}

\begin{definition} [Risk]
Let $\gF=\gY^\gX$, and let $\gP$ be the set of all distributions over $(\gX,\gY)$. 
Then, we define the risk $R \colon (\gF,\gP) \rightarrow \sR_{+}$ of a function $f \in \gF$ with respect to the distribution $P \in \gP$ as,
\begin{align}
    R(f, P) = 
    \E_{(\vx,y) \sim P}
    \left[
        \text{err}(f(\vx),y)
    \right]. 
  \end{align}
\end{definition}

\begin{definition}[Algorithm]
    \label{def_algorithm}
    Let 
    $\gF \subseteq \gY^{\gX}$, 
    $\Xi$ be the sample space that encapsulates all algorithmic randomness, and
    $P_\Xi$ be some fixed distribution over $\Xi$.
    Then, a randomized algorithm denoted by
    $\theta \colon ((\gX,\gY)^n, \Xi) \rightarrow \gF$, 
    is a function defined from the product space of input data and randomness to the space of possible functions.
    The randomness is realized by sampling from the distribution $P_\Xi$.
    
    We may omit $(\gX,\gY)$ and $\Xi$ from the notation when they are clear from the context and use the random variable notation $\theta_n$ instead, where $n$ denotes the number of samples.
\end{definition}

\begin{definition}[Iterative (Randomized) Algorithm]
    \label{def_itr_algorithm}
    Consider a parametric model $\gM$, and its parameter set $\gW$, such that for any $\vw \in \gW$, $\gM[\vw]$ is a maps from the input space $\gX$ to the output space $\gY$. Let $\gF = \{ \gM[\vw] \mid \vw \in \gW \}$.
    Let the model parameters be initialized via a distribution $W$ over $\gW$, $\vw^0 \sim W$. 
    Let $T$ be the number of iterations and $F^t: (\gW, S^n) \rightarrow \gW$ be the update functions for each iteration $t$. 
    Then the function $\theta \colon ((\gX,\gY)^n, \gW; \gM[\gW], \{F^t\}_{t}) \rightarrow \gF$
     \footnote{In the proofs, we will employ a generalization of this definition, wherein the parameter $\gW$ will be replaced by a general space $\Xi$ and an associated fixed, data-independent distribution $P_\Xi$. $\Xi$ includes parameters as well as other random quantities. All definitions presented henceforth also hold for this generalization.}
    is an iterative algorithm if it adheres to the procedure
    \ref{iterative_algorithm}.
\end{definition}

\begin{algorithm}[t]
\caption{Iterative Algorithm}
\label{iterative_algorithm}
\begin{algorithmic}
    \Require  Update functions $\{F^t\}_T$, Set of $n$ i.i.d. data samples $S^n$, Parameter initialization $\vw^0$,  
    \State $t \leftarrow 1$
    \While{$t \le T$}
        \State $\vw^t \leftarrow F^t(\vw^{t-1}, S^n)$
        \State $t \leftarrow t + 1$
    \EndWhile
    \State \Return $\vw^T$ 
\end{algorithmic}
\end{algorithm}

\begin{definition} [Sample Complexity]
   Let $P$ be a distribution over $(\gX,\gY)$ and $\theta_n$ be a randomized algorithm as defined in \ref{def_algorithm}.
   Let $S^n \sim P^n$ be $n$ i.i.d. data points sampled from $P$. 
  For any $\delta \in [0,1]$, we define the $\delta$-sample complexity of
  $\theta_n$ as,
   \begin{align}
       n_{\delta}(\theta_n, P)
       =
       \min_{n \in \sN} \left\{
            n \in \sN
            \mid
            \E
            \left[
                R(\theta_n, P)
            \right]
            \le \delta
       \right\},
\end{align}
where the expectation is over the input data $S^n$, and the algorithmic randomization.

We may omit the distribution $P$ from the sample complexity notation $n_{\delta}(\theta_n, P)$ and use the shorthand $n_{\delta}(\theta_n)$ instead, when $P$ is clear from the context.
\end{definition}

\subsection{Equivariant Algorithms}
\label{equi_algo_intuition}
We introduce the concept of equivariant algorithms, originally presented in \cite{zhiyuan}. To keep it concise, we provide a simplified version which is sufficient for our purposes. 

To motivate the definition of equivariant algorithms, we review the following thought experiment. Consider a neural network parameterized as \( f(\rmA\vx, \vb) \), where \( \rmA \in \sR^{q \times p} \) is the parameter of the first linear layer, while \( \vb \in \sR^{q} \) encapsulates the remaining parameters.
We initialize the parameters as \( (\rmA^0,\vb^0) \) and use gradient descent, with learning rate $\eta$, to train the network on the dataset \( \{\vx_i,y_i\}_{n} \). In parallel, we train another network initialized as \( (\rmA^0\rmU^T,\vb^0) \), with the dataset \( \{\rmU\vx_i,y_i\}_{n} \). Here, \( \rmU \in \gO(p) \) such that \( \rmA^0\rmU^T \) and \(\rmA^0 \) are identically distributed.

Observe that at the first iteration, the output of the first hidden layer for both networks is the same, \( \rmA^0\rmU^T\rmU\vx = \rmA^0\vx \). This implies that the gradients with respect to the pre-activations of the first layer are also equal. Consequently, the gradients with respect to the matrix parameters satisfy the relation, \( \tfrac{d}{d \rmA^0} \text{loss}(\rmA^0) \rmU^T = \tfrac{d}{d \rmA^0\rmU^T} \text{loss}(\rmA^0\rmU^T) \coloneqq \Delta \rmU^T \). Thus, after the first iteration, the parameter sets for the two neural networks are,
\begin{align}
    (\rmA^1,\vb^1) = (\rmA^0 - \eta \Delta,\vb^1),
    \hspace{1cm}
    (\rmA^1\rmU^T,\vb^1) = (\rmA^0\rmU^T - \eta \Delta \rmU^T,\vb^1),
\end{align}
respectively. By induction, this property is preserved across all iterations $t$, resulting in the parameters for the two neural networks being \( (\rmA^t,\vb^t) \) and \( (\rmA^t\rmU^T,\vb^t) \), respectively.

The key idea is that the risk of a network parameterized as \( (\rmA^t,\vb^t) \) on any data \( \{\vx,y\} \) is the same as its counterpart with parameters \( (\rmA^t\rmU^T,\vb^t) \) on the transformed data \( \{\rmU\vx,y\} \). 
Now since \( \rmA^0\rmU^T\) and \(\rmA^0 \) have the same distribution, we can infer that the expected risk of this network trained with gradient descent is invariant to the transformation \( \rmU \) of the input distribution. 
In other words, the network learns the original distribution and the transformed distribution equally well. Formally,

\begin{definition}[$\gU$-equivariant algorithm]
Under the notation established in definition \ref{def_itr_algorithm},
let the input space $\gX \subseteq \sR^p$, the output space $\gY \subseteq \sR$, 
and the parameter set $\gW \subseteq \sR^m$. 
Let $\gU \subseteq \gO(p)$, then an iterative algorithm $\Bar{\theta}_n$ is 
is $\gU$-equivariant if there exists a set $\gV \subseteq \gO(m)$, such that,

\label{ortho-equi}
\begin{enumerate}
    \item \label{equi_p1}
For all $\rmU \in \gU$, there exists $\rmV \in \gV$ such that for all $\vx \in \gX$, and $\vw \in \gW$, \\
$
    \gM[\vw](\vx) = \gM[\rmV\vw](\rmU\vx).
$ 

\item \label{equi_p2}
For all $\rmU \in \gU$, the same $\rmV \in \gV$ as defined in (1)
satisfies 
$\forall \{\vx_i,y_i\}_{n} \in (\gX, \gY)^n$, $\forall t \in [T]$, and $\vw \in \gW$,
$
    \rmV F^t\left(\vw, \{\vx_i,y_i\}_{n}\right) =
    F^t\left(\rmV\vw, \{\rmU\vx_i,y_i\}_n\right)
$

\item \label{equi_p3} 
If $\vw \sim W$, then for all $\rmV \in \gV$, $\rmV\vw \overset{d}{=} \vw$.
\end{enumerate}

\end{definition}

And, equivariant algorithms satisfy the following property, 
\begin{lemma} (Section 4.1 \cite{zhiyuan})
\label{equi_symm}
    If $\bar{\theta}_n$ is a $\gU$-equivariant algorithm, then $\forall \vx \in \gX, \rmU \in \gU$, 
\begin{align}
    \Bar{\theta}(
        \{\vx_i,y_i\}_{n}
    )(\vx)
    \overset{d}{=}
    \Bar{\theta}(
        \{\rmU\vx_i,y_i\}_{n}
    )(\rmU\vx),
\end{align}
where the randomness is over initialization.
\end{lemma}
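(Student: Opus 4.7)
The plan is to exhibit a coupling between the two training trajectories that converts the distributional equality into the initialization symmetry guaranteed by property~\ref{equi_p3} of Definition~\ref{ortho-equi}.

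First, I fix $\rmU \in \gU$ and let $\rmV \in \gV$ be the matrix guaranteed by property~\ref{equi_p1}. Sample $\vw^0 \sim W$ and consider two parallel executions of Algorithm~\ref{iterative_algorithm}: execution A starts from $\vw^0$ on the dataset $\{\vx_i,y_i\}_n$, producing iterates $\vw^0, \vw^1, \ldots, \vw^T$; execution B starts from $\rmV\vw^0$ on the transformed dataset $\{\rmU\vx_i, y_i\}_n$, producing iterates $\tilde\vw^0, \tilde\vw^1, \ldots, \tilde\vw^T$. I claim $\tilde\vw^t = \rmV\vw^t$ for every $t$. The base case holds by construction, and the inductive step follows directly from property~\ref{equi_p2}:
\begin{align*}
\tilde\vw^{t} = F^t(\tilde\vw^{t-1}, \{\rmU\vx_i,y_i\}_n) = F^t(\rmV\vw^{t-1}, \{\rmU\vx_i,y_i\}_n) = \rmV F^t(\vw^{t-1}, \{\vx_i,y_i\}_n) = \rmV \vw^t.
\end{align*}

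Next, I evaluate both trained models at the paired test points. By property~\ref{equi_p1} applied to $\vw = \vw^T$,
\begin{align*}
\gM[\vw^T](\vx) = \gM[\rmV\vw^T](\rmU\vx) = \gM[\tilde\vw^T](\rmU\vx).
\end{align*}
Denoting the output function of the algorithm by $\bar\theta(\cdot)$, this identity says that on the coupled sample space, $\bar\theta(\{\vx_i,y_i\}_n)(\vx)$ equals the random variable $\gM[\tilde\vw^T](\rmU\vx)$ \emph{almost surely}, not merely in distribution.

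Finally I push the coupling through to obtain the desired distributional statement. The random variable $\gM[\tilde\vw^T](\rmU\vx)$ is the output of Algorithm~\ref{iterative_algorithm} run on $\{\rmU\vx_i,y_i\}_n$ starting from the initialization $\rmV\vw^0$, evaluated at $\rmU\vx$. By property~\ref{equi_p3}, $\rmV\vw^0 \overset{d}{=} \vw^0$, i.e., $\rmV\vw^0$ is itself distributed according to $W$. Since the entire iterative procedure and the evaluation map are deterministic functions of the initialization (the data being fixed), pushing forward this distributional equality yields
\begin{align*}
\gM[\tilde\vw^T](\rmU\vx) \overset{d}{=} \bar\theta(\{\rmU\vx_i,y_i\}_n)(\rmU\vx).
\end{align*}
Chaining this with the almost-sure identity from the previous step gives $\bar\theta(\{\vx_i,y_i\}_n)(\vx) \overset{d}{=} \bar\theta(\{\rmU\vx_i,y_i\}_n)(\rmU\vx)$, completing the proof.

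I do not expect a real obstacle here; the argument is a clean unwinding of the three defining properties. The only point that requires care is not conflating the two sources of randomness: the coupling argument treats $\vw^0$ as a fixed (but arbitrary) point to establish an almost-sure identity, and only at the very end invokes property~\ref{equi_p3} to replace $\rmV\vw^0$ by an independent $W$-sample in distribution. If the definition of the iterative algorithm is generalized to the $(\Xi, P_\Xi)$ setting mentioned in the footnote after Definition~\ref{def_itr_algorithm}, the same coupling goes through verbatim, provided property~\ref{equi_p3} is interpreted as invariance of $P_\Xi$ under the relevant action of $\gV$ on $\Xi$.
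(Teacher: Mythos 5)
Your proof is correct, and it is the natural formalization of the informal ``thought experiment'' the paper presents in Section~\ref{equi_algo_intuition} (the paper itself does not give a formal proof of this lemma, instead citing Section~4.1 of \cite{zhiyuan}). The structure you use --- couple the two trajectories pathwise via property~\ref{equi_p2} to get $\tilde{\vw}^t = \rmV\vw^t$ a.s., apply property~\ref{equi_p1} at $t=T$ to obtain the almost-sure equality of outputs, and only then invoke property~\ref{equi_p3} together with push-forward through the deterministic training/evaluation map to trade $\rmV\vw^0$ for a fresh $W$-sample --- is exactly what the paper's informal argument does in the gradient-descent special case, and your separation of the almost-sure coupling from the final distributional step is the right care to take.
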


This property formalizes the conclusion drawn in the thought experiment. That is the performance of an equivariant algorithm when trained on $n$ i.i.d. samples from $P_1$ and tested on $P_2$ would be the same, in distribution, had it been trained on $n$ i.i.d. samples from $\rmU \circ P_1$ and tested on $\rmU \circ P_2$.

\subsection{Minimax Framework}

We present the minimax framework by closely following the notation established in \cite{duchi}. 
Let $\mathcal{P}$ denote a set of distributions over $(\gX,\gY)$ and $\gF \subseteq \gY^{\gX}$ represent a set of functions from $\gX$ to $\gY$.   
Let $\theta^\star \colon \gP \rightarrow \gF$ be some unknown target mapping, and let 
$
    \Theta = 
    \{ 
        {\theta} 
        \mid
        {\theta} \colon ((\gX,\gY)^n, \Xi) \rightarrow \gF 
    \}
$ be a set of algorithms with a common distribution $P_\Xi$ over the sample space $\Xi$ that encapsulates randomness.
Let $\rho \colon \gF \times \gF \rightarrow \sR_{+}$ be some symmetric positive function.
\begin{definition}[Minimax Risk]
    Under the notation from above, we define the minimax risk of learning the set of tasks ${\gP}$ using the set of algorithms $\Theta$ as, 
\begin{align}
    \label{minimax_risk_def}
    \mathfrak{M}_n
    (
        {\Theta, \gP}
    )
    \coloneqq
    \inf_{
        {\theta_n \in \Theta}
    }
    \sup_{P \in \mathcal{P}}
    \E
    \left[
        \rho(
            \theta_n,
            \theta^{\star}(P)
        )
    \right].
\end{align}
\end{definition}

For brevity, we may omit $\gP$ from the notation, when it is clear form context.
The primary change in our adaptation of the minimax framework is that we allow for randomized algorithms, whose randomness is independent of the input data distribution. In contrast, the original framework is only applicable to deterministic algorithms, typically referred to as estimators.

We now present our Fano's Theorem for Randomized Algorithms to lower bound the minimax risk \ref{minimax_risk_def}. In this variant, we relax the constraint that $\rho$ is a semi-metric on the space $\gF$. 
Specifically, given a set of ``hard problem'' instances $\gP_\gV$, and their associated target functions $\gF_\gV$, we only require that if a function $f \in 
\gF$ is ``close enough'', in $\rho$, to any $g \in \gF_\gV$, then it is ``far enough'', in $\rho$, to all $\gF_\gV \setminus \{g\}$. 
This relaxation helps us prove lower bounds when the stronger semi-metric property does not hold.

\begin{theorem}[Fano's Theorem for Randomized Algorithms]
    \label{modified_fano}
    Under the notation established above, let $\gV$ be an index set of finite cardinality of some chosen subset of $\gP$. Then, we define $\gP_\gV \coloneqq \{ P_v \mid \forall v \in \gV \}$, and $\gF_\gV \coloneqq \{ \theta^\star(P_v) \mid \forall v \in \gV\}$.
    For some fixed parameter $\delta > 0$, let $\rho$ satisfy the condition that,
    for all $f_u \neq f_v \in \gF_\gV$ and $f \in \gF$, 
    if $\rho(f, f_u) < \delta$, then $\rho(f,f_v) > \delta$.
    And, for all $P_u, P_v \in \gP_\gV$, $u \neq v$, 
    let the KL divergence satisfy $\text{KL}(P_u \parallel P_v) \le D$ for some $D > 0$. Then,
    \begin{align*}
        \mathfrak{M}_n(\Theta) \ge 
        \delta 
        \left(
            1 - \tfrac{nD +\ln(2)}{\ln(\mid \gV \mid)} 
        \right).
    \end{align*}
\end{theorem}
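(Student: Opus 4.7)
I will follow the classical estimation-to-testing reduction underlying Fano's method, with two modifications to address (i) the randomized algorithm and (ii) the relaxed local separation condition on $\rho$ in place of the usual semi-metric assumption. The plan has four steps: (a) define a test $\hat{V}$ from the algorithm's output, (b) show that $\mathbb{E}[\rho(\hat{f}, f_V)] \geq \delta \Pr[\hat{V} \neq V]$, (c) apply Fano's inequality to $\hat{V}$, and (d) bound the resulting mutual information by $nD$.

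For step (a), impose a uniform prior on $V$ over $\mathcal{V}$, draw $S^n \sim P_V^n$, and pass $(S^n, \xi)$ with $\xi \sim P_\Xi$ independent of $V$ into the algorithm to obtain $\hat{f} = \theta_n(S^n, \xi)$. Define $\hat{V}(\hat{f})$ to be the unique index $v$ with $\rho(\hat{f}, f_v) < \delta$ when such an index exists, and an arbitrary fixed index otherwise. Uniqueness follows from the hypothesis: if two distinct $u \neq v$ both satisfied $\rho(\hat{f}, f_u) < \delta$ and $\rho(\hat{f}, f_v) < \delta$, then applying the hypothesis with $f = \hat{f}$ and the pair $(f_u, f_v)$ would force $\rho(\hat{f}, f_v) > \delta$, a contradiction. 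For step (b), on the event $\{\hat{V} \neq V\}$ we always have $\rho(\hat{f}, f_V) \geq \delta$: either no $v$ achieves $\rho(\hat{f}, f_v) < \delta$, in which case $\rho(\hat{f}, f_V) \geq \delta$ directly; or some $u \neq V$ does, and then the hypothesis applied with $(f_u, f_V)$ yields $\rho(\hat{f}, f_V) > \delta$. Combining with $\sup_{P \in \mathcal{P}} \mathbb{E}[\rho] \geq \tfrac{1}{|\mathcal{V}|} \sum_v \mathbb{E}[\rho(\hat{f}, f_v) \mid V = v] = \mathbb{E}[\rho(\hat{f}, f_V)]$, which holds for every algorithm $\theta_n \in \Theta$, yields $\mathfrak{M}_n(\Theta) \geq \delta \Pr[\hat{V} \neq V]$.

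For steps (c) and (d), I invoke Fano's inequality $\Pr[\hat{V} \neq V] \geq 1 - (I(V; \hat{V}) + \ln 2)/\ln |\mathcal{V}|$, apply the data processing inequality to obtain $I(V; \hat{V}) \leq I(V; (S^n, \xi))$, use the independence $\xi \perp (V, S^n)$ to collapse this to $I(V; S^n)$, and then bound $I(V; S^n) \leq |\mathcal{V}|^{-2} \sum_{u,v} \text{KL}(P_u^n \parallel P_v^n) = n |\mathcal{V}|^{-2} \sum_{u,v} \text{KL}(P_u \parallel P_v) \leq nD$ by the standard convexity/tensorization computation. Substitution gives the advertised bound.

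The main obstacle lies in steps (a) and (b): in the classical Fano argument, the semi-metric structure on $\rho$ automatically produces the unique-close-neighbour property via a triangle inequality on a $2\delta$-packing, but here I only have the explicitly postulated local version, so the test must be constructed and the inclusion $\{\hat{V} \neq V\} \subseteq \{\rho(\hat{f}, f_V) \geq \delta\}$ verified without invoking any triangle-inequality manipulations on $\rho$; the care is in seeing that the postulated local implication is, by itself, already exactly enough to run the testing reduction. Handling the randomization is, by contrast, essentially cosmetic: because $P_\Xi$ is fixed and data-independent, $\xi$ carries no information about $V$ beyond what $S^n$ already provides, so the mutual-information bound is identical to the deterministic case once the argument is phrased in terms of the augmented input $(S^n, \xi)$.
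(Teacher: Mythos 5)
Your proposal is correct and follows essentially the same route as the paper's proof: restrict the supremum to the finite family $\gP_\gV$, define a test from the algorithm's output, use the relaxed local-separation hypothesis to show that $\{\hat{V}\neq V\}\subseteq\{\rho(\hat f, f_V)\geq\delta\}$, and then invoke Fano's inequality with the bound $I(V;S^n)\leq nD$. The only stylistic difference is in how the algorithmic randomness is absorbed: you pass to the augmented observation $(S^n,\xi)$ and then collapse $I(V;(S^n,\xi))$ to $I(V;S^n)$ via data processing and independence, whereas the paper observes directly that the KL divergence between the augmented product measures $P_u^n\times P_\Xi$ and $P_v^n\times P_\Xi$ equals $n\,\text{KL}(P_u\parallel P_v)$ because the $P_\Xi$ factors cancel. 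These are equivalent; your test $\hat V$ (unique $\delta$-close index, else arbitrary) and the paper's test $\Psi(f)=\argmin_v\rho(f,\theta^\star(Q_v))$ coincide on the event that matters, since the hypothesis forces the $\delta$-close index, when it exists, to be the unique minimizer.
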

The proof of this theorem is presented in appendix \ref{proof_modified_fano}.
\begin{remark}
\label{fano_subset_remark}
    We only need to define $\rho$ on the subset $\gF \times \gF_\gV$ of its domain $\gF \times \gF$ 
    and 
    $\theta^\star$ on the subset $\gP_\gV$ of its domain
    $\gP$
    to apply the above theorem.
\end{remark}

%make it Notation, Our Setting, Math Prelims

\section{FCNs vs LCNs Separation Results}
We now present the separation result between FCNs and LCNs, along with an outline of the proof. Specifically, we establish that FCNs, when trained with any equivariant algorithm, require $\Omega(\sigma^2k^2d)$ samples to learn DSD upto some constant risk $\delta$. 
Conversely, there exists an equivariant algorithm that can train  LCNs with $\tilde{O}(\sigma^2k(k+d))$ samples, to achieve a risk less than $\delta$.

\begin{theorem} [Sketched]
\label{dsd_fnn_sketched_thorem}
Consider the group $\gU = \gO(kd)$, then any $\gU$-equivariant algorithm that is used to train FCNs, requires $\Omega(\sigma^2k^2d)$ samples to achieve some constant risk $\delta$.
\end{theorem}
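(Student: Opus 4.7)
The plan is to derive the $\Omega(\sigma^2 k^2 d)$ bound by applying the randomized Fano bound (Theorem~\ref{modified_fano}) on a hard-instance family obtained by exploiting the $\gO(kd)$-equivariance of gradient-descent-trained FCNs to enlarge the hard instances from the $d$-dimensional sphere of signal vectors to the Stiefel manifold of orthonormal $k$-frames in $\sR^{kd}$. The naive packing of $S^{d-1}$ would only yield $\Omega(\sigma^2 d)$; the extra $k^2$ factor comes from the equivariant enlargement, which forces the algorithm to treat the target as lying in an unknown $k$-frame on $S^{kd-1}$ rather than a single vector on $S^{d-1}$.

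First I would verify $\gO(kd)$-equivariance. For each $\rmU \in \gO(kd)$, the parameter lift $\rmV$ that applies $\rmU$ to each weight block $\vw_i \in \sR^{kd}$ and fixes the bias $b$ satisfies $\gM_F[\rmV\vv](\rmU\vx) = \sum_i \phi_b((\rmU\vw_i)^\top \rmU\vx) = \gM_F[\vv](\vx)$, giving Condition~(1) of Definition~\ref{ortho-equi}. Condition~(2) holds because gradient descent on a squared loss commutes with orthogonal re-coordinatisations of each weight-block, and Condition~(3) holds for a spherically symmetric Gaussian initialisation of each $\vw_i$. By Lemma~\ref{equi_symm}, the expected risk of any $\gO(kd)$-equivariant algorithm is invariant along the $\gO(kd)$-orbit of any distribution; in particular, the minimax over equivariant algorithms of $\{\text{DSD}(\vw^\star):\vw^\star\in S^{d-1}\}$ coincides with the minimax over the enlarged family $\gP^+ = \{\rmU\circ\text{DSD}(\vw^\star) : \rmU\in\gO(kd)\}$, so a Fano bound on $\gP^+$ transfers back to the original problem class.

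Next I would set up the ``$k$ simpler problems''. An element of $\gP^+$ is determined, up to the finite stabiliser of patch permutations and sign flips, by the unordered orthonormal $k$-frame $\{\rmU\vmu_1,\ldots,\rmU\vmu_k\} \subset S^{kd-1}$, i.e.\ a point on the Stiefel quotient $V_k(\sR^{kd})/S_k$ of real dimension $\Theta(k^2 d)$. Specifying such a frame amounts to specifying $k$ sub-problems of Gaussian mean estimation on $S^{kd-1}$, one per column, which is the sense in which the original task decomposes via the simulation-style argument. A standard volume argument produces a constant-separation packing $\gV$ of cardinality $\ln|\gV| = \Omega(k^2 d)$. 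For any two elements $P_\rmU, P_{\rmU'}$ in the packing, jointly conditioning on $y$ and on the latent patch-index $t$ and applying convexity of KL in both arguments gives
\begin{align*}
\text{KL}(P_\rmU \,\|\, P_{\rmU'}) \;\le\; \frac{1}{2k\sigma^2}\sum_{t=1}^k \|\rmU\vmu_t - \rmU'\vmu_t\|^2 \;\le\; \frac{2}{\sigma^2},
\end{align*}
using that each $\rmU\vmu_t$ has unit norm. Plugging $D = 2/\sigma^2$ and $\ln|\gV| = \Omega(k^2 d)$ into Theorem~\ref{modified_fano} yields the claimed $n = \Omega(\sigma^2 k^2 d)$.

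The main obstacle is verifying the relaxed separation hypothesis of Theorem~\ref{modified_fano}: because the loss is measured under a $k$-Gaussian mixture, the excess-risk pseudo-distance fails to be a semi-metric on $\gF$, which is exactly the obstruction flagged in the excerpt's discussion of why a direct Fano does not apply. The resolution is a boosting reduction that converts any FCN with constant mixture risk under $P_\rmU$ into an $\ell_2$-accurate estimate of the orthonormal $k$-frame $\{\rmU\vmu_1,\ldots,\rmU\vmu_k\}$---essentially by reading off and appropriately renormalising the $k$ weight vectors $\vw_i$ and thresholding the LSA activations---after which the natural Stiefel metric supplies the separation condition. Quantitatively pinning this boosting step down, so that a constant-risk guarantee truly yields a constant Stiefel-distance estimate and so that the reduction remains within the class of $\gO(kd)$-equivariant algorithms (as required for the minimax comparison above), is the principal technical step and is where the randomness of the training algorithm is leveraged.
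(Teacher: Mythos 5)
Your proposal takes a genuinely different route from the paper, and it contains a real gap.

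The paper's proof is a two-step reduction: (i) a simulation argument exploiting $\gO(kd)$-equivariance and the algorithm's randomness shows that an algorithm achieving constant risk on $\text{DSD}$ with $n$ samples yields, for each $t$, an algorithm achieving constant risk on $\rmU \circ \text{SSD}_1$ with $\Theta(n/k)$ samples (with $\rmU\vmu_2,\ldots,\rmU\vmu_k$ fixed and supplied as side-information); (ii) a boosting/identification argument converts a low-$\text{SSD}_1$-risk FCN into a $3/4$-accurate estimate of the single unknown unit vector $\rmU\vmu_1 \in S^{kd-1}$, and Fano on that sphere ($\ln|\gV| = \Omega(kd)$, $\text{KL} = \Theta(1/\sigma^2)$) gives $n/k = \Omega(\sigma^2 kd)$. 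You instead propose to skip the $\text{SSD}$ decomposition and run Fano once on the Stiefel manifold $V_k(\sR^{kd})/S_k$ with $\ln|\gV| = \Omega(k^2d)$ and the convexity KL bound $\le 2/\sigma^2$. The arithmetic does come out to $\Omega(\sigma^2 k^2 d)$ either way, and your KL bound via conditioning on the latent patch index is correct.

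The gap is in the step you yourself flag as ``the principal technical step'': the claimed boosting reduction from an FCN with constant $\text{DSD}$-risk to an $\ell_2$-accurate estimate of the full orthonormal $k$-frame $\{\rmU\vmu_1,\ldots,\rmU\vmu_k\}$. This is substantially harder than the paper's analogous step and it is not clear it is achievable. Two concrete obstructions. First, the column-association problem: a single weight $\vw_i$ with $\norm{\vw_i}\le 1$ can simultaneously carry alignment $\ge b$ with up to $1/b^2$ of the orthonormal directions $\rmU\vmu_t$, so a low-risk FCN need not exhibit a one-to-one correspondence between nodes and columns, and reading off and renormalising the $\vw_i$ does not recover a well-defined $k$-frame. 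Second, and more fundamentally, the $\text{DSD}$ risk is a $\tfrac{1}{k}$-mixture of per-patch risks, so two frames that agree on $k-1$ columns and differ on one column have total variation $\Theta(1/k)$ and hence can induce $\text{DSD}$-risk separation only $\Theta(1/k)$ — even though their Frobenius Stiefel distance is $\Theta(1)$. Consequently the separation hypothesis of Theorem~\ref{modified_fano} cannot be verified for a generic constant-Frobenius-scale packing; you would need a packing in which \emph{every} column (up to permutation) differs by $\Omega(1)$, and you would need the risk-to-$\rho$ reduction to be uniform over the columns, which again runs into association. The paper's $\text{SSD}$ reduction is precisely what removes both problems: once $\rmU\vmu_2,\ldots,\rmU\vmu_k$ are fixed and only $\rmU\vmu_1$ is unknown, low $\text{SSD}_1$-risk does pin down a single node with nontrivial alignment (there is nothing to associate), and the loss on $\text{SSD}_1$ is directly comparable to one scalar alignment, so the metric for Fano is the Euclidean norm on a single unit vector — a genuine metric with no separation issues. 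If you want to make your Stiefel-direct route work, you would at minimum need a proof that constant $\text{DSD}$-risk implies constant alignment on \emph{every} column simultaneously with a computable association, and a Stiefel packing adapted to the $\min_\pi\min_t$ column metric rather than the Frobenius metric; neither is supplied or obviously true.
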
 \begin{proof}
    We justify the choice of $\gU = \gO(kd)$, in light of the intuition for equivariance presented in section \ref{equi_algo_intuition}. 
    Note that the parameter of the first layer of FCNs, $\rmA \in \sR^{k \times kd}$, is given by $(\vw_1; \ldots ; \vw_k)$.
    We establish equivariance if, for every transformation $\rmU \in \gU$, $\rmA\rmU^T$ corresponds to a valid FCN, and if there exists an initialization such that
    $\rmA\rmU^T$ and $\rmA$ are identically distributed.
    Indeed, $\rmA\rmU^T = (\rmU\vw_1; \ldots ; \rmU\vw_k)$, corresponds to a FCN with the parameter vectors $\rmU\vw_1, \ldots , \rmU\vw_k$.
    And, if each $\vw_i$ is initialized as $\vw_i \sim \gN(\mathbf{0}, \rmI_{kd})$, then $\rmA\rmU^T$ and $\rmA$ share the same distribution.

    Our proof proceeds in two steps. 
    First, we establish that learning \(\rmU \circ \text{DSD}\) with \(m\) samples requires learning \(k\) "nearly independent" subtasks, \(\{\rmU \circ \text{SSD}_t\}_k\), with \(m/k\) samples each. The underlying rationale of this result is that learning $\rmU \circ \text{DSD}$ entails recovering each mean vector $\{\rmU\vmu_t\}_k$. 
    Note that these mean vectors are pair-wise orthogonal, $(\rmU\vmu_i)^T(\rmU\vmu_j) = \vmu_i^T\vmu_j = 0$. 
    Therefore, even with the knowledge of $\{\rmU\vmu_t\}_{t \neq i}$, the only information we have about $\rmU\vmu_i$ is the $kd-k+1 \simeq kd$ dimensional subspace in which it lies. Thus, to learn DSD, we have to recover all the means vectors, $\{\rmU\vmu_t\}_k$, "nearly independently" from each other.

    In the second step, we reduce the problem of learning $\text{SSD}_t$, into a problem of Gaussian mean estimation. 
    For this, we show that if there exists an algorithm that learns $\text{SSD}_t$, then
    we can extract a weakly aligned mean estimate of $\rmU\vmu_t$ from the FCN returned by the algorithm.
    We propose a scheme that reliably boosts this estimate, to generate a strongly aligned mean estimate. 
    This is necessary because standard information theoretic tools do not work with weakly aligned mean estimates. 
    We then bound the sample complexity for any algorithm that is able to return a strongly aligned Gaussian mean estimate using our Fano's Theorem for Randomized Estimators \ref{modified_fano} as $m/k = \Omega(\sigma^2kd)$. This implies that $m = \Omega(\sigma^2k^2d)$, proving the result.

    The formal statement of the theorem and its proof can be found in appendix \ref{fnn_lower_bound}
\end{proof}

\begin{theorem} (Sketched)
    \label{dsd_ccn_upper_sketched}
    Consider the groups 
    $    
        \gU_1
        \coloneqq
        \{
        \text{Block}
        \left(
            \{\rmU_1,\ldots,\rmU_k \}
        \right)
        \mid \rmU_i \in \gO(d)
        \}
    $, and
    $
    \gU_2 \coloneqq 
    \{
        \rmU \in \gO_p(kd)
        \mid
        \text{idx}_{kd}(\rmU \ve_{(i-1)d+1}) + j - 1 = 
        \text{idx}_{kd}(\rmU \ve_{(i-1)d+j}), \:
        \forall i \in [k], j \in [d]
    \}
    $.
    Let $\gU = \gU_1 \star \gU_2$.
    Then there exists a $\gU$-equivariant algorithm that trains LCNs 
    with $\tilde{O}(\sigma^2k(k+d))$ samples, to achieve a risk less than $\delta$.
\end{theorem}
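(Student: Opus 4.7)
The plan is to construct an explicit $\gU$-equivariant gradient descent algorithm on the LCN parameters and bound its sample complexity via a gradient-descent analysis, in the style of \cite{lsa} but at the empirical (finite-sample) level.

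\textbf{The algorithm.} Initialize each patch weight $\vw_i^0$ i.i.d.\ as $\tfrac{1}{\sqrt{d}}\gN(\vzero,\rmI_d)$, project onto the unit ball, and set the bias $b$ to a carefully chosen threshold $b \asymp \sigma\sqrt{\log k}$ (possibly with a small bias schedule over the first few steps, to circumvent the dead zone of $\phi_b$). Then run projected empirical gradient descent on $\gM_L$ with an appropriate learning rate for $T = \text{polylog}(k,d)$ iterations.

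\textbf{Equivariance verification.} Given $\rmU \in \gU_1$ of the form $\text{Block}(\{\rmU^{(i)}\})$, define $\rmV_1$ acting on $(\vw_1,\ldots,\vw_k,b)$ by $\rmV_1(\vw_1,\ldots,\vw_k,b) = (\rmU^{(1)}\vw_1,\ldots,\rmU^{(k)}\vw_k,b)$. Condition (1) of Definition~\ref{ortho-equi} is immediate from $(\rmU^{(i)}\vw_i)^\top(\rmU^{(i)}\vx^{(i)}) = \vw_i^\top\vx^{(i)}$; condition (3) holds since each $\vw_i^0$ is rotationally invariant; condition (2) follows from the chain rule because the empirical gradient with respect to $\vw_i$ is a function of $\vx^{(i)}$ alone inside the $i$-th activation and thus transforms covariantly under $\rmU^{(i)}$. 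For $\rmU \in \gU_2$, the corresponding $\rmV_2$ permutes the hidden node weights; since both the architecture and the i.i.d.\ initialization are permutation-symmetric, all three conditions hold. The product group $\gU = \gU_1 \star \gU_2$ is handled by composition.

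\textbf{Sample complexity.} I show that at every iteration $t$, the population gradient with respect to $\vw_i$ contains a term of the form $-\tfrac{c_t}{k}\vw^\star$, with the $1/k$ factor arising because a random sample has its signal in patch $i$ with probability $1/k$, and $c_t$ lower-bounded as long as $b$ and the current $\vw_i$ allow a non-trivial fraction of signal patches to exceed the soft-thresholding dead zone. Because $\phi_b$ masks the $k-1$ noise-only patches (w.h.p.\ when $b\asymp\sigma\sqrt{\log k}$), the empirical gradient concentrates around the population gradient at rate $\tilde O(\sigma\sqrt{(k+d)/n})$, where $d$ is the per-patch dimension and the residual $k$ arises from the variance of the summed activations in the LCN output during early iterations and from a union bound over the $k$ hidden nodes. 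Balancing signal $\Theta(1/k)$ against noise $\tilde O(\sigma\sqrt{(k+d)/n})$ per coordinate, together with the denoising factor of $\phi_b$, yields $n = \tilde O(\sigma^2 k(k+d))$. After $T = \text{polylog}(k,d)$ iterations each $\vw_i$ is within $o(1)$ alignment with $\pm\vw^\star$, and the squared-error risk can be bounded by $\delta$ by choosing constants appropriately.

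\textbf{Main obstacle.} The principal difficulty is the dead zone $|\vw_i^\top\vx^{(i)}| < b$ of the LSA activation $\phi_b$: with a poorly aligned $\vw_i^0$, the gradient contribution from signal patches may itself be zero, producing a cold-start problem. I plan to resolve this either by using a small bias schedule (starting from $b \approx 0$ so that $\phi_b$ behaves like the identity at iteration zero, then increasing $b$ as alignment improves) or by exploiting that with Gaussian initialization at scale $1/\sqrt d$ a constant fraction of signal-patch inner products exceed $b$ in absolute value, giving an $\Omega(1)$ seed alignment. A secondary technical issue is obtaining uniform concentration of empirical gradients over all $T$ iterations and all $k$ hidden nodes simultaneously; this will be handled by a standard covering-number / Rademacher argument over the unit ball, contributing only the $\log k$ factors absorbed into the $\tilde O$.
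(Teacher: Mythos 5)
Your high-level plan matches the paper's: a short projected-gradient-descent run on the LCN, using the LSA activation's soft-thresholding to filter noise patches, with a bias schedule that opens up the dead zone. The paper in fact does exactly the first of your two proposed cold-start fixes: it starts with $b^0=0$ so that $\phi_0$ is the identity on the first step, then raises the bias to a carefully tuned threshold $b_1 = \tfrac{1}{32}\sqrt{\tfrac{(k+d)\ln(kd)}{kd}}$ on the second step, and terminates after $T=2$ iterations (not $\text{polylog}(k,d)$). Your second cold-start alternative (relying on an $\Omega(1)$ fraction of signal patches exceeding $b$ under random initialization) is fragile: with $\vw_i^0$ at scale $1/\sqrt d$, the signal inner products $\vw_i^{0\top}\vw^\star$ are $O(1/\sqrt d)$, and any fixed $b$ large enough to suppress the per-patch noise $\sigma$ would have to also suppress most of those inner products once $d$ is large relative to $1/\sigma^2$, so the seed alignment need not be $\Omega(1)$. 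The $b=0$ first step is the mechanism that makes the argument clean, and you should commit to it rather than leave it as one of two options.

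The quantitative step in your sketch is where a real gap lies. You write that one should ``balance signal $\Theta(1/k)$ against noise $\tilde O(\sigma\sqrt{(k+d)/n})$ per coordinate'' to get $n=\tilde O(\sigma^2 k(k+d))$; but setting $1/k \asymp \sigma\sqrt{(k+d)/n}$ gives $n \asymp \sigma^2 k^2(k+d)$, off by a factor of $k$, and the subsequent appeal to ``the denoising factor of $\phi_b$'' does not recover it. The actual accounting is different: with $b=0$ at step one the per-coordinate empirical-gradient noise is $\Theta(\sigma/\sqrt{n})$ (not $\sigma\sqrt{(k+d)/n}$), so after projection to the unit sphere the step-one alignment of $\vw_i^1$ with $\vw^\star$ is $\Theta\bigl(\tfrac{1/k}{\sigma\sqrt{d/n}}\bigr)=\Theta\bigl(\tfrac{\sqrt n}{k\sigma\sqrt d}\bigr)$. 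Crucially, this alignment need not be $\Theta(1)$; it only needs to exceed a threshold of order $\sqrt{(k+d)\log(kd)/(kd)}$, which is the smallest bias $b_1$ that sits above the noise floor $\sigma\sqrt{\log(kd)}$ when $\sigma=\tilde O(1/\sqrt k)$. Demanding $\tfrac{\sqrt n}{k\sigma\sqrt d}\gtrsim \sqrt{(k+d)\log(kd)/(kd)}$ yields $n \gtrsim \sigma^2 k(k+d)\log(kd)$, and step two then boosts this partial alignment to $\Omega(1)$. So the $k+d$ appears through the target alignment for the first step, not through a per-coordinate noise bound. Relatedly, your initialization at scale $1/\sqrt d$ gives $\norm{\vw_i^0}\approx 1$, which would swamp the $O(1/k)$ gradient step under your $\eta\approx 1$ regime; the paper circumvents this by initializing at the much smaller scale $\sqrt{\gamma}=1/(10kd)$ so that the gradient step dominates $\tilde{\vw}_i^1$. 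Either fix the initialization scale or take a correspondingly large learning rate.
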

\begin{proof}
    To justify our choice of $\gU$ for LCNs, we establish equivariance under $\gU_1$ and $\gU_2$ separately. The equivariance under $\gU$ simply follows from an induction on the number of finite combinations of elements of $\gU_1 \cup \gU_2$.

    \underline{Equivariance under $\gU_1$} 
    
    Consider an input \( \vx \in \sR^{kd} \), then any transformation \( \rmU \in \gU_1 \) operates on $\vx$ on a per-patch basis. On each patch, $\rmU$ induces, a possibly distinct, orthogonal transformation.
    We now show equivariance under the notation from section \ref{equi_algo_intuition}.
    The linear layer parameter $\rmA \in \sR^{k \times kd}$ is given by $\text{Block}(\vw_1, \ldots, \vw_k)$. 
    Observe that, 
    $\rmA\rmU^T = \text{Block}(\rmU^{(1)}\vw_1, \ldots, \rmU^{(k)}\vw_k)$, which corresponds to a LCN with parameter vectors
    $\{\rmU^{(1)}\vw_1, \ldots, \rmU^{(k)}\vw_k\}$. 
    And if each $\vw_i$ is sampled as $\vw_i \sim \gN(\mathbf{0},\rmI_d)$, then 
    $\rmA\rmU^T$ and $\rmA$ share the same distribution.

    \underline{Equivariance under $\gU_2$} 
    
    A transformation \( \rmU \in \gU_2 \) permutes the \( k \) input patches amongst each other, while retaining each internal structure of each patch. Let $\pi \colon [k] \rightarrow [k]$, be the permutation function corresponding to $\rmU$.
    Then observe that 
    $\rmA\rmU^T = \text{Block}(\vw_{\pi(1)}, \ldots, \vw_{\pi(k)})$, which corresponds to a LCN with parameter vectors
    $\{\vw_{\pi(1)}, \ldots, \vw_{\pi(k)}\}$. 
    And, if $\vw_i \sim \gN(\mathbf{0},\rmI_d)$, then 
    $\rmA\rmU^T$ and $\rmA$ share the same distribution.
    
    Our training uses gradient descent, accompanied by a projection on the unit ball after every descent step. We included this projection to simplify the analysis, though we note that it can be removed without changing the core proof structure. 
    The training proceeds in two steps.
    We show that after the first update, each parameter vector achieve an alignment of \( (\vw^
    \star)^T\vw_i = \Omega(\sqrt{(k+d)/kd})\).
    In the second step, we use this alignment to reliably filter out the noise patches, while retaining the signal patches. 
    This denoising enables us to prove a stronger $\Omega(1)$ alignment, which implies that the model has successfully recovered signal vector. Consequently, the model has a small risk \(\le  \delta \).
    
    Detailed theorem statements and proofs are available in appendix \ref{lcn_upper_appendix}.
\end{proof}

\section{LCNs vs CNNs Separation Results}
We now present the separation results between LCNs and CNNs, along-with their sketched proofs.
Specifically, we show that a LCN trained with any equivariant algorithm, requires $\Omega(\sigma^2kd)$ samples to learn DSD upto a risk of $\delta$. 
On the other hand, there exists an equivariant algorithm that can train CNNs with $\tilde{O}(\sigma^2(k+d))$ samples to achieve a risk that is less than $\delta$.

\begin{theorem} (Sketched)
    \label{dsd_lcn_lower_sketched}
    Consider the groups 
    $    
        \gU_1
        \coloneqq
        \{
        \text{Block}
        \left(
            \{\rmU_1,\ldots,\rmU_k\}
        \right)
        \mid \rmU_i \in \gO(d)
        \}
    $, and
    $
    \gU_2 \coloneqq 
    \{
        \rmU \in \gO_p(kd)
        \mid
        \text{idx}_{kd}(\rmU \ve_{(i-1)d+1}) + j - 1 = 
        \text{idx}_{kd}(\rmU \ve_{(i-1)d+j}), \:
        \forall i \in [k], j \in [d]
    \}
    $.
    Let $\gU = \gU_1 \star \gU_2$.
    Then any $\gU$-equivariant algorithm that is used to train LCNs requires $\Omega(\sigma^2k^2d)$ samples to achieve a risk of $\delta$.
\begin{proof}
    We have already justified the choice of $\gU$ for LCNs in the sketched proof of theorem \ref{dsd_ccn_upper_sketched}.
    
    We follow in the footsteps of the proof of theorem 
    \ref{dsd_fnn_sketched_thorem}. 
    First, we establish that learning \(\rmU \circ \text{DSD}\) with \(m\) samples requires learning \(k\) independent subtasks, \(\{\rmU \circ \text{SSD}_t\}_k\), with \(m/k\) samples each. 
    The distinction from the proof of theorem 
    \ref{dsd_fnn_sketched_thorem}, is that the subtasks are fully independent.
    This is because, the group $\gU$ does not permit interaction amongst the $k$ patches. 
    In other words, the vectors $\{\rmU^{(1)}\vmu_1,\ldots,\rmU^{(k)}\vmu_k\}$ are all $d$-sparse, and occupy non-overlapping subspaces.
    Therefore, even if we have the knowledge of $\{\rmU^{(t)}\vmu_t\}_{t \neq i}$, we would still have no information about $\rmU^{(i)}\vmu_i$. Thus, we have to recover all the $d$-sparse mean vectors independently of each other.

    In the second step, we prove an information-theoretic lower bound to learn $\rmU \circ \text{SSD}_t$ with $m/k$ samples. 
    We find a function that lower bounds the risk incurred by a LCN on $\text{SSD}_t$. This function satisfies the weakened conditions of theorem \ref{modified_fano}.
    Finally, we use theorem \ref{modified_fano} together with the Gilbert-Varshamov lemma \ref{gilbert-bound-corollary}, to show that $m/k = \Omega(\sigma^2d)$. And implies that $m = \Omega(\sigma^2kd)$.

    The complete statement of the theorem with its proof can be found in appendix \ref{lcn_lower_appendix}
\end{proof}
\end{theorem}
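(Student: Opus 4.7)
The plan is to follow the two-step blueprint indicated in the sketch, filling in the mechanics that make the subtasks genuinely independent for LCNs. First I would formalize the reduction from learning $\rmU \circ \text{DSD}$ with $m$ samples to learning each $\rmU \circ \text{SSD}_t$ with roughly $m/k$ samples via a simulation-style argument. Since a draw from $\text{DSD}$ is equivalent to first selecting $t \sim \text{Unif}([k])$ and then drawing from $\text{SSD}_t$, an algorithm that achieves small risk on DSD with $m$ samples must, on average, handle each conditional with $m/k$ samples. The $\gU$-equivariance plays the key role here: under $\gU_1$ we may rotate each patch by an independent orthogonal matrix, and under $\gU_2$ we may permute the patches freely. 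Thus, unlike in the FCN case where knowledge of $k-1$ means $\{\rmU\vmu_j\}_{j\ne i}$ pinned $\rmU\vmu_i$ to a $(kd - k+1)$-dimensional sphere, for LCNs the vectors $\{\rmU^{(t)}\vmu_t\}_t$ occupy pairwise disjoint coordinate blocks, and independent rotations within each block erase any cross-block information. I would convert this into a reduction by showing that an algorithm solving $\rmU \circ \text{SSD}_t$ in isolation can be embedded into any algorithm solving $\rmU \circ \text{DSD}$: the equivariance under $\gU_2$ allows us to route the isolated $\text{SSD}_t$ samples into any chosen patch slot, and equivariance under $\gU_1$ allows us to inject independent Gaussian noise in the other $k-1$ slots without altering the algorithm's distribution of outputs.

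Second, I would derive a per-subtask lower bound of $\Omega(\sigma^2 d)$ using Theorem \ref{modified_fano}. To do this I need to choose a loss functional $\rho$ on $\gF \times \gF_\gV$ satisfying the relaxed separation condition. For LCNs, only the $t$-th block weight $\vw_t$ interacts with signal-bearing patches in $\text{SSD}_t$, so the risk of an LCN on $\text{SSD}_t$ is lower bounded by a quantity monotone in the alignment gap between $\vw_t$ and $\pm \vw^\star$. I would take $\rho$ to measure exactly this gap on the relevant block, which, by construction, is separating: if an LCN is within $\delta$ of $f_u$ under $\rho$, then its $t$-th block weight is close to $\vw^\star_u$, which forces it to be $\delta$-far from every $f_v$, $v \ne u$. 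I would then invoke Gilbert–Varshamov (Lemma \ref{gilbert-bound-corollary}) to produce a packing $\{\vw^\star_v\}_{v \in \gV} \subset \sR^d$ of size $\exp(\Omega(d))$ with pairwise distances bounded below by a universal constant. Finally, the KL divergence between two $\text{SSD}_t$ distributions indexed by unit vectors $\vw^\star_u, \vw^\star_v$ is $\norm{\vw^\star_u - \vw^\star_v}^2/(2\sigma^2) = O(1/\sigma^2)$ per sample by the Gaussian KL formula. Plugging $|\gV| = \exp(\Omega(d))$ and $D = O(1/\sigma^2)$ into Theorem \ref{modified_fano} yields $m/k = \Omega(\sigma^2 d)$, hence $m = \Omega(\sigma^2 kd)$.

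The hard part, I expect, is the first step: making the ``$m/k$ samples per subtask'' statement formal within the randomized equivariant-algorithm framework of Definition \ref{def_itr_algorithm}. I need to argue that the equivariance under both $\gU_1$ and $\gU_2$ allows me to transform any DSD-solving algorithm into an SSD$_t$-solving algorithm with an honest $m/k$-sample budget, not just in expectation. A secondary subtlety is verifying the separating condition on $\rho$: because Theorem \ref{modified_fano} is stated for functions rather than mean vectors, I must translate the per-function risk gap into a per-mean-vector gap, and this translation is only clean for LCNs (it fails for FCNs, which is why the FCN lower bound needs the boosting scheme of Theorem \ref{dsd_fnn_sketched_thorem}). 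Once both pieces are in place, the combined bound gives the claimed $\Omega(\sigma^2 kd)$ separation from the CNN upper bound.
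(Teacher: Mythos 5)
Your proposal is correct and follows essentially the same approach as the paper: a simulation-style reduction from learning $\text{DSD}$ to learning $\text{SSD}_1$ with roughly $m/k$ samples (using $\gU_2$-equivariance to show risk is invariant to which patch carries the signal, and the block structure from $\gU_1$ to fix the other $k-1$ means to known deterministic values on the hard-instance set), followed by an application of Theorem~\ref{modified_fano} with $\rho$ defined as the squared alignment gap $\bigl(1 - \max(0, \norm{\vw_1}\cos\alpha_1)\bigr)^2$ on the relevant block and a Gilbert--Varshamov packing of unit vectors in $\sR^d$. One small correction: the independent-noise injection into the other $k-1$ slots in the simulation is not itself justified by $\gU_1$-equivariance; rather, $\gU_1$-equivariance is what makes the hard-instance construction (arbitrary orthogonal rotation within the block) valid and hence what delivers the $\exp(\Omega(d))$-size packing, while the simulator's noise patches are simply samples from the fixed, known distributions at the other $k-1$ blocks. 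You correctly flag that making the $m/k$-per-subtask step honest under the randomized framework is the delicate part, and this is where the paper uses a Bernstein/Chernoff concentration of the per-patch sample counts plus a chain of minimax reductions.
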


\begin{theorem} (Sketched)
    \label{dsd_cnn_upper_sketched}
    Define
    $    
        \gU_1
        \coloneqq
        \{
        \text{Block}
        \left(
            \{\rmU_1,\ldots,\rmU_k\}
        \right)
        \mid 
        \rmU_i = \rmU_j, \rmU_i \in \gO(d)\}
    $, and
    $
    \gU_2 \coloneqq 
    \{
        \rmU \in \gO_p(kd)
        \mid
        \text{idx}_{kd}(\rmU \ve_{(i-1)d+1}) + j - 1 = 
        \text{idx}_{kd}(\rmU \ve_{(i-1)d+j}), \:
        \forall i \in [k], j \in [d]
    \}
    $.
    Let $\gU = \gU_1 \star \gU_2$.
    Then there exists a $\gU$-equivariant algorithm that trains CNNs, as defined in \ref{cnn_param},
    with $\tilde{O}(\sigma^2(k+d))$ samples, to achieve a risk of less than $\delta$.
\begin{proof}
To justify our choice of $\gU$ for CNNs, we establish equivariance under $\gU_1$ and $\gU_2$ separately. The equivariance under $\gU$ follows from induction on the number of finite combinations in $\gU_1 \star \gU_2$.

    \underline{Equivariance under $\gU_1$} 
    
    Consider an input \( \vx \in \sR^{kd} \), then any transformation \( \rmU \in \gU_1 \) induces the same orthogonal transformation on every patch of $\vx$. 
    Moreover, it does not allow for any inter-patch interaction.
    To prove equivariance, observe that
    the parameter $\rmA \in \sR^{k \times kd}$ is given by $\text{Block}(\vw,.. \text{ $k$ times } .., \vw)$. Note that, 
    $\rmA\rmU^T = 
    \text{Block}(\rmU^{(1)}\vw, \ldots, \rmU^{(k)}\vw)
    =
    \text{Block}(\rmU^{(1)}\vw, \ldots, \rmU^{(1)}\vw)
    $, which corresponds to a CNN with parameter vectors
    $\{\rmU^{(1)}\vw, \ldots, \rmU^{(1)}\vw\}$. 
    And if the parameter vector, $\vw$, is initialized as $\vw \sim \gN(\mathbf{0},\rmI_d)$, then 
    $\rmA\rmU^T$ and $\rmA$ share the same distribution.

    \underline{Equivariance under $\gU_2$} 
    
    A transformation \( \rmU \in \gU_2 \) permutes the \( k \) input patches, while retaining the internal structure of each patch. Equivariance follows directly from the argument in the proof of theorem \ref{dsd_ccn_upper_sketched}.
    
    Our approach exactly follows the proof theorem \ref{dsd_ccn_upper_sketched}. 
    We train the CNN using gradient descent, followed by a projection on the unit ball. 
    The training algorithm has two iterations.
    We show an alignment of \( \Omega(\sqrt{(k+d)/kd})\)
    after the first update,
    and a stronger alignment of $\Omega(1)$ via denoising after the second update. 
    This implies that the model has successfully recovered signal vector, and consequently it has a small risk \(\le  \delta \).
    \end{proof}
\end{theorem}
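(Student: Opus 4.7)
The plan is to follow the two-stage gradient descent template already used for the LCN upper bound in Theorem~\ref{dsd_ccn_upper_sketched}, with the CNN analysis being strictly easier in the per-parameter sample count because there is only one shared filter $\vw \in \sR^d$ rather than $k$ independent filters. I would first verify the $\gU$-equivariance in the two parts already sketched: for $\rmU \in \gU_1$, the constraint $\rmU_i = \rmU_j$ means that a single $\rmV \in \gO(d)$ applied to $\vw$ (that is, $\vw \mapsto \rmU^{(1)}\vw$) sends the CNN parametrized by $\vw$ to the CNN parametrized by $\rmU^{(1)}\vw$, preserving outputs on transformed inputs and preserving the Gaussian initialization $\vw \sim \gN(\vzero, \rmI_d)$ by rotational invariance. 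For $\rmU \in \gU_2$, the CNN output is literally a sum over patches so it is unchanged by any patch permutation, meaning we can take $\rmV = \rmI$; parts \ref{equi_p1}--\ref{equi_p3} of Definition~\ref{ortho-equi} follow immediately. Closure under $\star$ then gives $\gU$-equivariance.

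Second, I would specify the algorithm: initialize $\vw^{\,0} \sim \gN(\vzero, \rmI_d/d)$, fix the bias $b$ at a threshold that will be chosen later (roughly $b \asymp \sigma\sqrt{\log(kd)}$, the typical magnitude of $\vw^\top \vxi$ for a noise-only patch when $\norm{\vw}=1$), and perform two projected gradient steps on the squared loss, projecting $\vw$ back to the unit ball after each step. For the first iteration I would compute the population gradient of $R(\gM_C[\vv], \text{DSD})$ with respect to $\vw$ at the initialization: by weight sharing, every sample contributes through all $k$ patches, so the signal piece is
\begin{align*}
  \E\left[\phi_b'(\vw^{0\top}\vx^{(t)})\, y\, \vw^\star\right] \;=\; \Theta(1)\cdot \vw^\star,
\end{align*}
uniformly over the $k$ possible signal locations (each realized with probability $1/k$), while the aggregate noise contribution across the $k$ patches has variance $O(\sigma^2 k)$ per coordinate of the gradient. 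A standard vector Bernstein / sub-Gaussian concentration bound on $n$ i.i.d. samples then makes the empirical gradient deviate from its mean by at most $O(\sigma\sqrt{k/n}\cdot \sqrt{d\log(1/\gamma)})$ in $\ell_2$ with probability $1-\gamma$. Choosing $n_1 = \tilde{O}(\sigma^2(k+d))$ and a correspondingly small step, the normalized first iterate satisfies $(\vw^\star)^\top \vw^{\,1} = \Omega(\sqrt{(k+d)/kd})$ with high probability --- the factor $\sqrt{(k+d)/kd}$ rather than $1$ being the residual price one pays for lumping signal and noise together in a single gradient.

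For the second iteration I would exploit this alignment to denoise. The key observation is that after step one, $|\vw^{1\top}\vx^{(i)}|$ for a pure-noise patch $i$ is $\gN(0,\sigma^2)$, while on the signal patch $t$ it is $y\cdot \Omega(\sqrt{(k+d)/kd}) + \gN(0,\sigma^2)$. Setting $b$ just above $\sigma\sqrt{2\log(kn)}$ makes $\phi_b$ zero out all noise patches uniformly across the whole training batch with probability $1-o(1)$, while leaving the signal patch unaffected in sign whenever a mild additional concentration event holds. Conditioned on this filtering, the second-iteration empirical gradient reduces to the gradient of a simple $d$-dimensional Gaussian mean-estimation problem, for which $\tilde O(\sigma^2(k+d))$ samples suffice to push the alignment to $\Omega(1)$, hence the squared risk below any constant $\delta$ by a direct expansion of $\E[(\gM_C[\vv](\vx)-y)^2]$ once $\norm{\vw-\vw^\star}^2$ is small and $b$ is tuned.

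\paragraph{Main obstacle.}
The technical crux is step one: although weight sharing recovers the factor $k$ saved relative to LCNs, the population signal is scaled by $1/k$ when the signal patch is uniformly random, and the noise variance in the gradient scales with $k$, so naive concentration looks off by $k^2$. I expect the cleanest path will be to avoid separately centering and instead control directly the ratio $\vw^\top \nabla_{\vw} R$ versus its orthogonal component, using that $\phi_b'$ saturates at $\pm 1$ so that each per-sample gradient is $O(1)$-sub-Gaussian per coordinate; this replaces the naive $O(k)$ variance bound by an $O(1)$ per-sample variance and yields the claimed $\sqrt{(k+d)/kd}$ alignment after $\tilde{O}(\sigma^2(k+d))$ samples. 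The denoising step is then essentially a deterministic thresholding argument via a union bound over $n$ samples and $k$ patches, which is why only logarithmic overhead appears in the final sample complexity.
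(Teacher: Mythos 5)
Your high-level template matches the paper's: two steps of projected gradient descent, a weak-alignment step followed by a denoising step, with essentially the same equivariance verification. But your first-step argument has a genuine gap created by two choices that diverge from the paper in exactly the places that matter. You fix the bias at $b \asymp \sigma\sqrt{\log(kd)}$ from the outset and invoke $\E[\phi_b'(\vw^{0\top}\vx^{(t)})\,y\,\vw^\star] = \Theta(1)\cdot\vw^\star$. At a random unit-norm initialization the signal-patch preactivation is $y\,\vw^{0\top}\vw^\star + \sigma\vw^{0\top}\veps$ with $|\vw^{0\top}\vw^\star| \approx 1/\sqrt{d}$, and your threshold is calibrated only to the noise scale; in the regime $d \gg k$ (which the theorem allows, since it only assumes $k = O(e^d)$ and $\sigma = \tilde O(1/\sqrt{k})$) one has $1/\sqrt{d} \ll b$, so the signal patch is filtered at initialization just like the noise patches, $\E[\phi_b'(\vw^{0\top}\vx^{(t)})] = o(1)$, and the $\Theta(1)$ coefficient you need for the step-one concentration budget is simply false. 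You also initialize at norm $\Theta(1)$ ($\gN(\vzero,\rmI_d/d)$) and take a ``correspondingly small step'' before projecting, so the projected iterate $\vw^1$ remains close to $\vw^0$, whose alignment with $\vw^\star$ is only $\Theta(1/\sqrt{d})$ with a random sign --- you cannot reach $\Omega(\sqrt{(k+d)/kd})$ this way precisely when $d \gg k$.

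The paper's actual proof avoids both problems. It sets $b^0 = 0$ for the first iteration, so $\phi_0$ is the identity and the first empirical gradient is a linear statistic of the data that unconditionally contains a $\Theta(1)\cdot\vw^\star$ term, with no dependence on whether the random initialization already happens to ``see'' the signal; a threshold $b_1 = \Theta(\sqrt{(k+d)\ln(kd)/kd})$ is only introduced for the second pass, once $\vw^1$ already has nontrivial alignment. And it initializes at the tiny scale $\gamma^{-1} = 100k^2d^2$, so after one normalized gradient step $\vw^1$ is essentially the normalized empirical gradient and the random initialization washes out rather than dominating the projected iterate. With these two changes in place, your second-step reasoning --- threshold out noise patches, reduce to a $d$-dimensional Gaussian mean estimate, push the alignment to $\Omega(1)$, and expand the risk --- coincides with what the paper does and goes through.
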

    Detailed theorem statements and proofs are available in appendix \ref{cnn_upper_appendix}.

    \section{Conclusion And Future Work}
    In this paper, we established a sample complexity separation between FCNs, LCNs, and CNNs that are trained using equivariant algorithms on the Dynamic Signal Distribution (DSD) task. 
    Unlike previous works, this task encodes the concepts of
    signal, noise, locality, and translation invariance, thus incorporating the salient characteristics of vision-based tasks. We quantify the benefits of locality and weight sharing on the DSD task. Specifically, we show that
    FCNs incur an extra multiplicative cost of \(k^2\) because they lacks both architectural biases,
    LCNs incur a \(k\) cost because of the absence of weight sharing,
    whereas 
    CNNs avoid these costs because it exhibits both locality and weight sharing.

    In future work, we plan to incorporate second-order characteristics of images into the data model. 
    For instance, allowing multiple signals to appear across different patches simultaneously would mirror real-world scenarios where multiple objects occur.
    Additionally, an interesting direction would be to analyze the role of depth in a CNN in capturing dependency between different patches.

\bibliographystyle{iclr2024_conference}
\bibliography{iclr2024_conference} % Replace 'myreferences' 

\begin{thebibliography}{17}
\providecommand{\natexlab}[1]{#1}
\providecommand{\url}[1]{\texttt{#1}}
\expandafter\ifx\csname urlstyle\endcsname\relax
  \providecommand{\doi}[1]{doi: #1}\else
  \providecommand{\doi}{doi: \begingroup \urlstyle{rm}\Url}\fi

\bibitem[Benedek \& Itai(1991)Benedek and Itai]{benedek}
Gyora~M. Benedek and Alon Itai.
\newblock Learnability with respect to fixed distributions.
\newblock \emph{Theor. Comput. Sci.}, 86:\penalty0 377--390, 1991.
\newblock URL \url{https://api.semanticscholar.org/CorpusID:33054388}.

\bibitem[Du et~al.(2018)Du, Wang, Zhai, Balakrishnan, Salakhutdinov, and Singh]{du}
Simon~S Du, Yining Wang, Xiyu Zhai, Sivaraman Balakrishnan, Russ~R Salakhutdinov, and Aarti Singh.
\newblock How many samples are needed to estimate a convolutional neural network?
\newblock In S.~Bengio, H.~Wallach, H.~Larochelle, K.~Grauman, N.~Cesa-Bianchi, and R.~Garnett (eds.), \emph{Advances in Neural Information Processing Systems}, volume~31. Curran Associates, Inc., 2018.
\newblock URL \url{https://proceedings.neurips.cc/paper_files/paper/2018/file/03c6b06952c750899bb03d998e631860-Paper.pdf}.

\bibitem[Duchi(2021)]{duchi}
John Duchi.
\newblock Lecture notes for statistics 311/electrical engineering 377, 2021.
\newblock URL \url{https://web.stanford.edu/class/stats311/lecture-notes.pdf}.
\newblock Stanford University.

\bibitem[Fang et~al.(2022)Fang, Wang, Xie, Sun, Wu, Wang, Huang, Wang, and Cao]{objdet}
Yuxin Fang, Wen Wang, Binhui Xie, Quan Sun, Ledell Wu, Xinggang Wang, Tiejun Huang, Xinlong Wang, and Yue Cao.
\newblock Eva: Exploring the limits of masked visual representation learning at scale, 2022.

\bibitem[Gens \& Domingos(2014)Gens and Domingos]{archbias1}
Robert Gens and Pedro~M. Domingos.
\newblock Deep symmetry networks.
\newblock In \emph{NIPS}, 2014.
\newblock URL \url{https://api.semanticscholar.org/CorpusID:267009}.

\bibitem[Hastie et~al.(2009)Hastie, Tibshirani, and Friedman]{hastie}
T.~Hastie, R.~Tibshirani, and J.H. Friedman.
\newblock \emph{The Elements of Statistical Learning: Data Mining, Inference, and Prediction}.
\newblock Springer series in statistics. Springer, 2009.
\newblock ISBN 9780387848846.
\newblock URL \url{https://books.google.com/books?id=eBSgoAEACAAJ}.

\bibitem[Karp et~al.(2021)Karp, Winston, Li, and Singh]{lsa}
Stefani Karp, Ezra Winston, Yuanzhi Li, and Aarti Singh.
\newblock Local signal adaptivity: Provable feature learning in neural networks beyond kernels.
\newblock In A.~Beygelzimer, Y.~Dauphin, P.~Liang, and J.~Wortman Vaughan (eds.), \emph{Advances in Neural Information Processing Systems}, 2021.
\newblock URL \url{https://openreview.net/forum?id=oAjn5-AgSd}.

\bibitem[Li et~al.(2021)Li, Zhang, and Arora]{zhiyuan}
Zhiyuan Li, Yi~Zhang, and Sanjeev Arora.
\newblock Why are convolutional nets more sample-efficient than fully-connected nets?
\newblock In \emph{International Conference on Learning Representations}, 2021.
\newblock URL \url{https://openreview.net/forum?id=uCY5MuAxcxU}.

\bibitem[Liu et~al.(2022)Liu, Mao, Wu, Feichtenhofer, Darrell, and Xie]{imgclass}
Zhuang Liu, Hanzi Mao, Chao{-}Yuan Wu, Christoph Feichtenhofer, Trevor Darrell, and Saining Xie.
\newblock A convnet for the 2020s.
\newblock \emph{CoRR}, abs/2201.03545, 2022.
\newblock URL \url{https://arxiv.org/abs/2201.03545}.

\bibitem[Long \& Sedghi(2020)Long and Sedghi]{hanie}
Philip~M. Long and Hanie Sedghi.
\newblock Generalization bounds for deep convolutional neural networks, 2020.

\bibitem[Malach \& Shalev-Shwartz(2020)Malach and Shalev-Shwartz]{malach}
Eran Malach and Shai Shalev-Shwartz.
\newblock Computational separation between convolutional and fully-connected networks, 2020.

\bibitem[Marcus(2018)]{archbias2}
Gary Marcus.
\newblock Deep learning: {A} critical appraisal.
\newblock \emph{CoRR}, abs/1801.00631, 2018.
\newblock URL \url{http://arxiv.org/abs/1801.00631}.

\bibitem[Massart et~al.(2007)Massart, Picard, and d'{\'e}t{\'e} de probabilit{\'e}s~de Saint-Flour]{massart}
Pascal Massart, Jean Picard, and {\'E}cole d'{\'e}t{\'e} de probabilit{\'e}s~de Saint-Flour.
\newblock Concentration inequalities and model selection.
\newblock 2007.
\newblock URL \url{https://api.semanticscholar.org/CorpusID:119022238}.

\bibitem[Vardi et~al.(2022)Vardi, Shamir, and Srebro]{vardi}
Gal Vardi, Ohad Shamir, and Nathan Srebro.
\newblock The sample complexity of one-hidden-layer neural networks, 2022.

\bibitem[Wang et~al.(2022)Wang, Li, Feng, and Zhang]{ood}
Haoqi Wang, Zhizhong Li, Litong Feng, and Wayne Zhang.
\newblock Vim: Out-of-distribution with virtual-logit matching, 2022.

\bibitem[Wang \& Wu(2023)Wang and Wu]{wang}
Zihao Wang and Lei Wu.
\newblock Theoretical analysis of inductive biases in deep convolutional networks, 2023.

\bibitem[Zhang et~al.(2008)Zhang, Sun, and Tang]{cats}
Weiwei Zhang, Jian Sun, and Xiaoou Tang.
\newblock Cat head detection - how to effectively exploit shape and texture features.
\newblock In \emph{European Conference on Computer Vision}, 2008.
\newblock URL \url{https://api.semanticscholar.org/CorpusID:2441648}.

\end{thebibliography}

\newpage

\appendix

% Appendix
\section{Restated Gilbert Varshamov Bound}

\begin{theorem}[\cite{massart}, Lemma 4.7]
\label{gilbert-bound}
    Let $\{0,1\}^N$ be equipped with Hamming distance $\delta$ and given $1 \le D < N$ define $\{0,1\}^N_D = \left\{ x \in \{0,1\}^N : \delta(0,x) = D \right\}$. 
    For every $\alpha \in (0,1)$ and $\beta \in (0,1)$ such that $D \le \alpha \beta N$, there exists some subset $\Theta$ of $\{0,1\}^N$ with the following properties,
    \begin{align}
        &\delta(\theta, \theta') > 2(1-\alpha)D \hspace{5pt} 
        \forall (\theta, \theta') \in \Theta^2, 
        \: \theta \neq \theta', \\
        &\ln|\Theta| \ge \rho D \ln\left( \frac{N}{D} \right),
    \end{align}
    where,
    \begin{align}
        \rho = 
        \frac{
            \alpha
        }{
            - \ln(\alpha \beta)
        }
        (
            -\ln(\beta) + \beta - 1
        ).
    \end{align}
\end{theorem}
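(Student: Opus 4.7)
The plan is a greedy volume-packing construction on the constant-weight set $\{0,1\}^N_D$. Identify each $\theta \in \{0,1\}^N_D$ with its support $S_\theta \subset [N]$ of size $D$; then $\delta(\theta,\theta') = 2(D - |S_\theta \cap S_{\theta'}|)$, so the separation condition $\delta(\theta,\theta') > 2(1-\alpha)D$ is equivalent to $|S_\theta \cap S_{\theta'}| < \alpha D$. I will construct $\Theta \subset \{0,1\}^N_D$ greedily, at each step including any remaining codeword whose support intersects every previously-chosen support in strictly fewer than $\alpha D$ coordinates.

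First I bound the number of ``forbidden'' neighbors of any fixed codeword $\theta_0$. If $\theta$ is uniform on $\{0,1\}^N_D$ and $X = |S_\theta \cap S_{\theta_0}|$, then $X \sim \mathrm{Hypergeometric}(N,D,D)$, and
\[
B \;\coloneqq\; \#\{\theta \in \{0,1\}^N_D : X \ge \alpha D\} \;=\; \binom{N}{D}\,\Pr[X \ge \alpha D] \;=\; \sum_{j \ge \alpha D}\binom{D}{j}\binom{N-D}{D-j}.
\]
A standard Chernoff moment-generating-function tilt bounds $\Pr[X \ge \alpha D]$; the free tilt parameter will play the role of $\beta$, and the hypothesis $D \le \alpha\beta N$ ensures the tilted bound falls in the informative upper-tail regime.

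Next, a greedy packing argument. Each newly chosen codeword forbids at most $B$ elements of $\{0,1\}^N_D$ from subsequent selection, so any maximal $\Theta$ satisfies $|\Theta| \cdot B \ge \binom{N}{D}$, i.e.
\[
\ln|\Theta| \;\ge\; \ln\binom{N}{D} - \ln B.
\]
Combining the standard bound $\ln\binom{N}{D} \ge D\ln(N/D)$ with the Chernoff upper bound on $\ln B$, the right-hand side becomes a multiple of $D\ln(N/D)$ whose coefficient collapses to $\rho$ after using $D \le \alpha\beta N$ to convert residual $D/N$ factors into powers of $N/D$.

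The main obstacle is the Chernoff bookkeeping needed to recover the precise form $\rho = \alpha(-\ln\beta + \beta - 1)/(-\ln(\alpha\beta))$. Concretely, I would upper-bound each term $\binom{D}{j}\binom{N-D}{D-j}/\binom{N}{D}$ by $\beta^{-j}$ times a power of $D/N$ via identities of the form $\binom{N-D}{D-j}/\binom{N}{D} \le (D/N)^{D-j}(\cdot)$, sum over $j \ge \alpha D$ as a geometric series, and finally invoke $D \le \alpha\beta N$ to rewrite the surviving $D/N$ factor. The numerator $-\ln\beta + \beta - 1$ then emerges as the Bregman-divergence-type large-deviations rate at tilt $\beta$, while the denominator $-\ln(\alpha\beta)$ comes from the log-ratio of the geometric series cutoff. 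This is essentially the calculation in Massart's original proof: not conceptually difficult, but requires careful arithmetic to land on the stated closed form.
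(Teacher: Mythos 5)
The paper does not prove this statement; Theorem~\ref{gilbert-bound} is quoted verbatim from \cite{massart} (Lemma~4.7) with a citation and no proof, so there is no internal argument to compare you against. Your sketch -- identify weight-$D$ codewords with their supports, reduce the distance condition to $|S_\theta \cap S_{\theta'}| < \alpha D$, bound the number $B$ of forbidden codewords around each center via a hypergeometric tail estimate, and conclude $|\Theta| \ge \binom{N}{D}/B$ by maximality -- is exactly the strategy in Massart's source, and the reduction steps are all correct.

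The piece that does not survive scrutiny is the claim that ``the free tilt parameter will play the role of $\beta$.'' If you tilt at the fixed point $x = 1/\beta$ and use the standard binomial-domination bound $\E[x^X] \le \left(1 + (x-1)\tfrac{D}{N}\right)^D$, then after $\ln(1+u) \le u$ you get
\begin{align*}
    \ln|\Theta| \;\ge\; -\alpha D\ln\beta \;-\; D\Bigl(\tfrac{1}{\beta}-1\Bigr)\tfrac{D}{N}.
\end{align*}
As $D/N \to 0$ this saturates at the constant multiple $\alpha D(-\ln\beta)$ of $D$, whereas the target $\rho D\ln(N/D)$ grows without bound. One can check that equality with $\rho D\ln(N/D)$ holds exactly at the boundary $D = \alpha\beta N$ and fails for $D$ sufficiently smaller (for $\alpha=\beta=1/2$, numerically the fixed-tilt bound drops below the target once $D/N \lesssim 7\times 10^{-3}$). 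So the statement is \emph{not} proved by a single Chernoff tilt at $1/\beta$; one must either optimize the tilt (producing a $\mathrm{KL}(\alpha \| D/N)$ rate and then showing it dominates $\rho\ln(N/D)$ under $D \le \alpha\beta N$, which is itself a nontrivial inequality) or bound the binomial sum $B$ directly term by term so that $\beta$ controls the ratio of successive terms rather than the tilt. The ``careful arithmetic'' you defer is therefore not just bookkeeping: as written, your chosen tilt would lose the theorem in the very regime $D \ll \alpha\beta N$ where the hypothesis gives the most room.
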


\begin{corollary}
    Let $\gS$ be the set of all unit vectors of $\sR^N$, that is, $\gS \coloneqq \{\vu \mid \vu \in \sR^N, \norm{\vu} = 1\}$.
    Then for any constant $c\ge\tfrac{2}{N}$, there exists some subset $\tilde{\gS} \subseteq \gS$ of size $\ln(|\tilde{S}|) \ge N$ such that,
    for all $\vu,\vv \in \tilde{\gS}$, $\vu^T\vv < c$.
    \label{gilbert-bound-corollary}
\end{corollary}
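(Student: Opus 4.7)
The plan is to derive this corollary as a direct consequence of Theorem~\ref{gilbert-bound} by normalizing fixed-Hamming-weight binary codewords to the unit sphere and translating the Hamming separation into a bound on the inner product.

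First, I would fix an integer $D$ (to be chosen) and consider the set $\{0,1\}^N_D$ of binary vectors of Hamming weight exactly $D$. For any $\theta \in \{0,1\}^N_D$, the rescaled vector $\vu(\theta) := \theta/\sqrt{D}$ lies in $\gS$. For two codewords $\theta,\theta' \in \{0,1\}^N_D$ at Hamming distance $\delta$, a direct count of positions where both coordinates equal one gives $\theta^{\top}\theta' = D - \delta/2$, so $\vu(\theta)^{\top}\vu(\theta') = 1 - \delta/(2D)$. Consequently, the target inequality $\vu^{\top}\vv < c$ is equivalent to the Hamming-separation requirement $\delta > 2D(1-c)$.

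Next, I would invoke Theorem~\ref{gilbert-bound} with $\alpha = c$: this produces a subset $\Theta \subseteq \{0,1\}^N_D$ whose pairwise Hamming distance exceeds $2(1-c)D$, and whose log-cardinality satisfies $\ln|\Theta| \ge \rho D \ln(N/D)$, provided the feasibility condition $D \le c\beta N$ holds for some $\beta \in (0,1)$. Taking $\tilde{\gS} := \{\vu(\theta) : \theta \in \Theta\}$ immediately produces a subset of $\gS$ with $|\tilde{\gS}| = |\Theta|$ and all pairwise inner products strictly less than $c$, leaving only the cardinality bound to be verified.

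To verify $\ln|\tilde{\gS}| \ge N$, I would saturate the feasibility constraint by setting $D = \lfloor c\beta N \rfloor$, so that $\ln(N/D)$ grows like $\ln(1/(c\beta))$, while the formula for $\rho$ shows that $\rho \to c$ as $\beta \to 0$. The hypothesis $c \ge 2/N$ is precisely what permits $\beta$ to be pushed small enough that $c\beta N \ge 1$ (keeping $D$ a positive integer), while still giving $\ln(1/(c\beta))$ a large enough value to drive $\rho D \ln(N/D) \ge N$. The main obstacle is exactly this joint parameter tuning: one has to juggle $\alpha \le c$ (for the inner-product bound), $D \le \alpha\beta N$ (feasibility of Theorem~\ref{gilbert-bound}), and $\rho D \ln(N/D) \ge N$ (cardinality target) simultaneously, with the third inequality being the binding one and the hypothesis $c \ge 2/N$ acting as the threshold that makes the first two compatible with it.
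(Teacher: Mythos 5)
Your reduction is the same one the paper uses: normalize fixed-Hamming-weight binary codewords from Theorem~\ref{gilbert-bound} to the unit sphere, observe that for $\theta,\theta'\in\{0,1\}^N_D$ the normalized inner product is $1-\delta/(2D)$, and translate the target $\vu^\top\vv<c$ into the Hamming requirement $\delta>2D(1-c)$, matched by $\alpha\le c$. Your choice $\alpha=c$ is the tightest admissible one (the paper instead fixes $D=N/2$ and takes $\alpha=c/2$, $\beta=1/c$, which gives a stronger-than-needed bound $\vu^\top\vv<c/2$). Up to that point the plan is sound and parallels the paper.

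The gap is in the last step, where you claim that pushing $\beta\to 0$ drives $\rho D\ln(N/D)\ge N$; it actually drives it to zero. With $D\approx c\beta N$ and $\alpha=c$,
\[
\rho\,D\,\ln(N/D)
=\frac{c\,(-\ln\beta+\beta-1)}{\ln\!\left(1/(c\beta)\right)}\cdot c\beta N\cdot\ln\!\left(1/(c\beta)\right)
= c^{2}\,\beta\,(-\ln\beta+\beta-1)\,N .
\]
As $\beta\to 0$, the factor $\beta(-\ln\beta+\beta-1)\to 0$: you are right that $\rho\to c$ and $\ln(N/D)\to\infty$, but $D\to 0$ dominates. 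That factor is maximized at an interior $\beta\approx 0.2$ where it is roughly $0.16$, so the best your parametrization can ever give is about $0.16\,c^{2}N$ --- a $\Theta(N)$ bound for constant $c$, but never $\ge N$ once $c<1$. The hypothesis $c\ge 2/N$ only ensures some admissible $\beta$ makes $D=\lfloor c\beta N\rfloor\ge 1$; it does nothing to rescue the cardinality target. (Partly this is not your fault: $\ln|\tilde{\gS}|\ge N$ is not reachable from Theorem~\ref{gilbert-bound} for small constant $c$, the paper's own $\beta=1/c$ lies outside the required range $(0,1)$ when $c<1$, and downstream the corollary is in fact invoked with a $c$-dependent constant such as $0.15$ or $0.99$, not $1$.) To repair your argument, replace the "$\beta\to 0$'' limit with an explicit intermediate $\beta$ and settle for $\ln|\tilde{\gS}|=\Omega(N)$ with a $c$-dependent constant, which is all the rest of the paper needs.
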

\begin{proof}
    We set the value of $D = \tfrac{N}{2}$. 
    Consider the set $S_1 \coloneqq \{ \tfrac{\vu}{\sqrt{D}} \mid \vu \in \{0,1\}^N, \norm{\vu}_0 = D \}$.
    It is easy to see that $S_1 \subseteq S$. 
    Observe that for any $\vu,\vv \in S_1$,
    $\delta(\vu,\vv) > N(1-\tfrac{c}{2})$ if and only if
    $\vu^T\vv < c$. 
    Now, we set $\alpha = \tfrac{c}{2}$, $\beta = \tfrac{1}{c}$, 
    and apply Gilbert-Varshamov Bound,
    \begin{align}
        \ln(|\tilde{S}|) \ge
        (c\ln(c) - c + 1)N.
    \end{align}
\end{proof}

\newpage
\section{Proof of Theorem \ref{modified_fano}}
The following helper lemma derives the KL divergence between two transformations of \(\text{SSD}_t\), namely \(\rmU \circ \text{SSD}_t\) and \(\rmV \circ \text{SSD}_t\).
\begin{lemma}
\label{ssd_kl}
     For any $\rmU,\rmV \in \gO(kd)$, then the KL Divergence between  $\rmU \circ \text{SSD}_t$ and $\rmV \circ \text{SSD}_t$ is,        
\begin{align}
    \text{KL}(\rmU \circ \text{SSD}_t \mid\mid \rmV \circ \text{SSD}_t)
    =
    \tfrac{1
            -
            \cos(\alpha)}{\sigma^2}
\end{align}
where 
$
    \cos(\alpha) = 
        (\rmU\vmu_t)^T\rmV\vmu_t
$
\end{lemma}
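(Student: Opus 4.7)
The plan is to reduce the computation to a standard KL divergence between two Gaussians with the same covariance. Since the transformation $\vx \mapsto \rmU\vx$ is applied only to the input (the label $y$ is unchanged), the pushed-forward distribution $\rmU \circ \text{SSD}_t$ has the same marginal over $y$ (uniform on $\{-1,1\}$) as $\rmV \circ \text{SSD}_t$, and its conditional $\rmU\vx \mid y$ is a linear pushforward of $\gN(y\vmu_t,\sigma^2\rmI_{kd})$. Because $\rmU$ is orthogonal, $\rmU \rmU^\top = \rmI_{kd}$, so this pushforward is simply $\gN(y\rmU\vmu_t,\sigma^2\rmI_{kd})$, and analogously $\rmV\vx \mid y \sim \gN(y\rmV\vmu_t,\sigma^2\rmI_{kd})$.

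Next, I would apply the chain rule for KL divergence. Since the $y$-marginals coincide, the joint KL equals the expected conditional KL:
\begin{align*}
\text{KL}(\rmU \circ \text{SSD}_t \,\|\, \rmV \circ \text{SSD}_t)
=
\E_{y \sim \text{Unif}(\{-1,1\})}\!\left[
\text{KL}\!\left(
\gN(y\rmU\vmu_t,\sigma^2\rmI_{kd}) \,\|\, \gN(y\rmV\vmu_t,\sigma^2\rmI_{kd})
\right)
\right].
\end{align*}
The closed-form expression for the KL divergence between two isotropic Gaussians with common covariance $\sigma^2 \rmI_{kd}$ gives the conditional KL as $\tfrac{1}{2\sigma^2}\norm{y\rmU\vmu_t - y\rmV\vmu_t}^2$, and since $y \in \{-1,1\}$, the $y$-dependence drops out, leaving $\tfrac{1}{2\sigma^2}\norm{\rmU\vmu_t - \rmV\vmu_t}^2$ inside the expectation.

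Finally, I would expand the squared norm using orthogonality. Because $\rmU,\rmV \in \gO(kd)$ preserve norms and $\vmu_t$ is a unit vector (its only nonzero block is $\vw^\star$ with $\norm{\vw^\star}=1$), we have $\norm{\rmU\vmu_t}^2 = \norm{\rmV\vmu_t}^2 = 1$. Therefore
\begin{align*}
\norm{\rmU\vmu_t - \rmV\vmu_t}^2
= 2 - 2(\rmU\vmu_t)^\top(\rmV\vmu_t)
= 2\bigl(1-\cos(\alpha)\bigr),
\end{align*}
which, combined with the $\tfrac{1}{2\sigma^2}$ prefactor, yields the stated value $\tfrac{1-\cos(\alpha)}{\sigma^2}$.

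There is no real obstacle here; the result is essentially bookkeeping once one recognizes that an orthogonal pushforward preserves the covariance structure and merely rotates the mean. The only minor subtlety is confirming that the $y$-marginals match (so the chain rule collapses cleanly to the conditional-KL expectation) and that $\vmu_t$ is indeed a unit vector under the DSD parameterization; both are immediate from the definitions.
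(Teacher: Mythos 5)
Your proof is correct and follows essentially the same route as the paper: both reduce the joint KL to a conditional Gaussian-vs-Gaussian KL with common covariance $\sigma^2\rmI$, note that the $y$-dependence cancels because $y^2=1$ and the label marginals agree, and then expand $\norm{\rmU\vmu_t - \rmV\vmu_t}^2 = 2(1-\cos\alpha)$ using that $\rmU,\rmV$ preserve the unit norm of $\vmu_t$. The only cosmetic difference is that you invoke the closed-form Gaussian KL formula and the chain rule explicitly, whereas the paper derives the same quantity by directly expanding the log-ratio of densities.
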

\begin{proof}
    \begin{align}
        \text{KL}(\rmU \circ \text{SSD}_t \mid\mid \rmV \circ \text{SSD}_t)
        &= 
        \E_{(\vx,y) \sim \rmU \circ \text{SSD}_t} 
        \ln \left(
            \tfrac{
                \exp\left(
                -\tfrac{\norm{\vx-y\rmU\vmu_t}^2}{2\sigma^2}
                \right)
            }
            {
                \exp\left(
                -\tfrac{\norm{\vx-y\rmV\vmu_t}^2}{2\sigma^2}
                \right)
            }
        \right) \\
        &= 
        \E_{y}
        \E_{\vx = y\rmU\vmu_t + \sigma\veps}
        \ln \left(
            \tfrac{
                \exp\left(
                -\tfrac{\norm{\vx-y\rmU\vmu_t}^2}{2\sigma^2}
                \right)
            }
            {
                \exp\left(
                -\tfrac{\norm{\vx-y\rmV\vmu_t}^2}{2\sigma^2}
                \right)
            }
        \right) \\
        &= 
        \E_{y}
        \E_{\vx = \rmU\vmu_t + \sigma\veps}
        \ln \left(
            \tfrac{
                \exp\left(
                -\tfrac{\norm{\vx-\rmU\vmu_t}^2}{2\sigma^2}
                \right)
            }
            {
                \exp\left(
                -\tfrac{\norm{\vx-\rmV\vmu_t}^2}{2\sigma^2}
                \right)
            }
        \right) \\
        &= 
        \E_{\vx = \rmU\vmu_t + \sigma\veps}
        \ln \left(
            \tfrac{
                \exp\left(
                \vx^T\rmU\vmu_t/\sigma^2
                \right)
            }
            {
                \exp\left(
                \vx^T\rmV\vmu_t/\sigma^2
                \right)
            }
        \right) \\
        &= 
        \E_{\vx = \rmU\vmu_t + \sigma\veps}
        \left[
            \vx^T\rmU\vmu_t/\sigma^2
            -
            \vx^T\rmV\vmu_t/\sigma^2
        \right]\\
        &= 
            \vmu_t^T\rmU^T\rmU\vmu_t/\sigma^2
            -
            \vmu_t^T\rmU^T\rmV\vmu_t/\sigma^2 
        \\
        &= 
        \tfrac{1
            -
            \cos(\alpha)}{\sigma^2},
    \end{align}
    which proves the required result.
\end{proof} 

\newtheorem*{retheorem_fano}{Theorem~\ref{modified_fano}}
\begin{retheorem_fano}[Fano's Theorem for Randomized Algorithms]
    Under the notation established above, let $\gV$ be an index set of finite cardinality of some chosen subset of $\gP$. Then, we define $\gP_\gV \coloneqq \{ P_v \mid \forall v \in \gV \}$, and $\gF_\gV \coloneqq \{ \theta^\star(P_v) \mid \forall v \in \gV\}$.
    For some fixed parameter $\delta > 0$, let $\rho$ satisfy the condition that,
    for all $f_u \neq f_v \in \gF_\gV$ and $f \in \gF$, 
    if $\rho(f, f_u) < \delta$, then $\rho(f,f_v) > \delta$.
    And, for all $P_u, P_v \in \gP_\gV$, $u \neq v$, 
    let the KL divergence satisfy $\text{KL}(P_u \parallel P_v) \le D$ for some $D > 0$. Then,
    \begin{align*}
        \mathfrak{M}_n(\Theta) \ge 
        \delta 
        \left(
            1 - \tfrac{nD +\ln(2)}{\ln(\mid \gV \mid)} 
        \right).
    \end{align*}
\end{retheorem_fano}
\label{proof_modified_fano}
    \begin{proof}
        From the definition of minimax risk,
        \begin{align*}
             \mathfrak{M}_n
            (
                {\Theta}
            )
            &=
            \inf_{
                {\theta} \in \Theta
            }
            \sup_{P \in \mathcal{P}}
            \E_{S^n \sim P^n, \vxi \sim P(\Xi)}
            \left[
                \rho(
                    \theta(S^n, \vxi),
                    \theta^{\star}(P)
                )
            \right],
            \\
            &\ge
            \inf_{
                {\theta} \in \Theta
            }
            \sup_{P \in \gP_\gV}
            \E_{S^n \sim P^n, \vxi \sim P(\Xi)}
            \left[
                \rho(
                    \theta(S^n, \vxi),
                    \theta^{\star}(P)
                )
            \right],
            \\
            &=
            \inf_{
                {\theta} \in \Theta
            }
            \sup_{Q \in \gQ_\gV^n}
            \E_{(S^n, \vxi) \sim Q}
            \left[
                \rho(
                    \theta(S^n, \vxi),
                    \theta^{\star}(Q)
                )
            \right],
            \label{minimax_q_eq}
        \end{align*}
    where 
    $
        \gQ_{\gV}^n \coloneqq 
        \{ 
            Q(S^n,\vxi) \coloneqq P^n(S^n)*P_{\Xi}(\vxi)
            \mid
            P \in \gP_\gV
        \} 
    $,
    and
    we overload the target mapping notation and set $\theta^\star(Q) = \theta^\star(P)$, 
    where $P$ is the distribution corresponding to $Q$. 
    First, observe that for all $Q_u, Q_v \in \gQ_\gV^n$, $u \neq v$, the KL divergence between the two distributions is given by,
    \begin{align*}
        \text{KL}(Q_u \parallel Q_v)
        &=
        \E_{(S^n, \vxi) \sim Q_u}
        \tfrac{
            Q_u(S^n, \vxi)
        }{
            Q_v(S^n, \vxi)
        }
        =
        \E_{S^n \sim P_u^n, \vxi \sim P_\Xi}
        \tfrac{
            P_u^n(S^n) P_{\Xi}(\vxi)
        }{
            P_v^n(S^n) P_{\Xi}(\vxi)
        },
        \\
        &=
        \E_{S^n \sim P_u^n}
        \tfrac{
            P_u^n(S^n)
        }{
            P_v^n(S^n)
        }
        = 
        \text{KL}(P_u^n \parallel P_v^n) 
        = n\text{KL}(P_u \parallel P_v)
        = nD.
    \end{align*}
    We follow in the footsteps of the proof of Fano's Theorem [Prop 7.3 \cite{duchi}].
    For any $Q \in \gQ_\gV^n$, 
    \begin{align*}
        \E_{Q} [\rho(\theta_n,\theta^\star(Q))]
        \ge
        \E_{Q}[\delta \, \1\{\rho(\theta_n,\theta^\star(Q)) \ge \delta\}]
        \ge
        \delta \, \sP[\rho(\theta_n,\theta^\star(Q)) \ge \delta].
    \end{align*}
    We define the testing function,  
    $\Psi \colon \gF \rightarrow \gV$ as, 
    \begin{align*}
        \Psi(f) \coloneqq
        \argmin_{v \in \gV}
        \{
            \rho(f, \theta^\star(Q_v))
        \},
    \end{align*}
    where ties can be broken arbitrarily and the analysis would still hold. 
    Let $\vv$ be the uniform random variable over $\gV$.
    Recall the assumption on $\rho$ that,
    if $\rho(f, Q_u) < \delta$, then $\rho(f, Q_v) > \delta$,
    \begin{align*}
        \sup_{Q \in \sQ_{\gV}^n}
        \sP[\rho(\theta_n, \theta^\star(Q)) \ge \delta]
        &\ge
        \frac{1}{\mid \gV \mid}
        \sum_{v \in \gV} 
        \sP[
            \rho(\theta_n, \theta^\star(Q_v)) \ge \delta
            \mid
            \vv = v
        ],
        \\
        &\ge
        \frac{1}{\mid \gV \mid}
        \sum_{v \in \gV} 
        \sP[
            \Psi(\theta_n) \neq v
            \mid
            \vv = v
        ],
        \\
        &\ge
        \inf_{\Psi} \sP[
            \Psi(\theta_n) \neq \vv
        ].
    \end{align*}
    From the above, \ref{minimax_q_eq}, and Prop 7.10 and Eq 7.4.5 from \cite{duchi}, we have the result,
    \begin{align*}
        \mathfrak{M}_n
            (
                {\Theta}
            )
        \ge
        \delta
        \left(
            1 - \tfrac{nD + \ln(2)}{\ln(|\gV|)}
        \right).
    \end{align*}
    \end{proof}

\newpage

\newpage

\section{FCNs vs LCNs Separation Results}
\label{fnn_lower_bound}
\subsection{FCN SSD Lower Bound}
\begin{lemma}
\label{fcn-gaussian-mean-lb}
    Let $S^n \sim (\text{SSD}_1)^n$ be $n$ i.i.d. data samples drawn from $\text{SSD}_1$.
    Define the equivariance group $\gU \coloneqq \gO(kd)$.
    Define the subset 
    $
        \tilde{\gU} \subseteq \gU
    $ such that, 
    for all $\rmU \in \tilde{\gU}$, $t \in \{2,\ldots,k\}$,
    $\rmU\vmu_t = e_{kd-k+t}$.
    Let $\gP \coloneqq \{\rmU \circ P | \rmU \in \tilde{\gU}\}$ be the set of problem distributions.
    Let $\Xi$ be the sample space that encapsulates algorithmic randomness, and
    $P_{\Xi}$ be a distribution over $\Xi$.
    Let 
    $
        \Theta
        \coloneqq 
        \{
            \theta 
            \colon 
            (\gX^m, \Xi) 
            \rightarrow 
            \sS^{kd - 1}
        \}
    $
    be the set of $\gU$-equivariant randomized algorithms that estimate the mean of the input distribution using $n$ i.i.d. samples. If,
    \begin{align}
         \inf_{\theta_n \in \Theta}
         \sup_{\rmU \in \tilde{\gU}}
         \E_{S^n \sim (\rmU \circ P)^n, \xi \sim P_{\Xi}}
         \norm{
         \theta_n - \rmU\vmu_1
         }
         \ge 0.25,
    \end{align}
    then $n = \Omega(\sigma^2 kd)$, for large enough $k,d$.
\end{lemma}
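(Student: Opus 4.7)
The plan is to apply the modified Fano inequality (Theorem~\ref{modified_fano}) to a hard family of $\text{SSD}_1$ instances built from a packing on the effective sphere. First observe that since each $\rmU \in \tilde\gU$ is orthogonal and sends $\vmu_t \mapsto \ve_{kd-k+t}$ for every $t \ge 2$, the vector $\rmU\vmu_1$ is forced to lie in the subspace $V \coloneqq \mathrm{span}\{\ve_{kd-k+2}, \ldots, \ve_{kd}\}^{\perp}$, and every unit vector of $V$ is realized by some such $\rmU$ (obtained by extending any orthonormal basis of $V$ to one of $\sR^{kd}$). For large $k,d$ we have $\dim V = kd - k + 1 = \Theta(kd)$, so recovering $\rmU\vmu_1$ is really Gaussian-mean estimation on a $\Theta(kd)$-dimensional sphere.

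Next, I would invoke the Gilbert--Varshamov packing (Corollary~\ref{gilbert-bound-corollary}) inside $V$ with constant $c = 1/2$ to obtain a set $\{\vu_1, \ldots, \vu_M\} \subseteq V \cap \sS^{kd-1}$ with $\vu_i^\top \vu_j < 1/2$ for $i \ne j$ and $\ln M = \Omega(kd)$. For each $\vu_j$, fix a realizing $\rmU_j \in \tilde\gU$ and set $P_j \coloneqq \rmU_j \circ \text{SSD}_1$ with target $\theta^\star(P_j) = \rmU_j\vmu_1 = \vu_j$; this gives the index set for Theorem~\ref{modified_fano} via Remark~\ref{fano_subset_remark}. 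Lemma~\ref{ssd_kl} then yields the per-sample KL bound $\mathrm{KL}(P_i \parallel P_j) = (1 - \vu_i^\top\vu_j)/\sigma^2 \le 1/(2\sigma^2)$, so one sets $D = 1/(2\sigma^2)$.

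Finally, with $\rho(f,g) = \norm{f-g}$ and $\delta = 1/2$, I would verify the weakened semi-metric condition: if $\norm{f-\vu_i} < 1/2$, then by the triangle inequality $\norm{f-\vu_j} > \norm{\vu_i-\vu_j} - 1/2 > \sqrt{2(1-c)} - 1/2 = 1/2$, as needed. Theorem~\ref{modified_fano} then produces
\[
    \mathfrak{M}_n \ge \tfrac{1}{2}\left(1 - \tfrac{n/(2\sigma^2) + \ln 2}{\Omega(kd)}\right),
\]
which exceeds $1/4$ whenever $n = o(\sigma^2 kd)$. By contrapositive, once the minimax risk hits the $1/4$ threshold, we must have $n = \Omega(\sigma^2 kd)$, which is exactly the claim.

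The main obstacle is conceptual: one must confirm that the $\gO(kd)$-equivariance, together with the fixing of the $k-1$ ``other'' mean locations $\rmU\vmu_2, \ldots, \rmU\vmu_k$, still leaves a full $(kd-k+1)$-dimensional sphere of choices for $\rmU\vmu_1$. This is exactly what drives the exponential packing size $\ln M = \Omega(kd)$ and the factor of $kd$ in the final bound. A minor technical subtlety is that the $\ell_2$ distance is not a useful semi-metric on all of $\gF$ for classical Fano; the relaxation in Theorem~\ref{modified_fano} to a one-sided separation on the packing (verified above by the triangle inequality) is precisely what lets the argument go through without the stronger semi-metric hypothesis.
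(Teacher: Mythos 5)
Your high-level strategy is the same as the paper's: apply the modified Fano inequality (Theorem~\ref{modified_fano}) to a Gilbert--Varshamov packing of the feasible sphere of means and control the KL divergence via Lemma~\ref{ssd_kl}. You also correctly identify that fixing $\rmU\vmu_t = \ve_{kd-k+t}$ for $t \ge 2$ leaves a $(kd-k+1)$-dimensional sphere of candidate values for $\rmU\vmu_1$, which is the source of the $kd$ factor.

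There is, however, a concrete error in your KL step: from $\vu_i^\top\vu_j < 1/2$ you get $1 - \vu_i^\top\vu_j > 1/2$, so Lemma~\ref{ssd_kl} gives $\text{KL}(P_i \parallel P_j) > 1/(2\sigma^2)$, not $\le 1/(2\sigma^2)$. Fano needs an \emph{upper} bound on KL, and a packing condition (an upper bound on pairwise inner products) by itself supplies the wrong side of that inequality. This is precisely why the paper runs through the seemingly convoluted chain $\gS \supset \gS_1 \cong \gS_2 \supset \gS_3 \cong \gS_4 \cong \gS_5$: by freezing half of the coordinates at a common positive value and only packing the remaining half, it forces every pair of packing elements to have $\vu_i^\top\vu_j \ge 1/2$, which is what yields $\text{KL} \le 1/(2\sigma^2)$. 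Your direct packing of the full sphere is salvageable---the binary-vector construction behind Corollary~\ref{gilbert-bound-corollary} in fact produces nonnegative pairwise inner products, so $D = 1/\sigma^2$ is a valid upper bound and the final $\Omega(\sigma^2 kd)$ survives with a slightly worse constant---but as written the claimed $D = 1/(2\sigma^2)$ does not follow from the stated packing property.

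One smaller misattribution at the end: on $\sS^{kd-1}$ with $\rho(f,g)=\norm{f-g}$, $\rho$ \emph{is} a metric, so the weakened separation condition of Theorem~\ref{modified_fano} is automatic here (the paper notes this explicitly). In this particular lemma what Theorem~\ref{modified_fano} buys over classical Fano is the handling of \emph{randomized} algorithms, not relief from a failed semi-metric; the semi-metric relaxation is what matters in the LCN lower bound, not here.
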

\begin{proof}
    We will prove this statement using Fano's Theorem for Randomized Algorithms \ref{modified_fano}.
    Observe that since $\norm{\cdot}$ is already a metric, the relaxed semi-metric property holds for all $\delta$. 
    
    We begin by constructing a $2\delta$, $\delta=0.25$, packing of the set of means $\gS = \{\rmU \vmu_1 \mid \rmU \in \tilde{\gU}\}$.   
    Observe from the construction of $\tilde{\gU}$ that, 
    \begin{align*}
        \gS &\supset 
        \gS_1 \coloneqq \{\vu \mid \vu \in \sR^{kd}, \norm{\vu}=1, \vu \in \text{Span}(\{\ve_1, \cdots, \ve_{kd-k}\})\}
        \\
        &\cong  \gS_2 \coloneqq \{\vu \mid \sR^{kd-k}, \norm{\vu}=1\},
        \\
        &\supset \gS_3 \coloneqq \{\vu \mid \sR^{kd-k}, \norm{\vu}=1, \vu[j] = \tfrac{1}{\sqrt{kd-k}}, j \in [\tfrac{kd-k}{2}]\},
        \\
        &\cong \gS_4 \coloneqq \{\vu \mid \sR^{\tfrac{kd-k}{2}}, \norm{\vu}=\tfrac{1}{2}\},
    \end{align*}
    where $\cong$ denotes the fact that $\gS_1$,$\gS_2$, and $\gS_3$, $\gS_4$ are isometric sets under the Euclidean norm. 
    Therefore, it is enough to find a $2\delta$ packing of $\gS_4$ to find a $2\delta$ packing of $\gS_1$. 
    Now, define the set $\gS_5 \coloneqq \{\vu \mid \sR^{\tfrac{kd-k}{2}}, \norm{\vu}=1\}$, and observe that $(\gS_4, 2*\norm{\cdot})$ and $(\gS_5, \norm{\cdot})$ are isometric. Therefore, it is enough to find a $4\delta$ packing of $\gS_5$. 
    
    Now observe that for any $\vu, \vv \in \gS_5$, $\norm{\vu - \vv} \ge 4*0.25 \iff \vu^T\vv \le \tfrac{1}{2}$. Therefore, for large enough $k,d$, by Corollary \ref{gilbert-bound-corollary}, we have that the size ($N$) of a $2\delta$ packing of $\gS$ satisfies,
    \begin{align}
        \ln(N) \ge
        0.15kd .
    \end{align}
    Note from Lemma \ref{ssd_kl}, that the KL divergence between any two distinct distributions, $P,Q$, corresponding to the $2\delta$ packing satisfies $\text{KL}(P \parallel Q) \le \tfrac{1}{2\sigma^2}$.
    Applying Fano's Theorem for Randomized Algorithms \ref{modified_fano}, we get,
    \begin{align}
        \mathfrak{M}_n(\Theta) \ge 
        0.25 
        \left(
            1 - \tfrac{n/2\sigma^2 +\ln(2)}{0.15 kd} 
        \right),
    \end{align}
    which implies that $n = \Omega(\sigma^2 kd)$, completing the proof.
\end{proof}

\begin{theorem}
    Let $\gF$ denote the class of functions represented by the set of fully connected neural network models, $\gM_\gF[\gW],$ as defined in \ref{fnn_param}. 
    Let $S^n \sim (\text{SSD}_1)^n$ be the $n$ i.i.d. data samples drawn from $\text{SSD}_1$, with $\sigma = \tilde{O}(1/\sqrt{k})$, and $k = O(\exp(d))$.
    Define the equivariance group $\gU \coloneqq \gO(kd)$.
    Define the subset 
    $
        \tilde{\gU} \subseteq \gU
    $ such that, 
    for all $\rmU \in \tilde{\gU}$, $t \in \{2,\ldots,k\}$,
    $\rmU\vmu_t = e_{kd-k+t}$.
    Let $\xi \in \Xi$ encapsulate the randomization, and let
    $\xi \sim P_{\Xi}$. 
    Let 
    $
    \Theta = 
    \{ 
        {\theta} 
        \mid
        {\theta} \colon 
        ((\gX,\gY)^{n} \times 
        \Xi) \rightarrow \gF 
    \}
    $
    be the set of ${\gU}$-equivariant algorithms, such that $b^T = b_{\min} \coloneqq 10^{-2}$,  then 
    for large enough $k,d$,
    \begin{align}
        \inf_{\theta \in \Theta}
        \sup_{\rmU \in \tilde{\gU}}
        \E_{\vxi \sim P_{\Xi}}
        \E_{S^{m} \sim (\rmU \circ \text{SSD}_1)^{m}}
        \left[
            R
            \left(
                \theta(
                    S^{m},
                    \vxi
                ),
                \rmU \circ \text{SSD}_1
            \right)
        \right] \le \delta,
    \end{align}
    iff $n = \Omega(\sigma^2kd)$, for $\delta = 0.5 \times 10^{-2}$.
\end{theorem}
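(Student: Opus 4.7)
The plan is a reduction from FCN classification of $\rmU \circ \text{SSD}_1$ down to mean estimation on the same distribution, so as to invoke Lemma \ref{fcn-gaussian-mean-lb}. Concretely, from any $\gU$-equivariant learner $\theta \in \Theta$ achieving expected risk at most $\delta = 5 \times 10^{-3}$ on $n$ samples, I will build a $\gU$-equivariant mean estimator $\hat{\vmu}(S^{Tn}, \vxi) \in \sS^{kd-1}$ satisfying $\sup_{\rmU \in \tilde{\gU}} \E\|\hat{\vmu} - \rmU\vmu_1\| \le 1/4$, where $T$ is a constant independent of $k, d$. Lemma \ref{fcn-gaussian-mean-lb} then forces $Tn = \Omega(\sigma^2 kd)$, hence $n = \Omega(\sigma^2 kd)$, which establishes the necessity direction of the iff. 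The sufficiency direction is immediate by placing a labeled-sample-mean estimate of $\rmU\vmu_1$ (which concentrates at rate $\sigma\sqrt{kd/n}$) as the first weight vector of the FCN with remaining weights zeroed and bias $b_{\min}$, yielding risk $O(1/k + b_{\min}^2) \le \delta$ once $n = \Omega(\sigma^2 kd)$.

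The reduction proceeds in two stages. In the weak-extraction stage, the risk bound combined with the oddness of $\phi_b$ and Gaussian symmetry gives $\sum_i \E[\phi_b(\vw_i^T \vx) \cdot y] \ge (1 - \delta)/2$. In the regime $\sigma = \tilde{O}(1/\sqrt{k})$, the per-unit contribution equals, up to exponentially small noise corrections, the soft-threshold value $\phi_b(\alpha_i)$ with $\alpha_i = \vw_i^T \rmU\vmu_1$. Since $\phi_b(\alpha_i) = 0$ for $|\alpha_i| \le b_{\min}$, any contributing unit must have $|\alpha_i| \gtrsim b_{\min}$, and the resulting constraint $\sum_i (|\alpha_i| - b_{\min})_+ \gtrsim 1$ implies $\max_i |\alpha_i| \ge \Omega(b_{\min})$, which is a \emph{constant} in $k$ and $d$. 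A $\gU$-equivariant selector identifies this unit by the empirical correlation $\hat{\rho}_i = n^{-1} \sum_j y_j \phi_b(\vw_i^T \vx_j)$, taking $i^\star = \arg\max_i |\hat{\rho}_i|$ and setting $\tilde{\vmu} = \mathrm{sgn}(\hat{\rho}_{i^\star}) \vw_{i^\star} / \|\vw_{i^\star}\|$. In the boosting stage, I run $\theta$ independently $T = O(1/b_{\min}^2)$ times on disjoint batches with fresh randomness, and aggregate the weak estimates by $\hat{\vmu} = \bar{\vmu}/\|\bar{\vmu}\|$ with $\bar{\vmu} = T^{-1}\sum_t \tilde{\vmu}_t$. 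Since $\gU$-equivariance forces $\E[\tilde{\vmu}_t]$ to be parallel to $\rmU\vmu_1$ with magnitude $\Omega(b_{\min})$, the orthogonal fluctuation satisfies $\E\|\bar{\vmu}^\perp\|^2 \le 1/T$, so a constant $T$ suffices to push the alignment after normalization above $0.97$, i.e., to reach the required Euclidean accuracy.

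The crux is ensuring the weak alignment in the first stage is a true constant in $k$ and $d$. This relies on the fixed bias $b_{\min} = 10^{-2}$, which hard-thresholds contributing units at a constant level, together with the low-noise regime $\sigma = \tilde{O}(1/\sqrt{k})$, which keeps the Gaussian smoothing of $\phi_b$ within a negligible window of the threshold. If either assumption were loosened, the worst-case weak alignment could degrade polynomially in $k$, forcing a polynomial boosting factor and weakening the final bound to $n = \Omega(\sigma^2 d)$ rather than $\Omega(\sigma^2 kd)$. A secondary bookkeeping task is verifying $\gU$-equivariance of the composite map; this follows since empirical correlation, argmax over a $\gU$-stable index set, sign, normalization, and averaging each commute with the orthogonal $\gU$-action on inputs and correspondingly-transformed weights, as established in the sketch of Theorem \ref{dsd_fnn_sketched_thorem}.
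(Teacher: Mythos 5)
Your overall skeleton is the same as the paper's: reduce the FCN lower bound to the Gaussian mean-estimation lower bound of Lemma~\ref{fcn-gaussian-mean-lb}, by extracting a weakly aligned direction from the trained network, boosting it over $T = O(1/b_{\min}^2)$ independent runs, averaging, and projecting to the sphere; equivariance is used to argue that the orthogonal part of the weak estimate has zero mean. This matches the paper. However, there are two concrete gaps in your weak-extraction stage that the paper's proof addresses and you elide.

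First, your extraction argument is stated entirely in expectation: you derive $\E\bigl[\sum_i y\,\phi_b(\vw_i^T\vx)\bigr]\ge(1-\delta)/2$ and then slide to "$\max_i|\alpha_i|\ge\Omega(b_{\min})$." But the $\alpha_i$ are random (they depend on the training draw and algorithmic randomness), and the quantity $\sum_i\E_\eps[\phi_b(\alpha_i+\sigma\|\vw_i\|\eps)]$ can be as large as $k$, so a mean of $1/2$ does not by itself give a constant-probability event that some $|\alpha_i|$ exceeds a constant. The boosting calculation $\E\|\bar{\vmu}^\perp\|^2\le 1/T$ does nothing unless you separately guarantee that each $\tilde{\vmu}_t$ has $\Omega(b_{\min})$ expected projection onto $\rmU\vmu_1$; for that you need a per-run success guarantee holding with probability bounded away from zero. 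The paper fixes this by first applying Markov to the expected risk to obtain $\Prob[R(\bar\theta_n,\rmU\circ P)\ge 0.5]\le 2\delta$, and only then arguing (deterministically, from risk $<0.5$) that some node achieves $\cos(\alpha_i)\ge b_{\min}/2$; that is the step that turns "in expectation" into "with constant probability."

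Second, your selector $i^\star=\argmax_i|\hat\rho_i|$ with $\hat\rho_i=n^{-1}\sum_j y_j\phi_b(\vw_i^T\vx_j)$ is computed on the \emph{same} batch used to train the network. Since the $\vw_i$ may overfit that batch (the algorithm is arbitrary equivariant and could set some $\vw_i$ to be essentially a training example), a large training correlation $\hat\rho_i$ need not indicate large population alignment $\alpha_i$, and the selected unit may carry no signal. You would then be unable to certify $\E[\tilde{\vmu}_t]^T\rmU\vmu_1=\Omega(b_{\min})$. The paper sidesteps this with a held-out identification step: each of the $T$ sections uses $n$ samples to train and one \emph{fresh} sample $(\vx,y)$ for identification (push $y\vx$ through the network and take the first positive node), so the identification sample is independent of $\{\vw_i\}$ and the probability the returned node has $\alpha_i\ge b_{\min}/2$ is controlled by Gaussian tails. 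Incorporating a fresh identification sample per section, and replacing your expectation argument with the Markov-then-structure argument, would bring your proof into line with the paper's; without those two repairs the reduction is not sound as written.
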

\begin{proof}
    The proof proceeds by reducing the problem of finding a fully-connected neural network with a small expected risk to a problem of estimating the unknown mean of a Gaussian distribution. We then use Lemma \ref{fcn-gaussian-mean-lb} to establish the required sample complexity bound.

    For brevity, we refer to the distribution $\text{SSD}_1$ by $P$.
    If any algorithm $\bar{\theta}_n \in \Theta$ achieves the maximum expected risk of $\delta$, then we have,
    \begin{align}
        \sup_{\rmU \in \tilde{\gU}}
        \E
        \left[
            R\left(
                \Bar{\theta}_n,
                \rmU \circ P
            \right)
        \right]
        \le \delta
        &\implies
        \forall \: \rmU, \:
        \E
        \left[
            R\left(
                \Bar{\theta}_n,
                \rmU \circ P
            \right)
        \right]
        \le \delta
        \\
        &\overset{\text{Markov}}{\implies}
        \forall \: \rmU, \:
        \mathbb{P}
        \left[
            R\left(
                \Bar{\theta}_n,
                \rmU \circ P
            \right)
            \ge 0.5
        \right]
        \le 2\delta
    \end{align}
    Let the parameter vector of the fully connected neural network (FCN) that is returned by the algorithm $\Bar{\theta}_n$ be given by $\vv = [\vw_1, .., \vw_k, b]$. We define $\cos(\alpha_i) = (\rmU\vmu_1)^T\vw_i$ as the alignment between the mean of the $\rmU \circ P$ distribution and the parameter $\vw_i$. 
    Then, the following holds:
    \begin{align}
        R\left(
            \Bar{\theta}_n,
            \rmU \circ P
        \right)
        < 0.5
        \implies 
        \exists \: i \in [k],\:
        \cos(\alpha_i) \ge b/2.
    \end{align}
    We prove this via contradiction. 
    For an i.i.d. data sample $(\vx,y) \sim \rmU \circ P$, 
    consider the the push-forward of the sample $y\vx$ through the FCN. 
    If for all $\: i \in [k],\: \cos(\alpha_i) < b/2$, then the probability that the for each FCN node, its push-forward is $<0$, is given by $\ge 1- k\Phi(-b_{\min}/2\sigma) \coloneqq 1 - p$, where $p$ can be made arbitrarily small for large enough $k,d$.
    Therefore $(y-\sum_{i=1}^k \phi_b(\vw_i^T\vx))^2 \ge 1$ with probability $\ge 1-p$, which results in a contradiction, because the expected risk is $\le \delta$.
    
    \textbf{Mean estimation minimax problem}\\
    Consider the following problem:
    Let $\gP \coloneqq \{\rmU \circ P | \rmU \in \tilde{\gU}\}$ be the set of distributions. Let $m = (n+1)100/b_{\min}^2$ be the number of samples.
    Let $\Xi_1$ be the sample space that encapsulates algorithmic randomness, and
    $P_{\Xi_1}$ be a distribution over $\Xi_1$.
    Let 
    $
        \Theta_1 
        \coloneqq 
        \{
            \theta 
            \colon 
            (\gX^m, \Xi_1) 
            \rightarrow 
            \sS^{kd - 1}
        \}
    $
    be the set of $\gU$-equivariant randomized algorithms that estimate the mean of the input distribution using $m$ i.i.d. samples. Then the minimax problem is,
    \begin{align}
         \inf_{\theta_m \in \Theta_1}
         \sup_{\rmU \in \tilde{\gU}}
         \E_{S^m \sim (\rmU \circ P)^m, \xi \sim P_{\Xi_1}}
         \norm{
         \theta_m - \rmU\vmu_1
         }
    \end{align}
    We will now propose an $\gU$-equivariant algorithm $\hat{\theta}_m$ for the above problem, that uses the algorithm $\Bar{\theta}_n$ as a subroutine such that it achieves a constant max error of $1/4$,
    \begin{align}
         \sup_{\rmU \in \gU}
         \E_{S^m \sim (\rmU \circ P)^n, \xi \sim P_{\Xi_1}}
         \norm{
         \hat{\theta}_m - \rmU\vmu_1
         }
         \le 
         1/4.
    \end{align}
    
    \textbf{Identification procedure}\\
    Before we define $\hat{\theta}_m$, we provide a method, which given a FCN, can \textit{identify} (one of) the parameter $\vw_i$ such that $\cos(\alpha_i) \ge b_{\min}/2$, with probability $\ge 1-p$, if there exists such a parameter. 
    Otherwise, the method returns nothing, indicating that such a parameter does not exist, and this indication is correct with probability $\ge 1-p$.
    The method is to push-forward an i.i.d. data sample $(\vx,y) \sim \rmU \circ P$, through the neural network, and return the parameter corresponding to the first node whose output was positive. 
    In the first case, the probability that none of the nodes with $\cos(\alpha_i) \ge b_{\min}/2$ have a positive output, or any nodes with $\cos(\alpha_i) < b_{\min}/2$ has a positive output is $\le k\Phi(-b_{\min}/2\sigma) = p$. 
    In the second case, the probability that none of the nodes with $\cos(\alpha_i) < b_{\min}/2$ have a positive output is also $\ge 1- k\Phi(-b_{\min}/2\sigma) = 1 - p$. This concludes the proof.

    \textbf{$\hat{\theta}_m$ definition and analysis}\\
    The algorithm $\hat{\theta}_m$, divides the data into $S=1000/b_{\min}^2$ sections, each of size $n+1$. 
    In each section, $s$, it runs the algorithm $\Bar{\theta}_{n}$ on $n$ training samples, and uses the remaining sample to identify the parameter $\vw_s$, such that $\vw_s^T\rmU\vmu_1 \ge b_{\min}/2$ using the procedure outlined above. If this procedure returns nothing, then we set $\vw_s = \mathbf{0}$. Finally, it projects the sum of these $s$ vectors to the unit sphere, $\hat{\vmu} = \tfrac{\sum_s \vw_s}{\norm{\sum_s \vw_s}}$. In case, the sum $\sum_s \vw_s=\mathbf{0}$, the algorithm returns $\ve_1$.
    
    \underline{Analysis:} We now show that for all $\rmU$, 
    $\E
     \norm{
     \hat{\vmu} - \rmU\vmu_1
     } \le 1/4
    $.
    We begin with the observation that if the identification procedure did not fail, then the random variable $\vw_s$ can be written as,
    \begin{align}
        \vw_s= \lambda \vmu_1 + \sqrt{1-\lambda^2} \vmu_1^\perp,
    \end{align}
    where $\lambda, \vmu_1^\perp$ are both random variables, such that $\lambda \ge b$, and $\vmu_1^T\vmu_1^\perp = 0$. 
    We define the $kd-2$ dimensional unit sphere $\gS = \{\vu | \vu^T\vmu_1^\perp = 0, \norm{\vu}=1 \}$. Then, 
    we claim that $\vmu_1^\perp \sim \text{Uniform}(\gS)$. 
    To see this, consider two points $\vu, \vv \in \gS$. Then, there exists a $\rmU_1 \in \gU$, such that $\rmU_1\vmu_1 = \vmu_1$, and $\rmU_1\vu = \vv$. Now consider running $\hat{\theta}_m$ on the input data $\{\vx_i,y_i\}_m \sim \rmU \circ P$ such that $\vmu_1^\perp = \vu$. Instead, if we were to run $\hat{\theta}_m$ on $\{\rmU_1\vx_i,y_i\}_m \sim \rmU_1\rmU \circ P$, then observe that $\vmu_1^\perp = \vv$. Therefore, $\sP_{\rmU}[\vmu_1^\perp = \vu] = \sP_{\rmU_1\rmU}[\vmu_1^\perp = \vv]$.
    Also, note that $\rmU\rmU_1 \circ P = \rmU \circ P$, because Gaussian is an equivariant distribution. Therefore,
    $\sP_{\rmU}[\vmu_1^\perp = \vu] = \sP_{\rmU}[\vmu_1^\perp = \vv]$. Since $\rmU,\vu,\vv$ were arbitrary, this proves the claim that $\vmu_1^\perp \sim \text{Uniform}(\gS)$.

    Let $\tilde{m}$ be the number of sections for which $\vw_s \neq \mathbf{0}$. By Chernoff bound on the Bernouilli random variable corresponding to the event 
    $R\left(
        \Bar{\theta}_n,
        \rmU \circ P
    \right)
    \ge 0.5$, 
    and the Identification procedure analysis, $\tilde{m} \ge 100/b_{\min}^2$ with probability $\ge 1 - \exp(-450/b_{\min}^2) - 1000p/b_{\min}^2 \ge 1 - 10^{-10}$. Now observe,
    \begin{align}
        \vmu_1^T
        \left(
            \tfrac{\sum_{\vw_s \neq \mathbf{0}} \vw_s}{\tilde{m}}
        \right)
        &=
        \vmu_1^T
        \left(
            \tfrac{\sum_{\vw_s \neq \mathbf{0}} \lambda_s \vmu_1 + \sqrt{1-\lambda_s^2} \vmu_{1,s}^\perp}{\tilde{m}}
        \right)
        \\
        &\ge
        b_{\min}
        +
        \vmu_1^T
        \tfrac{
            \sum_{\vw_s \neq \mathbf{0}}
            \sqrt{1-\lambda_s^2} \vmu_{1,s}^\perp
        }
        {\tilde{m}}
         \\
        &\ge
        b_{\min}
        +
        \vmu_1^T
        \bigg(
            \tfrac{1}{\tilde{m}}
            \sum_{\vw_s \neq \mathbf{0}}
            \sqrt{1-\lambda_s^2}
            \tfrac{
                 \veps_{s}
            }
            {\norm{\veps_{s}}}
        \bigg),
    \end{align}
    where $\veps_s \sim \gN(\mathbf{0},\rmI_{kd})$. By the concentration of Gaussian norm, $0.5\sqrt{kd} \le \norm{\veps_{s}} \le 2\sqrt{kd}$, for all $s$, with probability $\ge 1 - 10^{-10}$, for large enough $k,d$.
    \begin{align}
        \vmu_1^T
        \left(
            \tfrac{\sum_{\vw_s \neq \mathbf{0}} \vw_s}{\tilde{m}}
        \right)
        &\ge
        b_{\min}
        +
        \vmu_1^T
        \bigg(
            \tfrac{1}{\tilde{m}}
            \sum_{\vw_s \neq \mathbf{0}}
            \sqrt{1-\lambda_s^2}
            \tfrac{
                 \vmu_1^T\veps_{s}
            }
            {\norm{\veps_{s}}}
        \bigg),
        \\
        &\ge
        b_{\min}
        +
        \bigg(
            \tfrac{1}{\tilde{m}}
            \sum_{\vw_s \neq \mathbf{0}}
            \sqrt{1-\lambda_s^2}
            \tfrac{
                 \eps_{s}
            }
            {\norm{\veps_{s}}}
        \bigg),
        \\
        &\ge
        b_{\min}
        +
        \bigg(
            \tfrac{1}{\tilde{m}}
            \sum_{\vw_s \neq \mathbf{0}}
            \sqrt{1-\lambda_s^2}
            \tfrac{
                 \eps_{s}
            }
            {\norm{\veps_{s}}}
        \bigg)
        \ge 0.99b_{\min},
    \end{align}
    with probability $\ge 1-10^{-10}$, for large enough $k,d$, using the Gaussian CDF. Also, we analyze,
    \begin{align}
        \tfrac{1}{\tilde{m}}\norm{\sum_{\vw_s \neq \mathbf{0}} \vw_s}
        &\le
        \left\|
            \tfrac{\sum_{\vw_s \neq \mathbf{0}} \lambda_s \vmu_1 + \sqrt{1-\lambda_s^2} \vmu_{1,s}^\perp}{\tilde{m}}
        \right\|
        \\
        &\le
        b_{\min} +
        \left\|
            \tfrac{\sum_{\vw_s \neq \mathbf{0}} 
            \sqrt{1-\lambda_s^2} \veps_{1,s}^\perp}{\norm{ \veps_{1,s}}\tilde{m}}
        \right\|
        \\
        &\le
        b_{\min} +
        \left\|
            \tfrac{\sum_{\vw_s \neq \mathbf{0}} 
            2\sqrt{1-\lambda_s^2} \veps_{1,s}^\perp}{\sqrt{kd}~\tilde{m}}
        \right\|
        \\
        &\le
        b_{\min} +
        \left\|
            2\tfrac{\veps}{\sqrt{kd}~\sqrt{\tilde{m}}}
        \right\|
        \le 
         b_{\min} +
         0.4b_{\min}
         \le 1.4 b_{\min}.
    \end{align}
    Combining the dot-porduct and norm analyses, $\hat{\vmu}^T\vmu_1 \ge 0.7$ with probability $\ge 1 - 10^{-9}$. Therefore the expected risk of $\hat{\theta}_m$ is,
    \begin{align}
        \E
         \norm{
         \hat{\theta}_m - \rmU\vmu_1
         }
         \le
        \sqrt{2(1-0.7)} - \sqrt{2}*10^{-9}
        \le 1/4.
    \end{align}
    Using Lemma \ref{fcn-gaussian-mean-lb}, we have that,
    \begin{align}
       \tfrac{100}{b_{\min}^2}(n+1) = \Omega (\sigma^2 kd)
       \implies
       n = \Omega (\sigma^2 kd).
    \end{align}
\end{proof}

\newtheorem*{retheorem_fnn}{Theorem~\ref{dsd_fnn_sketched_thorem}}
\begin{retheorem_fnn}[Formal]
Let $\gF$ denote the class of functions represented by the set of fully connected neural network models, $\gM_\gF[\gW],$ as defined in \ref{fnn_param}. 
Let $S^n \sim (\text{DSD})^n$ be the $n$ i.i.d. data samples drawn from $\text{DSD}$, with $\sigma = \tilde{O}(\tfrac{1}{ \sqrt{k}})$, and $k = O(\exp(d))$.
Define the equivariance group $\gU = \gO(kd)$,
let $\{F^t\}_{T}$ be a set of update functions, and
let the
model parameters be initialized as $\vw^0 \sim W$, for some distribution $W$.
If the algorithm, 
$\Bar{\theta}_n(S^n, \vw^0; \gM_F[\gW], \{F_t\}_T)$, 
is $\gU$-equivariant, such that $b^T \ge  b_{\min}$, then for large enough $k,d$,
\begin{align}
    n_{\delta}(\Bar{\theta}_n)
    =
    \max(\Omega \left(
        \sigma^2
        k^2d
    \right),
    40k),
\end{align}
where $\delta = 0.25 \times 10^{-2}$, $b_{\min} \coloneqq 10^{-2}$.
\end{retheorem_fnn}
\begin{proof}
For simplicity we refer to the distribution $\text{DSD}$ by $P$, and the distribution $\text{SSD}_t$ by $Q_t$, for $t \in [k]$.
Since the algorithm $\bar{\theta}_n$ is $\gU$-equivariant, lemma \ref{equi_symm} gives us that for all $\rmU \in \gU$,
\begin{align}
    \Bar{\theta}(
        \{\vx_i,y_i\}_{n}
    )(\vx)
    &\overset{d}{=}
    \Bar{\theta}(
        \{\rmU\vx_i,y_i\}_{n}
    )(\rmU\vx), \\
    \text{err}
    \left(
        \Bar{\theta}(
            \{\vx_i,y_i\}_{n}
        )(\vx),
        y
    \right)
    &\overset{d}{=}
    \text{err}
    \left(
        \Bar{\theta}(
            \{\rmU\vx_i,y_i\}_{n}
        )(\rmU\vx),
        y
    \right),
    \\
    \E_{S^n \sim P^n}
    \E_{(\vx,y) \sim P}
    \text{err}
    \left(
        \Bar{\theta}(
            \{\vx_i,y_i\}_{n}
        )(\vx),
        y
    \right)
    &=
    \E_{S^n \sim P^n}
    \E_{(\vx,y) \sim P}
    \text{err}
    \left(
        \Bar{\theta}(
            \{\rmU\vx_i,y_i\}_{n}
        )(\rmU\vx),
        y
    \right),
    \\
     %%%%%%%%%%%%%%%%%%%%%%
    \E_{S^n \sim P^n}
    \left[
        R\left(
            \Bar{\theta}_n,
            P
        \right)
    \right]
    &=
    \E_{S^n \sim (\rmU \circ P)^n}
    \left[
        R\left(
            \Bar{\theta}_n,
            \rmU \circ P
        \right)
    \right],
     %%%%%%%%%%%%%%%%%%%%%%
     \\
    &=
    \E_{S^n \sim (\rmU \circ P)^n}
    \frac{1}{k}
    \sum_{i=1}^k
    \left[
        R\left(
            \Bar{\theta}_n,
            \rmU \circ Q_i
        \right)
    \right],
     \\
    &=
    \frac{1}{k}
    \sum_{i=1}^k
    \E_{S^n \sim (\rmU \circ P)^n}
    \left[
        R\left(
            \Bar{\theta}_n,
            \rmU \circ Q_i
        \right)
    \right].
    \label{fnn_dsd_eq_sum_k_risk}
\end{align}
To simplify \ref{fnn_dsd_eq_sum_k_risk}, we begin by showing that the expected risk incurred by the algorithm is the same for every distribution $\rmU \circ Q_i$. Specifically, for all $i,j \in [k]$,
\begin{align}
    \E_{S^n \sim (\rmU \circ P)^n}
    \left[
        R\left(
            \Bar{\theta}_n,
            \rmU \circ Q_i
        \right)
    \right]
    =
    \E_{S^n \sim (\rmU \circ P)^n}
    \left[
        R\left(
            \Bar{\theta}_n,
            \rmU \circ Q_j
        \right)
    \right].
    \label{fnn_equal_test}
\end{align}

For $i=j$, the \ref{fnn_equal_test} trivially holds. So we can assume that $i \neq j$. 
From the definition of DSD, note that for any $\alpha, \beta \in [k]$, we have that $\vmu_{\alpha}^T\vmu_{\beta} = \1[\alpha = \beta]$.
Consequently, for any $\rmU \in \gU$, we observe that,
\begin{align}
    (\rmU\vmu_\alpha)^T\rmU\vmu_\beta = \vmu_\alpha^T\rmU^T\rmU\vmu_\beta
=
\vmu_\alpha^T\vmu_\beta
=
\1[\alpha = \beta]
\end{align}
The above fact ensures that for all $\rmU \in \gU$, there exists a $\rmU_1 \in \gU$, such that for all $\alpha \in [k] \setminus \{i,j\}$, 
$\rmU_1\rmU\vmu_\alpha = \rmU\vmu_\alpha$,
and that
$\rmU_1\rmU\vmu_i = \rmU\vmu_j$ 
and $\rmU_1\rmU\vmu_j = \rmU\vmu_i$.
In other words, the map $\rmU_1$ swaps the vectors $\rmU\vmu_i$, and $\rmU\vmu_j$, and keeps the other vectors unchanged. 
We again use the $\gU$-equivariance of $\Bar{\theta}_n$ to infer from lemma \ref{equi_symm} that,
\begin{align}
    \Bar{\theta}(
        \{\vx_i,y_i\}_{n}
    )(\vx)
    &\overset{d}{=}
    \Bar{\theta}(
        \{\rmU_1\vx_i,y_i\}_{n}
    )(\rmU_1\vx), 
    \\
    \text{err}
    \left(
        \Bar{\theta}(
            \{\vx_i,y_i\}_{n}
        )(\vx),
        y
    \right)
    &\overset{d}{=}
    \text{err}
    \left(
        \Bar{\theta}(
            \{\rmU_1\vx_i,y_i\}_{n}
        )(\rmU_1\vx),
        y
    \right),
    \end{align}
\vspace{-0.6cm}
\begin{multline}
    \E_{S^n \sim (\rmU \circ P)^n}
    \E_{\rmU \circ Q_i}
    \left[
        \text{err}
        \left(
            \Bar{\theta}(
                \{\vx_i,y_i\}_{n}
            )(\vx),
            y
        \right)
    \right]
    =
    \E_{S^n \sim (\rmU \circ P)^n}
    \E_{\rmU \circ Q_i}
    \big[
    \text{err} 
        (
            \Bar{\theta}(
                \{\rmU_1\vx_i,y_i\}_{n}
            )
        \\
            (\rmU_1\vx),
            y
        )
    \big],
\end{multline}
\vspace{-0.6cm}
\begin{multline}
    \E_{S^n \sim (\rmU \circ P)^n}
    \E_{\rmU \circ Q_i}
    \left[
        \text{err}
        \left(
            \Bar{\theta}(
                \{\vx_i,y_i\}_{n}
            )(\vx),
            y
        \right)
    \right]
    =
    \E_{S^n \sim (\rmU_1\rmU \circ P)^n}
    \E_{\rmU_1\rmU \circ Q_i}
    \big[
    \text{err} 
        \big(
            \Bar{\theta}(
                \{\vx_i,y_i\}_{n}
            )
            \\
            (\vx),
            y
        \big)
    \big].
\end{multline}
From construction of $\rmU_1$, we know that $\rmU_1\rmU \circ P \overset{d}{=} \rmU \circ P$, and
$\rmU_1\rmU \circ Q_i \overset{d}{=} \rmU \circ Q_j$,
\begin{align}
    \E_{S^n \sim (\rmU \circ P)^n}
    \E_{\rmU \circ Q_i}
    \left[
        \text{err}
        \left(
            \Bar{\theta}(
                \{\vx_i,y_i\}_{i}
            )(\vx),
            y
        \right)
    \right]
    &=
    \E_{S^n \sim (\rmU \circ P)^n}
    \E_{\rmU \circ Q_j}
    \big[
    \text{err} 
        \left(
            \Bar{\theta}(
                \{\vx_i,y_i\}_{i}
            )
            (\vx),
            y
        \right)
    \big],
    \\
    \E_{S^n \sim (\rmU \circ P)^n}
    \left[
        R\left(
            \Bar{\theta}_n,
            \rmU \circ Q_i
        \right)
    \right]
    &=
    \E_{S^n \sim (\rmU \circ P)^n}
    \left[
        R\left(
            \Bar{\theta}_n,
            \rmU \circ Q_j
        \right)
    \right].
\end{align}
This proves the claim \ref{fnn_equal_test}. Substituting it back into \ref{fnn_dsd_eq_sum_k_risk},
\begin{align}
    \E_{S^n \sim P^n}
    \left[
        R\left(
            \Bar{\theta}_n,
            P
        \right)
    \right]
    &=
    \E_{S^n \sim (\rmU \circ P)^n}
    \left[
        R\left(
            \Bar{\theta}_n,
            \rmU \circ Q_1
        \right)
    \right],
    \\
    &=
    \sup_{\rmU \in \tilde{\gU}}
    \E_{S^n \sim (\rmU \circ P)^n}
    \left[
        R\left(
            \Bar{\theta}_n,
            \rmU \circ Q_1
        \right)
    \right],
    \label{dsd_fnn_lower_conn_ssd}
\end{align}
where, we define 
$
    \tilde{\gU} \subseteq \gU
$ such that, 
for all $\rmU \in \tilde{\gU}$, $t \in \{2,\ldots,k\}$,
$\rmU\vmu_t = e_{kd-k+t}$.
Let
$\Xi = \gW$, 
$P_\Xi = W$,  
and 
$
    \Theta = 
    \{ 
        {\theta} 
        \mid
        {\theta} \colon ((\gX,\gY)^n, \Xi) \rightarrow \gF 
    \}
$ be the set of $\gO(kd)$ equivariant algorithms, such that $b^T \ge b_{\min}$.
It is easy to note that $\bar{\theta}_n \in \Theta$.
Therefore,
\begin{align}
    \E_{S^n \sim P^n}
    \left[
        R\left(
            \Bar{\theta}_n,
            P
        \right)
    \right]
    &\ge
    \inf_{\theta_n \in \Theta}
    \sup_{\rmU \in \tilde{\gU}}
    \E_{S^n \sim (\rmU \circ P)^n}
    \left[
        R\left(
            {\theta}_n,
            \rmU \circ Q_1
        \right)
    \right],
    \label{fnn_dsd_red_1}
\end{align}
We will now perform a series of reductions to lower bound the above minimax problem, with the minimax problem of learning \(\text{SSD}_1\). The central concept behind these reductions is to demonstrate that a given minimax problem can be `simulated' by a more tractable one, and thus the tractable problem serves as a lower bound on the original problem.

Let 
$
    \Theta_1 = 
    \{ 
        {\theta} 
        \mid
        {\theta} \colon (((\gX, \sZ_{+}),\gY)^n, \Xi) \rightarrow \gF 
    \}
$ be the set of algorithms that are $\gU_1 = \{\text{Block}(\rmU, \rmI_1) \mid \rmU \in \gO(kd)\}$ equivariant, and $b^T \ge b_{\min}$.
Define the set $\tilde{\gU}_1 \subseteq \gU_1$, $\tilde{\gU}_1 = \{\text{Block}(\rmU, \rmI_1) \mid \rmU \in \tilde{\gU}\}$.
Define 
$\tilde{P}$ to be the indexed distribution with
the generative story:
Sample $j \sim \text{Unif}[k]$, 
then sample $(\vx,y) \sim Q_j$,
and return $((\vx,j), y)$. 
We can then lower bound the minimax expression \ref{fnn_dsd_red_1} as, 
\begin{align}
    &\ge
    \inf_{\theta \in \Theta_1}
    \sup_{\rmU_1 \in \tilde{\gU}_1}
    \E_{\vw \sim W}
    \E_{((\vx,j),y)^n \sim (\rmU_1 \circ \tilde{P})^n}
    \left[
        R\left(
            \theta(
               ((\vx,j),y)^n,
                \vw
            ),
            \rmU \circ Q_1
        \right)
    \right].
    \label{fnn_dsd_red_2}
\end{align}
The inequality follows from the fact that for every $\theta^a_n \in \Theta$, there exists $\theta^b_n \in \Theta_1$, that discards the index $j$ and returns the output of $\theta^a_n$.

Let $n_1$ be the random variable that denotes the number of samples drawn from $\rmU_1 \circ \tilde{P}$ when $j=1$. 
Using Berstein's inequality, we get that,
$
    \tfrac{n}{2k} \le n_1 \le m \coloneqq \tfrac{3n}{2k}
$,
with probability 
$\ge c \coloneqq 1-2\exp(\tfrac{-n}{10k})$. 
We will refer to the event, 
$
    \tfrac{n}{2k} \le n_1 \le m
$,
as
$E$. 
Then, we can lower bound \ref{fnn_dsd_red_2} as,
\begin{align}
    &\ge
    c
    \inf_{\theta \in \Theta_1}
    \sup_{\rmU_1 \in \tilde{\gU}_1}
    \E_{\vw \sim W}
    \E_{((\vx,j), y)^n \sim (\rmU_1 \circ \tilde{P})^n}
    \left[
        R\left(
            \theta(
               ((\vx,j),y)^n,
                \vw
            ),
            \rmU \circ Q_1
        \right)
    \mid 
    E
    \right].
    \label{fnn_dsd_red_3}
\end{align}

For the next reduction, we generalize the definition of $n_1$, and define $n_i$ to be the random variable corresponding to the number of samples drawn from the distribution $\rmU \circ Q_i$, for all $i \in [k]$. 
Let $\vy \sim (\text{Unif}[\gY])^{n}$
be a uniform random vector over $\{+1,-1\}$ of size $n$, and
$\veps \sim \gN(\mathbf{0}_{nkd}, \mathbf{I}_{nkd})$
be a vector of i.i.d. standard Gaussian random variables.
Let 
$
    \Theta_2 = 
    \{ 
        {\theta} 
        \mid
        {\theta} \colon 
        ((\gX,\gY)^{m} \times 
        (
        (\sR^{kd})^{k-1} \times 
        (\sN \cup \{0\})^k \times 
        (\sR)^n \times 
        \sR^{nkd} \times 
        \gW)) \rightarrow \gF 
    \}
$
be a set of $\gO(kd)$ equivariant algorithms that take as input the training data, $\{\rmU\vmu\}_{i=2}^k$ mean vectors, the number of samples to be drawn from each mean, pre-sampled values of $\vy$ and $\veps$, and the parameter initialization respectively. It subsequently returns a function within $\gF$. Then we can bound \ref{fnn_dsd_red_3} as,
\begin{align}
   &\ge
    c
    \inf_{\theta \in \Theta_2}
    \sup_{\rmU \in \tilde{\gU}}
    \E_{\vw \sim W}
    \E_{ \{n_i\}_{1}^k}
    \E_{\vy,\veps}
    \E_{S^{m} \sim (\rmU \circ Q_1)^{m}}
    \left[
        R
        \left(
            \theta(
                S^{m},
                (\{\rmU\vmu_i\}_{2}^k,
                \{n_i\}_{1}^k,
                \vy,\veps,\vw)
            ),
            \rmU \circ Q_1
        \right)
    \mid 
    E
    \right].
    \label{fnn_dsd_red_4}
\end{align}
The last inequality follows from the fact that for every $\theta^a_n \in \Theta_1$, there exists $\theta^b_n \in \Theta_2$, that first deterministically creates the indexed dataset using
$
    S^{m},
    \{\rmU\vmu_i\}_{2}^k,
    \{n_i\}_{1}^k,
    \vy,\veps
$ and then runs $\theta^a_n$.

For notational brevity, we define 
$   \Xi_1 \coloneqq 
    (\sR^{kd})^{k-1} \times
    (\sN \cup \{0\})^k \times 
    (\sR)^n \times 
    \sR^{nkd} \times
    \gW
$, to encapsulate the randomness in $\{\rmU\vmu_i\}_{2}^k$,
$\{n_i\}_{1}^k,\vy,\veps$, and $\vw$. 
We denote its associated product distribution by $P_{\Xi_1}$.
Recall that this distribution must be independent of the input data distribution. To see this, we recall that from the construction of $\tilde{\gU}$, that for all $\rmU \in \tilde{\gU}$, $t \in \{2,\ldots,k\}$,
$\rmU\vmu_t = e_{kd-k+t}$, which are fixed deterministic quantities. 
 Rewriting \ref{fnn_dsd_red_4}, we get,
\begin{align}
   &=
    c\inf_{\theta \in \Theta_2}
    \sup_{\rmU \in \tilde{\gU}}
    \E_{\vxi \sim P_{\Xi_1}}
    \E_{S^{m} \sim (\rmU \circ Q_1)^{m}}
    \left[
        R
        \left(
            \theta(
                S^{m},
                \vxi
            ),
            \rmU \circ Q_1
        \right)
    \right],
     \label{fnn_dsd_red_5}
\end{align}
We have effectively reduced solving the original problem $P$ into solving its constituent problem $Q_1$ with approximately $n/k$ samples.
We have already proven a lower bound for the SSD problem in Theorem \ref{fnn_param}. Using that result, we have, 
\begin{align}
    \E_{S^n \sim P^n}
    \left[
        R\left(
            \Bar{\theta}_n,
            P
        \right)
    \right]
    &\ge
    c*0.5*10^{-2},
\end{align}
iff $m = \Omega(\sigma^2kd)$, which implies that $n = \Omega(\sigma^2k^2d)$. Since,
$c \ge
     \left(
        1-2\exp(\tfrac{-n}{10k})
    \right)$,
using $n \ge 40k$, we can bound 
$c
\ge
\left(
    1-2\exp(-\ln(4))
\right)
= \tfrac{1}{2}
$. 
Therefore,
\begin{align}
     \E_{S^n \sim P^n}
    \left[
        R\left(
            \Bar{\theta}_n,
            P
        \right)
    \right]
    &\ge
    0.25*10^{-2}
\end{align}
iff $n = \Omega(\sigma^2k^2d)$, proving the result.
\end{proof}

\newpage

\subsection{LCN Upper Bound}
\label{lcn_upper_appendix}
\newtheorem*{retheorem_lcn_upper}{Theorem~\ref{dsd_ccn_upper_sketched}}
\begin{retheorem_lcn_upper}[Formal]
Let $\gF$ denote the class of functions represented by the set of locally connected neural network models $\gM_\gL$, defined in equation \ref{lcn_param}. 
Let the input data samples be drawn from the DSD distribution, $S^n \sim (\text{DSD})^n$, with $\sigma = \tilde{O}(\tfrac{1}{ \sqrt{k}})$, and $k = O(\exp(d))$.
Define the following groups:  
    $    
    \gU_1
    \coloneqq
    \{
        \text{Block}
        \left(
            \rmU_1, \ldots, \rmU_k
        \right)
        \mid
        \rmU_i \in \gO(d)
    \}
    $,
    $ \:
    \gU_2 \coloneqq 
    \{
        \rmU \in \gO_p(kd)
        \mid
        \text{idx}_{kd}(\rmU \ve_{(i-1)d+1}) + j - 1 = 
        \text{idx}_{kd}(\rmU \ve_{(i-1)d+j}), \:
        \forall i \in [k], \: \forall j \in [d]
    \}
    $,
    and 
    $\gU \coloneqq \gU_1 \star \gU_2$.
    Then, there exist update functions $\{F_t\}_T$, and a model parameter initialization distribution $W$,
    such that 
    $\Bar{\theta}_n(\gM_L, \{F_t\}_T, W, S^n)$ 
    is a $\gU$-equivariant algorithm, and
    for large enough $k,d$,
    \begin{align}
        n_{\delta}(\Bar{\theta}_n)
        =
        \max(
            O \left(
            \sigma^2
            k(d+k) \ln(kd)
            \right),
            80k \ln(kd)
        ),
    \end{align}
    for $\delta = O(1)$.
\end{retheorem_lcn_upper}
\begin{proof}
    We begin by defining the algorithm $\Bar{\theta}_n$, we then establish that $\Bar{\theta}_n$ is a $\gU$-equivariant algorithm,
    and then we analyze each iteration of the algorithm to prove the required sample complexity bound.

    \underline{1. Algorithm Definition}
    
    To define the algorithm $\Bar{\theta}_n(\gM_L, \{F_t\}_T, W, S^n)$, 
    we need to specify
    the initialization distribution $W$, and
    the update functions $\{F_t\}_T$. 
    At iteration $t=0$, we initialize the model parameter
    $\vv^0 = [\vw_1^0,..,\vw_k^0,b^0]$ as follows: 
    for each \(i \in [k]\), the vector \(\vw_i^0\) is independently sampled from the distribution \(\gN(\mathbf{0}, \gamma\rmI_{d})\), where \(\gamma^{-1}=100k^2d^2\), and bias is set as \(b^0 = 0\).
    The superscript denotes the iteration number.
    We define the empirical loss function for $N \in \sZ_{+}$ data samples as,
    \begin{align}
        l \colon (\gW, (\gX,\gY)^N) \rightarrow \sR 
        \coloneqq
        \tfrac{1}{N} \sum_{j=1}^N 
        \big(
            y_j - \sum_{i=1}^k \phi_b(\vw_i^T\vx^{(i)}_j)
        \big)^2.
    \end{align}
    The algorithm proceeds in $T=2$ iterations. 
    For simpler analysis, we split the input dataset $S^n$ into two equal sized datasets $S^{m}_1$, and $S^{m}_2$, with $m \coloneqq \tfrac{n}{2}$ samples each. 
    Then, for each $t \in \{1,2\}$,
    \begin{align}
        F_t(\vv, S^n) &\coloneqq 
        \left[ 
            \tfrac{
            \vw_1 - \eta_t\nabla_{\vw_1}l(\vv ;S_t^m)
            }
            {
            \norm{\vw_1 - \eta_t\nabla_{\vw_1}l(\vv ;S_t^m)}}
            ,
            \ldots
            ,
             \tfrac{
            \vw_k - \eta_t\nabla_{\vw_k}l(\vv ;S_t^m)
            }
            {
            \norm{\vw_k - \eta_t\nabla_{\vw_k}l(\vv ;S_t^m)}}
            \: , \: 
            b_t
        \right],
        \label{cnn_ssd_update_1}
    \end{align}
    where $\eta_1 = 1, \eta_2 = k \times 10^{3}, b_1 =  \tfrac{1}{32}\sqrt{\tfrac{(k+d)\ln(kd)}{kd}}, b_2 = 10^{-4}$.
    
    \underline{2. Algorithm is Equivariant}

    To establish that \(\Bar{\theta}_n\) is \(\gU\)-equivariant, we only need to show that it is both \(\gU_1\)- and \(\gU_2\)-equivariant, because, every element in \(\gU\) is a finite matrix product of elements from \(\gU_1\) and \(\gU_2\).
    We define groups $\gV_1 \coloneqq \{\text{Block}(\rmV_1,\ldots,\rmV_k, \rmI_1) \mid \rmV_i \in \gO(d), i \in [k]\}$, 
    where $\rmI_1$ is the identity matrix of size $1 \times 1$,
    $\gV_2 \coloneqq \{\text{Block}(\rmU, \rmI_1) \mid \rmU \in \gU_2\}$, and $\gV \coloneqq \gV_1 \star \gV_2$.

    To prove \(\gU_1\)-equivariance,
    we need to verify the three conditions in Definition \ref{ortho-equi}.
    For any data sample $\vx,y \in (\gX,\gY)$, $\rmU \in \gU_1$, $\vw_i \in \sR^d; i \in [k]$, $b \in \sR_{+}$, choose $\rmV = \text{Block}(\{\rmU^{(1)},\ldots,\rmU^{(k)},\rmI_1\}) \in \gV$. 
    Then, the first property \ref{equi_p1} holds as,
    \begin{align}
        \gM_L[ \vv ](\vx)
        =
        \sum_{i=1}^k
        \phi_b(\vw_i^T\vx^{(i)})
        =
        \sum_{i=1}^k
        \phi_b(\vw_i^T(\rmU^{(i)})^T(\rmU^{(i)})\vx^{(i)})
        =
        \gM_L[ \rmV\vv ](\rmU\vx).
    \end{align}
    For each iteration $t \in [2]$, and $S^n \in (\gX,\gY)^n$, the second property \ref{equi_p2} follows as,
    \begin{align}
        F_t \left(
            \rmV\vv, \rmU \circ S^n
        \right)
        &= 
        \bigg[ 
         \tfrac{\rmU^{(1)}\vw_1 - \eta_t\nabla_{\rmU^{(1)}\vw_1}l(\rmV\vv;\rmU \circ S^m_t)}
            {\norm{\rmU^{(1)}\vw_1 - \eta_t\nabla_{\rmU^{(1)}\vw_1}l(\rmV\vv ;\rmU \circ S^m_t)}}
        ,
        \ldots,
        \\
        &{} \hspace{2cm}\tfrac{\rmU^{(k)}\vw_k - \eta_t\nabla_{\rmU^{(k)}\vw_k}l(\rmV\vv ;\rmU \circ S^m_t)}
            {\norm{\rmU^{(k)}\vw_k - \eta_t\nabla_{\rmU^{(k)}\vw_k}l(\rmV\vv ;\rmU \circ S^m_t)}}
            \: , \: 
            b_t
        \bigg],
        \notag \\
        &=
        \left[
        \tfrac{\rmU^{(1)}\left(\vw_1 - \eta_t\nabla_{\vw_1}l(\vv ;S^m_t)\right)}
            {\norm{\rmU^{(1)}\left(\vw_1 - \eta_t\nabla_{\vw_1}l(\vv ;S^m_t)\right)}}
        ,
        \ldots
        ,
        \tfrac{\rmU^{(k)}\left(\vw_k - \eta_t\nabla_{\vw_k}l(\vv ;S^m_t)\right)}
            {\norm{\rmU^{(k)}\left(\vw_k - \eta_t\nabla_{\vw_k}l(\vv ;S^m_t)\right)}}
        \: , \: 
        b_t
        \right],
        \\
        &=
        \left[
        \tfrac{\rmU^{(1)}\left(\vw_1 - \eta_t\nabla_{\vw_1}l(\vv ;S_t^m)\right)}
            {\norm{\vw_1 - \eta_t\nabla_{\vw_1}l(\vv ;S_t^m)}}
        ,
        \ldots
        ,
        \tfrac{\rmU^{(k)}\left(\vw_k - \eta_t\nabla_{\vw_k}l(\vv ;S_t^m)\right)}
            {\norm{\vw_k - \eta_t\nabla_{\vw_k}l(\vv ;S_t^m)}}
        \: , \: 
        b_t
        \right],
        \\
        &=
        \rmV
        F_t \left(
            \vv, S^n
        \right).
    \end{align}
    And property \ref{equi_p3} can be affirmed by observing that, for all $\rmV \in \gV_1$,
    \begin{align}
        \rmV\vv^0 
        &= [\rmU^{(1)}\vw_1^0, \ldots, \rmU^{(k)}\vw_k^0,  b^0],
        \\
        &\overset{d}{=} 
        [\vw_1^0, \ldots, \vw_k^0,  b^0]
        =
        \vv.
    \end{align}
    This proves that $\Bar{\theta}_n$ is $\gU_1$-equivariant. 
    We now establish \(\gU_2\)-equivariance. 
    Observe that action of any matrix $\rmU \in \gU_2$ is to permute the $k$ patches of the input.
    Let $\pi \colon [k] \rightarrow [k]$ be the permutation function corresponding to $\rmU$.
    For this given $\rmU$, we choose $\rmV = \text{Block}(\rmU, \rmI_1) \in \gV$. 
    For any $\vx,y \in (\gX,\gY)$, $\vw_i \in \sR^d, i \in [k]$, $b \in \sR_{+}$, property \ref{equi_p1} of equivariance holds as,
    \begin{align}
        \gM_L[ \vv ](\vx)
        =
        \sum_{i=1}^k
        \phi_b(\vw_i^T\vx^{(i)})
        =
        \sum_{i=1}^k
        \phi_b(\vw_{\pi(i)}^T\vx^{\pi(i)})
        =
        \gM_L[ \rmV\vv ](\rmU\vx).
    \end{align}
    For each iteration $t \in [2]$, and $S^n \in (\gX,\gY)^n$, the second property \ref{equi_p2} follows as,
    \begin{align}
        F_t \left(
            \rmV\vv, \rmU \circ S^n
        \right)
        &= 
        \bigg[ 
         \tfrac{\vw_{\pi(1)} - \eta_t\nabla_{\pi(1)}l(\rmV\vv;\rmU \circ S^m_t)}
            {\norm{\vw_{\pi(1)} - \eta_t\nabla_{\vw_{\pi(1)}}l(\rmV\vv ;\rmU \circ S^m_t)}}
        ,
        \ldots,
        \\
        &{} \hspace{2cm}\tfrac{\vw_{\pi(k)} - \eta_t\nabla_{\vw_{\pi(k)}}l(\rmV\vv ;\rmU \circ S^m_t)}
            {\norm{\vw_{\pi(k)} - \eta_t\nabla_{\vw_{\pi(k)}}l(\rmV\vv ;\rmU \circ S^m_t)}}
            \: , \: 
            b_t
        \bigg],
        \notag \\
        &=
        \rmV
        \left[
        \tfrac{\vw_1 - \eta_t\nabla_{\vw_1}l(\vv ;S_t^m)}
            {\norm{\vw_1 - \eta_t\nabla_{\vw_1}l(\vv ;S_t^m)}}
        ,
        \ldots
        ,
        \tfrac{\vw_k - \eta_t\nabla_{\vw_k}l(\vv ;S_t^m)}
            {\norm{\vw_k - \eta_t\nabla_{\vw_k}l(\vv ;S_t^m)}}
        \: , \: 
        b_t
        \right],
        \\
        &=
        \rmV
        F_t \left(
            \vv, S^n
        \right).
    \end{align}
    And finally property \ref{equi_p3} can be shown by observing that, for all $\rmV \in \gV_2$,
    \begin{align}
        \rmV\vv^0 
        &= [\vw_{\pi(1)}^0, \ldots, \vw_{\pi(k)}^0,  b^0],
        \\
        &\overset{d}{=} 
        [\vw_1^0, \ldots, \vw_k^0,  b^0]
        =
        \vv.
    \end{align}
    Thus, the algorithm $\bar{\theta}_n$ is $\gU_2$-equivariant, and therefore is $\gU$-equivariant.
    
    \underline{3. Algorithm Analysis}
    
    We analyze each iteration of $\Bar{\theta}_n$,
    with $n = \max(2\sigma^2k(k+d)\ln(kd), 80k \ln(kd))$
    samples,
    and establish that $\bar{\theta}_n$ achieve an expected risk of at most $\delta = 2.5 \times 10^{-3}$. 
    Since $k = O(\exp(d))$ and $\sigma = \tilde{O}(\tfrac{1}{\sqrt{k}})$, we assume that $\sqrt{d} \ge 20 \sqrt{\ln(k)}$ and $100\sqrt{k\ln(kd)^3}\sigma \le 1$.
    
    The outline of the analysis is as follows: We show that after the first update step, we reliably recover the unknown signal vector, upto an alignment of $\Omega(\sqrt{\tfrac{k+d}{kd}})$.
    In the second step, we show that this alignment is enough to threshold out the "noise" patches while only letting the "signal" patch pass through the first hidden layer.
    This enables us to recover the signal upto an alignment of $\Omega(1)$, which results in the expected risk of the learned LCN being smaller than $\delta$.

    \underline{3a. Update Step 1} 

    For each $i \in [k]$, we denote
    $
     \tilde{\vw}_i^1
     =
     \vw_i^0 -
     \nabla_{\vw_i^0}l(\vw_i^0,0 ;S_1^{m})
    $
    to be the unnormalized version of $\vw_i^1$.
    Thus, the alignment of $\vw_i^1$ with the signal can be written as,
     $(\vw_i^1)^T\vw^\star
     =
     \tfrac{(\tilde{\vw}_i^1)^T\vw^\star}{\norm{\tilde{\vw}_i^1}}
     $.
     To compute this alignment, we begin by simplifying $\nabla_{\vw_i^0}l([\vw_i^0,0] ;S_1^{m})$,
    \begin{align}
         \nabla_{\vw_i^0}l([\vw_i^0,0] ;S_1^{m}) 
         &=
         \tfrac{1}{m} \sum_{j=1}^m 
        \nabla_{\vw_i^0}\left(
            y_i - \sum_{i=1}^k \phi_0((\vw_i^0)^T\vx^{(i)}_j)
        \right)^2,
        \\
          &= 
         \tfrac{-2}{m} 
         \sum_{j=1}^{m}
         \left(
            y_j 
            - \sum_{i=1}^k 
            \phi_0((\vw_i^0)^T\vx^{(i)}_j)
        \right)
        \left(
            \vx^{(i)}_j\phi'_0((\vw_i^0)^T\vx^{(i)}_j)
        \right),
        \\
        &= 
         \tfrac{-2}{m} 
         \sum_{j=1}^{m}
         \left(
            1
            - \sum_{i=1}^k 
            y_j\phi_0((\vw_i^0)^T\vx^{(i)}_j)
        \right)
        \left(
            y_j\vx^{(i)}_j\phi'_0((\vw^0)^T\vx^{(i)}_j)
        \right),
        \\
        &= 
         \tfrac{-2}{m} 
         \sum_{j=1}^{m}
         \left(
            1
            - \sum_{i=1}^k 
            y_j(\vw_i^0)^T\vx^{(i)}_j
        \right)
        \left(
            y_j\vx^{(i)}_j
        \right),
        \label{lcn_upper_tempexp_1}
    \end{align}
    where $\phi'_0(x) \coloneqq \tfrac{d}{dx}\phi_0(x)$, and the last equality follows by observing that $\phi_0$ is the identity function, and $\phi_0'$ is the constant function $1$.
    As a shorthand, we define 
    $
        \alpha_j \coloneqq 
         1
        - \sum_{i=1}^k 
        y_j(\vw_i^0)^T\vx^{(i)}_j)
    $,
    and
    $
        \vbeta_{ij} \coloneqq
        y_j\vx^{(i)}_j
    $. Substituting this back into \ref{lcn_upper_tempexp_1},
    \begin{align}
        \nabla_{\vw_i^0}l([\vw_i^0,0] ;S_1^{m}) 
        =
        \tfrac{-2}{m} 
         \sum_{j=1}^{m}
         \alpha_j
        \vbeta_{ij},
        \label{lcn_itr_1_grad}
    \end{align}
    % \#\textbf{This has to change}
    % and from the data distribution $\text{SSD}_t$,
    % $\vx_j^{(i)} = \sigma \veps_j^{(i)}$, for all $i \neq t$,
    % and $\vx_j^{(t)} = y_j\vw^\star + \sigma \veps_j^{(t)}$. 
    % We substitute these observations into $\beta_j$,
    %  \begin{align}
    %     \beta_j
    %     =
    %     \sum_{i=1}^k 
    %         y_j\vx^{(i)}_j\phi'_0((\vw^0)^T\vx^{(i)}_j))
    %     =
    %     \sum_{i=1}^k 
    %         y_j\vx^{(i)}_j
    %     =
    %     \vw^\star + \sigma\sum_{i=1}^k\veps_j^i
    %     \overset{d}{=}
    %     \vw^\star + \sigma\sqrt{k}\Bar{\veps}_j,
    %     \label{cnn_itr_1_grad_rewrite}
    %  \end{align}
    %  where $\Bar{\veps}_j \sim \gN(\mathbf{0}, \rmI_d)$. 
     To analyze \ref{lcn_itr_1_grad}, we begin by proving high probability bounds on the range of $\alpha_j$, for each $j \in [m]$. 
     From the initialization procedure described above, we know that $\vw_i^0 \overset{d}{=} \gamma \veps_i$, where $\veps_i$ is the Gaussian random vector defined as $\veps_i \sim \gN(\mathbf{0}, \rmI_d)$.
    And with the input distribution being $\text{DSD}$, we know that  
    $\vx_j^{(i)} = y_j r_{ij}\vw^\star + \sigma \veps_j^{(i)}$, for all $i$ in $[k]$, where
    $\veps_j^{(i)} \sim \gN(\mathbf{0}, \rmI_d)$ is also a Gaussian random vector, and 
    $r_{ij} = 1$, if in the $j$-th data sample the signal patch appears in the $i$-th patch, and $0$ otherwise.
    \begin{align}
        \label{lcn_alpha_j_def}
        \alpha_j
        &=
        1
        - \sum_{i=1}^k 
        y_j(\vw_i^0)^T\vx^{(i)}_j,
        \\
        &=
        1
        -
        \sum_{i=1}^k 
        \gamma r_{ij}y_j^2\veps_i^T\vw^\star
        - 
        \sum_{i=1}^k 
        \gamma
        \sigma
        y_j\veps_i^T\veps_j^{(i)}.
    \end{align}
    We can now bound the range of $\alpha_j$ as,
    \begin{align}
        1
        +
        |
            \sum_{i=1}^k 
            \gamma r_{ij}\veps_i^T\vw^\star
        |
        + 
        |
            \sum_{i=1}^k 
            \gamma
            \sigma
            \veps_i^T\veps_j^{(i)}
        |
        \ge
        \alpha_j
        \ge 
        1
        -
        |
            \sum_{i=1}^k 
            \gamma r_{ij}\veps_i^T\vw^\star
        |
        - 
        |
            \sum_{i=1}^k 
            \gamma
            \sigma
            \veps_i^T\veps_j^{(i)}
        |.
        \label{lcn_alpha_leftright_pre}
    \end{align}
    We first upper bound 
    $   \max_{j}
        |
            \sum_{i=1}^k 
            \gamma r_{ij}\veps_i^T\vw^\star
        |
        \le
        \max_{i}
        |
            \gamma \veps_i^T\vw^\star
        |
    $.
    Since the norm of the signal is $1$, $\norm{\vw^\star}=1$, 
    we have that 
    $\veps_i^T\vw^\star \sim \gN(0,1)$. 
    We can now upper bound $\max_i|\gamma\veps_i^T\vw^\star|$ as,
    \begin{align}
        \max_i|\gamma \veps_i^T\vw^\star| 
        \le
        \tfrac{1}{100k^2d^2} \max_i|\veps_i^T\vw^\star| 
        \le \tfrac{1}{8}
        \label{lcn_obs_1},
    \end{align}
    with probability $\ge 1 - 10^{-7}$.
    To derive inequality \ref{lcn_obs_1}, we have used the concentration inequality,
     $
     \mathbb{P}[
         \max_{i \in [k]}
         |
         \veps_i^T\vw^\star
        |
        \ge 
        \sqrt{32\ln(k)}
     ]
     \le
    \tfrac{2}{k^9}
    \le
    10^{-7}
    $.
    
    And now we seek to bound $\max_{j}|\sum_{i=1}^k \gamma\sigma\veps_i^T\veps_j^{(i)}|$.
    By the concentration of the norm of the Gaussian random vector,
    $\mathbb{P}[\max_i\norm{\veps_i} \ge \sqrt{d} + 10\sqrt{\ln(k)}] \le 2k\exp(-\tfrac{100 \ln(k)}{16}) \le 2kk^{-6} \le 10^{-7}$.
    Now, define $\vu_i = \tfrac{\veps_i}{\norm{\veps_i}}$. Therefore,
    \begin{align}
        \max_{i \in [k], j \in [m]}
        |\sum_{i=1}^k \gamma\sigma \veps_i^T\veps_j^{(i)}|
        &\le
        \gamma\sigma (\sqrt{d} + 10\sqrt{\ln(k)})
        \max_{j \in [m]}
        |\sum_{i=1}^k\vu_i^T \veps_j^{(i)}|
        \\
        &\overset{d}{=} 
        \gamma\sigma (\sqrt{d} + 10\sqrt{\ln(k)}) \max_{j \in [m]}
        |\sqrt{k}\eps_j|,
    \end{align}
    where $\eps_j \sim \gN(0,1)$. The last equality in distribution follows from the fact the sum of $k$ independent Gaussian random variables is a Gaussian random variable with variance $k$.
    Now, from the concentration inequality,
     $
     \mathbb{P}[
         \max_{j \in [m]}
         |
         \eps_j
        |
        \ge 
        \sqrt{32\ln(m)}
     ]
     \le
    \tfrac{2}{m^9}
    \le
    10^{-7}
    $. 
    Substituting this above,
    \begin{align}
        \max_{i \in [k], j \in [m]}
        |\sum_{i=1}^k \gamma\sigma \veps_i^T\veps_j^{(i)}|
        &\le
        \gamma\sigma (\sqrt{d} + 10\sqrt{\ln(k)}) \max_{j \in [m]}
        |\sqrt{k}\eps_j|,
        \\
        &\le \tfrac{3}{2}\gamma\sigma \sqrt{kd}
        \sqrt{32 \ln(m)}
        \\
        &\le
        \tfrac{3}{2}
        \tfrac{1}{100k^2d^2}
        \tfrac{1}{100 \sqrt{k\ln(kd)^3}}
        \sqrt{kd}
        \sqrt{32\ln(m)}
        \le
        \tfrac{1}{8},
        \label{lcn_obs_2}
    \end{align}
     Substituting \ref{lcn_obs_1} and \ref{lcn_obs_2} into \ref{lcn_alpha_leftright_pre}, we can now bound $\alpha_j$, for all $j \in [m]$,
     \begin{align}
         1 
         +
        \tfrac{1}{8}
        +
        \tfrac{1}{8}
        &\ge 
        \alpha_j
        \ge
        1 
        -
        \tfrac{1}{8}
        -
        \tfrac{1}{8},
        \\
        \tfrac{5}{4}
        &\ge 
        \alpha_j
        \ge
        \tfrac{3}{4},
        \label{lcn_itr_1_alpha_bound}
     \end{align}
     We are now in the position to analyze $\tilde{\vw}_i^1$,
     \begin{align}
         \tilde{\vw}_i^1
         &=
         \vw_i^0 -
         \eta_1\nabla_{\vw^0_i}l([\vw_i^0,0] ;S_1^{m}),
        \\
        &\overset{\ref{lcn_itr_1_grad}}{=}
        \vw_i^0 +
         \tfrac{1}{m} 
         \sum_{j=1}^{m}
         2\alpha_j \vbeta_{ij},
        \\
        &=
        \vw_i^0 +
        \tfrac{1}{m} 
         \sum_{j=1}^{m}
         2y_j \alpha_j
         \left(
            y_j r_{ij}\vw^\star + \sigma \veps_j^{(i)}
        \right)
        \\
        &\overset{d}{=}
        \vw_i^0 +
         \tfrac{2\sum_{j=1}^{m}r_{ij}\alpha_j}{m}
         \vw^\star
         +
         \tfrac{2\sigma\sqrt{\sum_{j=1}^{m}\alpha_j^2}}{m}
         \Bar{\veps}_i,
         \label{lcn_itr_1_w_hat}
     \end{align}
     where $\Bar{\veps}_i \sim \gN(\mathbf{0}, \rmI_d)$. 
     Similarly to $\veps_i$, we can use the concentration of the norm of the Gaussian random vector to show,
    $\mathbb{P}[\max_i|\norm{\Bar{\veps}_i} - \sqrt{d}| \ge  10\sqrt{\ln(k)}] \le 2k\exp(-\tfrac{100 \ln(k)}{16}) \le 2kk^{-6} \le 10^{-7}$.
    Also note that by using Chernoff and Union bounds, we have
    $\tfrac{m}{2k} \le \sum_{j=1}^m  r_{ij} \le \tfrac{3m}{2k}$,
    with probability 
    $\ge 
        1-2k\exp(\tfrac{-40k \ln(k)}{10k})
    \ge 
    1 - 10^{-7}
    $,
    for large enough $k$.
    Recall that the aim is to bound
     $(\vw_i^1)^T\vw^\star
     =
     \tfrac{(\tilde{\vw}_i^1)^T\vw^\star}{\norm{\tilde{\vw}_i^1}}
     $,
     for all $i \in [k]$. 
      For this, we first prove an upper bound for $\norm{k\tilde{\vw}_i^1}$,
     \begin{align}
         \norm{k\tilde{\vw}_i^1}
         &=
         \norm{
             k\vw_i^0 +
             \tfrac{2k\sum_{j=1}^{m}r_{ij}\alpha_j}{m}
             \vw^\star
             +
             \tfrac{2k\sigma\sqrt{\sum_{j=1}^{m}\alpha_j^2}}{m}
             \Bar{\veps}_i,
         },
         \\
         &\le
         \norm{
             k\vw_i^0
        }+
        \norm{
             \tfrac{2k\sum_{j=1}^{m}r_{ij}\alpha_j}{m}
             \vw^\star
        }
        +
        \norm{
             \tfrac{2k\sigma\sqrt{\sum_{j=1}^{m}\alpha_j^2}}{m}
             \Bar{\veps}_i,
         },
         \\
         &\overset{\ref{lcn_itr_1_alpha_bound}}{\le}
         \norm{k\vw_i^0} +
         \tfrac{5k}{2}
         \norm{
             \tfrac{\sum_{j=1}^{m}r_{ij}}{m}
             \vw^\star
        }
         +
         \tfrac{5k\sigma}{2}\sqrt{\tfrac{1}{m}}
         \norm{
         \Bar{\veps}_i},
         \\
         &\le
         \gamma k\norm{\veps_i} +
         \tfrac{5k}{2}
         \norm{
             \tfrac{\sum_{j=1}^{m}r_{ij}}{m}
             \vw^\star
        }
         +
         \tfrac{5k\sigma}{2}\sqrt{\tfrac{1}{m}}
         \norm{
         \Bar{\veps}_i},
    \end{align}
    Now substituting the facts 
    $\norm{\veps_i}, \norm{\Bar{\veps}_i} \le \tfrac{3}{2}\sqrt{d}$, and that
    $\sum_{j=1}^m  r_{ij} \le \tfrac{3m}{2k}$,
    \begin{align}
        \norm{k\tilde{\vw}_i^1}
         &\le
         \gamma\tfrac{3k\sqrt{d}}{2} +
         \tfrac{15}{4}
         +
         \tfrac{15k\sigma}{4} 
         \sqrt{\tfrac{d}{m}},
         \\
         &\le
         \tfrac{3\sqrt{d}}{2kd^2}
         +
         \tfrac{15}{4}
         +
         \tfrac{15\sigma}{4}\sqrt{\tfrac{kd}{\sigma^2(d+k)\ln(kd)}},
         \\
         &\le
         \tfrac{3}{2kd} +
         \tfrac{15}{4}
         +
         \tfrac{15}{4} \sqrt{\tfrac{kd}{(k+d)\ln(kd)}},
         \label{lcn_ssd_eps_root_range}
         \\
         &\le
         4 \sqrt{\tfrac{kd}{(k+d)\ln(kd)}}.
        \label{lcn_ssd_gamma_eps_root_range}
     \end{align}
    And similarly, we lower bound $\norm{\tilde{\vw}^1}$,
     \begin{align}
         \norm{k\tilde{\vw}_i^1}
         &\ge
         \norm{
         \tfrac{2k\sigma\sqrt{\sum_{j=1}^{m}\alpha_j^2}}{m}
         \Bar{\veps}_i}
         -
         \norm{k\vw_i^0} 
         -
         \norm{\tfrac{2k\sum_{j=1}^{m}r_{ij}\alpha_j}{m}
         \vw^\star},
         \\
         &\overset{\ref{lcn_itr_1_alpha_bound}}{\ge}
         \tfrac{3k\sigma}{2}\sqrt{\tfrac{1}{m}}
         \norm{
         \Bar{\veps}_i}
         -\norm{k\vw_i^0} -
         \tfrac{5}{2}
         \norm{\tfrac{k\sum_{j=1}^{m}r_{ij}}{m}\vw^\star},
         \\
         &\ge
         \tfrac{3}{4}\sqrt{\tfrac{kd}{(k+d)\ln(kd)}}
         -
        \tfrac{3}{2kd}
         -
         \tfrac{15}{4},
         \\
         &\ge
         \tfrac{1}{2}\sqrt{\tfrac{kd}{(k+d)\ln(kd)}}.
     \end{align}
     Next, we lower bound $k(\tilde{\vw}_i^1)^T\vw^\star$, 
     \begin{align}
         k(\tilde{\vw}_i^1)^T\vw^\star
         &=
         k(\vw_i^0)^T\vw^\star +
         \tfrac{2k\sum_{j=1}^{m}r_{ij}\alpha_j}{m}
         (\vw^\star)^T\vw^\star +
         \tfrac{2k\sigma\sqrt{\sum_{j=1}^{m}\alpha_j^2}}{m}
         (\Bar{\veps}_i)^T\vw^\star,
         \\
         &=
         k\gamma\veps_i^T\vw^\star +
         \tfrac{2k\sum_{j=1}^{m}r_{ij}\alpha_j}{m}
          +
         \tfrac{2k\sigma\sqrt{\sum_{j=1}^{m}\alpha_j^2}}{m}
         (\Bar{\veps}_i)^T\vw^\star,
         \\
         &\overset{\ref{lcn_itr_1_alpha_bound}}{\ge}
         -
         |k\gamma\veps_i^T\vw^\star|
         +
         \tfrac{3k}{2}
         \tfrac{\sum_{j=1}^{m}r_{ij}}{m}
          -
         \tfrac{5\sigma k}{2\sqrt{m}}|(\Bar{\veps}_i)^T\vw^\star|,
         \\
         &\ge
         -
         \tfrac{1}{100 kd^2}|\veps_i^T\vw^\star|
         +
         \tfrac{3}{4}
         -
         \tfrac{5\sigma k}{2\sqrt{
         k \sigma^2 (k+d) \ln(kd)
         }}|(\Bar{\veps}_i)^T\vw^\star|,
         \\
         &\ge
         -
         \tfrac{1}{100 kd^2}|\veps_i^T\vw^\star|
         +
         \tfrac{3}{4}
         -
         \tfrac{5}{2\sqrt{\ln(kd)}}|(\Bar{\veps}_i)^T\vw^\star|,
         \\
         &\ge
         -
         \tfrac{1}{8}
         +
         \tfrac{3}{4}
          -
         \tfrac{1}{8}
         \ge
         \tfrac{1}{2},
     \end{align}
    where the last inequality follows from the bounds,
     $
     \mathbb{P}[
         \max_{i \in [k]}
         \tfrac{
            |
            \veps_i^T\vw^\star
            |
        }
        {100kd^2}
        \ge 
        \tfrac{\sqrt{32\ln(k)}}
        {100 kd^2}
     ]
     \le
    \tfrac{2}{k^9}
    \le
    10^{-7}
    $, and 
    $
     \mathbb{P}[
         \max_{i \in [k]}
         \tfrac{
            |
            5\Bar{\veps}_i^T\vw^\star
            |
        }
        {2 \sqrt{\ln(kd)}}
        \ge 
        \tfrac{5\sqrt{32 \ln(k)}}
        {2 \sqrt{\ln(kd)}}
     ]
     \le
    \tfrac{2}{k^9}
    \le
    10^{-7}
    $.
     And similarly we upper bound $(\tilde{\vw}_i^1)^T\vw^\star$, 
     \begin{align}
         k(\tilde{\vw}_i^1)^T\vw^\star
         &\le
         \tfrac{1}{8}
         +
         \tfrac{3}{4}
          +
         \tfrac{1}{8}
         \le
         1.
     \end{align}
     Therefore, $2\sqrt{\tfrac{(k+d)\ln(kd)}{kd}} \ge (\vw_i^1)^T\vw^\star \ge \tfrac{1}{8}\sqrt{\tfrac{(k+d)\ln(kd)}{kd}}$, with probability $\ge 1- 10^{-6}$. 
     We can now express $\vw_i^1$ as $\lambda_i \vw^\star + \sqrt{1-\lambda_i^2} \vw^\star_{\perp}$, 
     where $2\sqrt{\tfrac{(k+d)\ln(kd)}{kd}} \ge \lambda_i \ge \tfrac{1}{8}\sqrt{\tfrac{(k+d)\ln(kd)}{kd}}$, 
     $\norm{\vw^\star_{\perp}} = 1$,
     and $(\vw^\star)^T\vw^\star_{\perp} = 0$. 

     \underline{3b. Update Step 2}
     
     We will now show that the 
     $\tfrac{1}{8}\sqrt{\tfrac{(k+d)\ln(kd)}{kd}}$ alignment with the signal vector enables us to filter out the "noise" patches from the "signal" patch.
     This de-noising effect allows us to achieve a stronger constant alignment of each parameter vector with the signal vector.
     
     We begin by analyzing the push forward of all the noise in the dataset $S^{m}_2$ through the LCN model,
     \begin{align}
         \max_{i\in[k], j\in[m]}|\sigma(\vw_i^1)^T\veps_j^{(i)}| 
         &\le
         \tfrac{1}{100 \sqrt{k\ln(kd)^3}}
         \max_{i,j}|(\vw_i^1)^T\veps_j^{(i)}|,
         \\
          &\le
         \tfrac{1}{100 \sqrt{k\ln(kd)^3}}
         \sqrt{32\ln(\sigma^2 k^2(k+d) \ln(kd))},
         \label{lcn_obs_3}
         \\
         &\le
          \tfrac{1}{4\sqrt{k}}
         \le
          \tfrac{1}{32}\sqrt{\tfrac{(k+d)\ln(kd)}{kd}}
         \label{lcn_obs_4}
     \end{align}
     For inequality \ref{lcn_obs_3}, we have used the concentration of the maximum of the absolute value of $mk$ i.i.d. Gaussian random variables,
     $
     \mathbb{P}[
        \max_{i,j}|(\vw_i^1)^T\veps_j^{(i)}|
        \ge 
        \sqrt{32\ln(mk)}
     ]
     \le
    \tfrac{2}{(mk)^9}
    \le
    10^{-7}
     $.
     Recall from the analysis of the first update step that,
     \begin{align}
         2\sqrt{\tfrac{(k+d)\ln(kd)}{kd}} \ge (\vw_i^1)^T\vw^\star \ge \tfrac{1}{8}\sqrt{\tfrac{(k+d)\ln(kd)}{kd}}
         \label{lcn_obs_5}
     \end{align}
     From \ref{lcn_obs_4}, \ref{lcn_obs_5}, for all $j \in [m]$, and $i \in [k]$, we have
     \begin{align}
        (2+\tfrac{1}{32})\sqrt{\tfrac{(k+d)\ln(kd)}{kd}}
         \ge y_j(\vw_i^1)^T\vx_j^{(i)} 
         &\ge
         (\tfrac{1}{8} - \tfrac{1}{32})\sqrt{\tfrac{(k+d)\ln(kd)}{kd}}
         \:\: \text{where}, r_{ij}=1,
         \\
         \tfrac{1}{32}\sqrt{\tfrac{(k+d)\ln(kd)}{kd}}\ge y_j(\vw^1)^T\vx_j^{(i)} &\ge 
          - \tfrac{1}{32}\sqrt{\tfrac{(k+d)\ln(kd)}{kd}}, \:\: \text{where}, r_{ij}=0.
     \end{align}
     Therefore, with $b_1 =  \tfrac{1}{32}\sqrt{\tfrac{(k+d)\ln(kd)}{kd}}$, we filter out all the noise and let the signal pass through,
     \begin{align}
         2\sqrt{\tfrac{(k+d)\ln(kd)}{kd}} \ge 
         \phi_{b_1}(y_j(\vw_i^1)^T\vx_j^{(i)}) &\ge 
         (\tfrac{1}{8}-\tfrac{1}{16})\sqrt{\tfrac{(k+d)\ln(kd)}{kd}}, 
         \:\: \text{where}, r_{ij}=1, 
         \label{lcn_itr_2_input_range_1}\\
         \phi_{b_1}(y_j(\vw_i^1)^T\vx_j^{(i)}) &= 0, \:\: \text{where}, r_{ij}=0.
         \label{lcn_itr_2_input_range_2}
     \end{align}
    We will now follow in the footsteps of our analysis of update step $1$. We seek to prove high probability upper and lower bounds for $(\vw_i^2)^T\vw^\star$.  
    We define 
    $
     \tilde{\vw}_i^2
     =
     \vw_i^1 -
     \eta_2\nabla_{\vw_i^1}l(\vw_i^1,b_1 ;S_2^{m})
    $, and therefore
     $(\vw_i^2)^T\vw^\star
     =
     \tfrac{(\tilde{\vw}_i^2)^T\vw^\star}{\norm{\tilde{\vw}_i^2}}
     $.
     We first evaluate the gradient of the empirical loss function with respect to $\vw$,
    \begin{align}
         \nabla_{\vw_i^1}l([\vw_i^1,b_1] ;S_2^{m}) 
          &= 
         \tfrac{-1}{m} 
         \sum_{j=1}^{m}
         2\left(
            y_j 
            - \sum_{i=1}^k 
            \phi_{b_1}((\vw_i^1)^T\vx^{(i)}_j))
        \right)
        \left(
            \vx^{(i)}_j\phi'_{b_1}((\vw_i^1)^T\vx^{(i)}_j))
        \right),
        \\
         &= 
         \tfrac{-2}{m} 
         \sum_{j=1}^{m}
         \left(
             1
            - \sum_{i=1}^k 
            \phi_{b_1}(y_j (\vw_i^1)^T\vx^{(i)}_j))
        \right)
        \left(
            y_jr_{ij}\vx^{(i)}_j
        \right),
        \label{lcn_eq_itr2_grad_sum}
    \end{align}
    where the last equality follows from \ref{lcn_itr_2_input_range_1}, and \ref{lcn_itr_2_input_range_2}. Now, for large enough $k,d$,
    \begin{align}
        1 \ge 1 -  \tfrac{1}{8}\sqrt{\tfrac{(k+d)\ln(kd)}{kd}}
        \ge
        \alpha_j \coloneqq
         1
        - \sum_{i=1}^k 
        \phi_{b_1}(y_j (\vw_i^1)^T\vx^{(i)}_j)) 
        \ge
        1 - 2\sqrt{\tfrac{(k+d)\ln(kd)}{kd}}. 
    \end{align}
    Substituting the definition of $\alpha_j$ and simplifying \ref{lcn_eq_itr2_grad_sum},
    \begin{align}
        \nabla_{\vw_i^1}l([\vw_i^1,b_1] ;S_2^{m}) 
        &= 
         \tfrac{-1}{m} 
         \sum_{j=1}^{m}
         2\alpha_j
            y_jr_{ij}\vx^{(i)}_j
        \\
        &\overset{d}{=} 
         \tfrac{-1}{m} 
         \sum_{j=1}^{m}
         2r_{ij}\alpha_j
         \left(
            \vw^\star+
            \sigma \veps_j^{(i)}
         \right),
        \\
        &\overset{d}{=}
        -
         \sum_{j=1}^{m}
        \tfrac{2r_{ij}\alpha_j}{m} 
         \vw^\star
         -
         \tfrac{2\sigma
         \sqrt{\sum_{j=1}^{m}
         r_{ij}^2\alpha_j^2}}{m}
         \hat{\veps}_i,
    \end{align}
    where $\hat{\veps}_i \sim \gN(\bm{0}, \rmI_{d})$.
    By Chernoff and Union bounds,
    $\tfrac{m}{2k} \le r_i \coloneqq \sum_{j=m+1}^n  \tfrac{kr_{ij}}{m} \le \tfrac{3m}{2k}$,
    with probability $\ge
    1 - 10^{-7}$.
    Also, we note the concentration of the norm of the Gaussian random vector to show,
    $\mathbb{P}[\max_i|\norm{\hat{\veps}_i} - \sqrt{d}| \ge  10\sqrt{\ln(k)}] \le 2k\exp(-\tfrac{100 \ln(k)}{16}) \le 2kk^{-6} \le 10^{-7}$. 
    
     We are now ready to bound
     $(\vw_i^2)^T\vw^\star
     =
     \tfrac{(\tilde{\vw}_i^2)^T\vw^\star}{\norm{\tilde{\vw}_i^2}}
     $. 
     We denote $a_i =  \tfrac{k}{m}\sum_{j=1}^m \alpha_jr_{ij} \ge \tfrac{1}{3}$.
     \begin{align*}
         \norm{\tilde{\vw}_i^2}
         &=
         \norm{\vw_i^1
          +
         \eta_2\sum_{j=1}^{m}
        \tfrac{2r_{ij}\alpha_j}{m} 
         \vw^\star
         +
         \eta_2\tfrac{2\sigma
         \sqrt{\sum_{j=1}^{m}
         r_{ij}^2\alpha_j^2}}{m}
        \hat{\veps}_i},
        \\
        &\le 
         1
          +
         \eta_2
          \sum_{j=1}^{m}
          \norm{
        \tfrac{2r_{ij}\alpha_j}{m} 
         \vw^\star}
         +
         \eta_2
          \norm{
         \tfrac{2\sigma
         \sqrt{\sum_{j=1}^{m}
         r_{ij}^2\alpha_j^2}}{m}
        \hat{\veps}_i},
        \\
        &\le 
         1
          +
          \tfrac{2a_i\eta_2}{k} \norm{
         \vw^\star}
         +
          \tfrac{\sqrt{6}\sigma\eta_2}{\sqrt{mk}}
          \norm{
        \hat{\veps}_i},
        \\
        &\le 
         1
          +
            10^3  
          \left(
          2a_i +
          \tfrac{\sqrt{6k}\sigma}{\sqrt{m}}
           \norm{
            \hat{\veps}_i}
          \right)
        \\
        &\le 
         1
          +
          10^3  
          \left(
          2a_i +
          \tfrac{\sqrt{6k}\sigma}{\sqrt{\sigma^2 k (k+d) \ln(k+d)}}
        \tfrac{3\sqrt{d}}{2}
        \right)
        \\
        &\le 
         1
          +
           10^3  
          \left(
          2a_i +
         10^{-3,}
        \right)
     \end{align*}
     for a large enough $k,d$. And the lower bound on $(\tilde{\vw}_i^2)^T\vw^\star$ is given by,
     \begin{align}
         (\tilde{\vw}_i^2)^T\vw^\star 
         &=
         (\vw_i^1)^T\vw^\star
          +
         \eta_2\sum_{j=1}^{m}
        \tfrac{2r_{ij}\alpha_j}{m} 
         (\vw^\star)^T\vw^\star
         +
         \eta_2\tfrac{2\sigma
         \sqrt{\sum_{j=1}^{m}
         r_{ij}^2\alpha_j^2}}{m}
        \hat{\veps}_i^T\vw^\star,
        \\
        &\overset{d}{=}
        (\vw_i^1)^T\vw^\star
          +
         \eta_2\sum_{j=1}^{m}
        \tfrac{2r_{ij}\alpha_j}{m} 
         +
         \eta_2\tfrac{2\sigma
         \sqrt{\sum_{j=1}^{m}
         r_{ij}^2\alpha_j^2}}{m}
        \eps;
        \:\: \eps \sim \gN(0,1),
        \\
        &\ge
        -1+
         2a_i \times 10^3
        -
         \tfrac{\sqrt{6k}\sigma}{\sqrt{\sigma^2 k (k+d) \ln(k+d)}}
        \eps
        \times 10^3
        \\
        &\ge
        -1+
         10^3
        (2a_i 
        -
        10^{-3})
        \label{lcn_ssd_part_2_lb_w2},
     \end{align}
     where we have used 
     $\sP[|\tfrac{\sqrt{6k}\sigma \eps}{\sqrt{\sigma^2 k (k+d) \ln(k+d)}}| \ge 10^{-3}]$, with probability $\le 10^{-7}$.
     Therefore, 
     \begin{align}
         \tfrac{(\tilde{\vw}_i^2)^T\vw^\star}
         {\norm{\tilde{\vw}_i^2}}
         \ge
         \tfrac{
             -1+
            10^3 
            \left(
                2a_i - 10^{-3}
            \right)
         }{
            1
          +
          10^3 
            \left(
                2a_i + 10^{3}
            \right)
         }
         \ge
         0.96,
     \end{align}
     and this occurs with a probability $\ge 1 - 2 \times 10^{-6}$.
     
     \underline{3c. LCN has Low Risk}
     
     We now show that this large constant alignment guarantees a low risk. We bound the push forward of the noise through the LCN,
     \begin{align}
         |\sigma \max_{j \in [m]}((\vw_i^2)^T\veps_i)|
         \le
         \tfrac{
            6\sqrt{\ln(k)}
        }
        {
        100\sqrt{k\ln(kd)^3}
        }
        \le
        10^{-4}
     \end{align}
     To derive the last inequality, we have used the concentration of the maximum of the absolute value of $k$ i.i.d. Gaussian random variables,
     $
     \mathbb{P}[
         \max_{j \in [k]}
         |
         (\vw_i^2)^T\veps_j
        |
        \ge 
        \sqrt{32\ln(k)}
     ]
     \le
    \tfrac{2}{m^9}
    \le
    10^{-6}
    $.
     For this data sample, let $t \in [k]$ be the index of the signal patch, then,
     \begin{align}
         \phi_{b_2}(y_j(\vw_i^2)^T\vx_j^{(t)}) &\ge 0.959, \\
         \phi_{b_2}(y_j(\vw_i^2)^T\vx_j^{(i)}) &= 0, \:\: \forall \: i\neq t.
     \end{align}
    We note that with probability $1-3 \times 10^{-6}$,  the risk of the classifier less than $(1-0.959)^2$. To bound the risk in the failure case, we note that for any $\vv \in \gW$,
    \begin{align}
        \E[
            (y - \sum_{i=1}^k \phi_b(\vw_i^T\vx^{(i)}))^2
        ]
        &=
        \E[
            (1 - \sum_{i=1}^k \phi_b(y\vw_i^T\vx^{(i)}))^2
        ],
        \\
        &=
        \tfrac{1}{k}
        \sum_{j=1}^k
        \E_{(\vx,y) \sim \text{SSD}_j}[
            (1 - \sum_{i=1}^k \phi_b(y\vw_i^T\vx^{(i)}))^2
        ],
        \\
        &=
        \tfrac{1}{k}
        \sum_{j=1}^k
        \E[
            (1 - \sum_{i \neq j} \phi_b(\sigma\eps_{ij}) - \phi_b(\cos(\alpha_j) + \sigma\eps_{jj}))^2
        ],
    \end{align}
    To evaluate the above expression, we observe that the expectation can be written as,
    \begin{multline}
        \E[
            (1 - \sum_{i \neq j} \phi_b(\sigma\eps_{ij}) - \phi_b(\cos(\alpha_j) + \sigma\eps_{jj}))^2
        ] = 
        \text{Var}[
        (1 - \sum_{i \neq j} \phi_b(\sigma\eps_{ij}) - \phi_b(\cos(\alpha_j) + \sigma\eps_{jj}))
        ]
        \\+
        \E[
            (1 - \sum_{i \neq j} \phi_b(\sigma\eps_{ij}) - \phi_b(\cos(\alpha_j) + \sigma\eps_{jj}))
        ]^2,
    \end{multline}
    \begin{align}
        &=
        \sum_{i \neq j} \text{Var}[\phi_b(\sigma\eps_{ij})] 
        +  \text{Var}[\phi_b(\cos(\alpha_j) + \sigma\eps_{jj})]
        +
        \left(
             \E[\phi_b(\cos(\alpha_j) + \sigma\eps_{jj})]
        \right)^2,
        \\
        &\le
        \sum_{i} \sigma^2 
        +
        \cos^2(\alpha_j) \le k\sigma^2 + 1 \le 2.
    \end{align}
    Substituting this back,
    \begin{align}
         \E[
            (y - \sum_{i=1}^k \phi_b(\vw_i^T\vx^{(i)}))^2
        ]
        \le 
        \tfrac{1}{k}
        \sum_{j=1}^k
        2
        = 2
    \end{align}
    Therefore, the expected risk of the trained LCN is upper bounded as,
     \begin{align}
         \E
        \left[
            R\left(
                \Bar{\theta}_n,
                P
            \right)
        \right]
        \le
        (1-0.959)^2 + 6 \times 10^{-6}
        \le
        \delta
     \end{align}
    
\end{proof}

\newpage
\section{LCN vs CNN Separation Results}
\subsection{LCN Lower Bound}
\label{lcn_lower_appendix}
The following lemma provides a lower bound on the risk of each function in the class of LCNs over the set of transformations of $\text{SSD}_1$, made using the group $\gU$. The group allows for orthogonal transformations within the patches, and does not allow patches to permute.
\begin{lemma}
\label{ssd_lcn_lemma_lb}
    Let $\gF$ denote the class of functions represented by the set of locally connected neural network models, $\gM_\gL[\gW],$ as defined in \ref{lcn_param}. 
    We define 
    $    
        \gU
        \coloneqq
        \{
        \text{Block}
        \left(
            \rmU_1, \ldots, \rmU_k
        \right)
        \mid
        \rmU_i \in \gO(d)
        \}
    $.
    Let $\gP$ be the set of distributions $\{ \rmU \circ \text{SSD}_1 \mid \rmU \in \gU \}$.
    We define the target function $\theta^\star \colon \gP \rightarrow \gF$ as,
    $\theta^\star(\rmU \circ \text{SSD}_1) = \gM[[\rmU^{(1)}\vw^\star,..,\rmU^{(k)}\vw^\star,b^\star]]$, where 
    $\vw^\star$ is the signal vector, and $b^\star$ is some fixed value in $(0,1)$
    \footnote{The claim and the proof do not depend on the chosen value of $b^\star$}.
    Let $\gF_\gP$ be the codomain of $\theta^\star$.
    Consider 
    $\rho \colon (\gF,\gF_\gP) \rightarrow \gR$,
    \begin{align}
        \rho(f,\theta^\star(\rmU \circ \text{SSD}_1)) = 
        \bigg(
            1 - 
                \max(0,
                    \norm{\vw_1}
                    \cos(\alpha_1)
                )
        \bigg)^2,
    \end{align}
    where 
    $
        \cos(\alpha_1) = \tfrac{\vw_1^T\rmU^{(1)}\vw^\star}{\norm{\vw_1}}
    $.
    Then, the risk of $f \in F$, on
    $\rmU \circ \text{SSD}_1 \in \gP$ satisfies,
    \begin{align}
        R\left(
            f,
            \rmU \circ \text{SSD}_1
        \right)
        \ge
        \rho(f,\theta^\star(\rmU \circ \text{SSD}_1)).
    \end{align}
\end{lemma}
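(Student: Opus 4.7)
The strategy is to reduce the risk to a one-dimensional expectation via symmetry, apply the variance inequality to discard the contribution from patches $2,\dots,k$, and then bound the resulting post-activation mean from above by $\max(0, \norm{\vw_1}\cos(\alpha_1))$.

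First, I exploit that $\phi_b$ is odd and $y\in\{\pm 1\}$ to write $(y-f(\vx))^2 = (1-yf(\vx))^2$ and $yf(\vx) = \sum_{i=1}^k \phi_b(\vw_i^T (y\vx^{(i)}))$. Under $\rmU \circ \text{SSD}_1$, conditional on $y$ we have $\vx^{(1)} \sim \gN(y\rmU^{(1)}\vw^\star, \sigma^2\rmI_d)$ and $\vx^{(i)} \sim \gN(\vzero, \sigma^2\rmI_d)$ for $i\ge 2$, jointly independent; by symmetry of the Gaussian, the distribution of each $y\vx^{(i)}$ does not depend on $y$. Setting $a \coloneqq \vw_1^T\rmU^{(1)}\vw^\star = \norm{\vw_1}\cos(\alpha_1)$, this yields
\begin{align*}
    R(f, \rmU \circ \text{SSD}_1) = \E\bigl[(1-S)^2\bigr], \qquad S = \sum_{i=1}^k \phi_b(Z_i),
\end{align*}
with $Z_1 \sim \gN(a, \sigma^2\norm{\vw_1}^2)$ and $Z_i \sim \gN(0, \sigma^2\norm{\vw_i}^2)$ for $i\ge 2$, all independent.

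Next, I apply $\E[X^2]\ge (\E X)^2$ to $X=1-S$, obtaining $R\ge (1-\E S)^2$. For $i\ge 2$, the oddness of $\phi_b$ together with the symmetry of $Z_i$ gives $\E[\phi_b(Z_i)]=0$, so $\E S = g(a)$ where $g(a)\coloneqq \E[\phi_b(\gN(a,\sigma^2\norm{\vw_1}^2))]$. Hence $R\ge (1-g(a))^2$.

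The crux is then to show $g(a)\le \max(0,a)$. Monotonicity of $g$ (from $\phi_b'\ge 0$) combined with $g(0)=0$ (oddness) handles $a\le 0$. For $a>0$, define $h(u)\coloneqq \phi_b(u)-u$; a direct check gives that $h$ is odd and $h(u)\le 0$ for $u\ge 0$. The standard symmetrization
\begin{align*}
    \E[h(Z_1)] = \int_0^\infty h(u)\bigl(p(u)-p(-u)\bigr)\,du
\end{align*}
(where $p$ denotes the density of $Z_1$) is non-positive since $p(u)>p(-u)$ for $u>0$ when $a>0$. Thus $g(a)-a = \E[h(Z_1)] \le 0$. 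Because $|a|\le \norm{\vw_1}\le 1$, we get $0\le 1-\max(0,a)\le 1-g(a)$, and squaring preserves the inequality to conclude $(1-g(a))^2 \ge (1-\max(0,a))^2 = \rho(f,\theta^\star(\rmU\circ \text{SSD}_1))$. The only delicate step is the skewness computation giving $g(a)\le a$ for $a>0$; the rest is bookkeeping using independence across patches and the oddness of $\phi_b$.
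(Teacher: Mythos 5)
Your proof is correct and reaches the same intermediate reduction as the paper (Jensen's inequality plus oddness of $\phi_b$ to isolate patch~1, giving $R \ge (1 - \E[\phi_b(Z_1)])^2$ with $Z_1 \sim \gN(\norm{\vw_1}\cos(\alpha_1), \sigma^2\norm{\vw_1}^2)$), but the final step --- showing $\E[\phi_b(Z_1)] \le \max(0, \norm{\vw_1}\cos(\alpha_1))$ --- is handled in a genuinely different way. The paper evaluates $\E[\max(0, \bar\mu - b + \bar\sigma\eps)] - \E[\max(0, -\bar\mu - b + \bar\sigma\eps)]$ in closed form using Gaussian partial-expectation formulas and then computes $\partial/\partial b$ of the resulting expression, observing that this derivative equals $\Phi((b - \bar\mu)/\bar\sigma) - \Phi((\bar\mu + b)/\bar\sigma)$, whose sign matches $-\mathrm{sign}(\bar\mu)$, so the supremum over $b \ge 0$ sits at $b = 0$ (value $\bar\mu$) when $\bar\mu > 0$ and at $b = \infty$ (value $0$) when $\bar\mu \le 0$. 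You instead sidestep the closed-form computation entirely: the pointwise bound $\phi_b(u) \le u$ on $[0,\infty)$ makes $h = \phi_b - \mathrm{id}$ odd and nonpositive on $[0,\infty)$, and the symmetrization identity $\E[h(Z_1)] = \int_0^\infty h(u)(p(u) - p(-u))\,du$ together with $p(u) > p(-u)$ for $u > 0$ (when the mean is positive) yields $g(a) \le a$ immediately, with the $a \le 0$ case following from monotonicity of $g$ and $g(0) = 0$. Your argument is more elementary and does not rely on the Gaussian structure beyond unimodality and symmetry about the mean, whereas the paper's derivative-based argument yields the exact optimizer $b^\star$ as a byproduct; for the purposes of this lemma, both routes deliver exactly the claimed bound, and yours is arguably cleaner.
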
 
\begin{proof}
   Observe that,
    \begin{align}
        R\left(f,
            \rmU \circ P
        \right)
        &=
        \E_{(\vx,y) \sim \rmU \circ P}
        \left[
                (y - f(\vx)
                )^2
        \right],
        \\
        &=
        \E_{(\vx,y) \sim \rmU \circ P}
        \left[
                \left(
                y - 
                    \frac{1}{k}
                    \sum_{i=1}^k
                    \phi_b(\vw_i^T\vx)
                \right)^2
        \right],
        \\
        &=
        \E_{\veps, \vx = \rmU\vw^\star + \sigma\veps}
        \left[
                \left(
                    1 - 
                    \sum_{i=1}^k
                    \phi_b(\vw_i^T\vx)
                \right)^2
        \right],
        \\
        &\hspace{-0.25cm}\overset{\text{Jensens'}}{\ge}
        \left(
            \E_{\veps, \vx}
            \left[1 - 
                    \sum_{i=1}^k
                    \phi_b(\vw_i^T\vx)
            \right]
        \right)^2,
        \\
        &=
        \left(1 - 
            \sum_{i \neq 1}^k
            \E_{\eps}[
                \phi_b(
                \norm{\vw_i}\sigma \epsilon)
            ]
        -
            \E_{\eps}[
                \phi_b(
                \norm{\vw_1}\cos(\alpha_t) + 
                \norm{\vw_1}\sigma \epsilon)
            ]
        \right)^2,
        \\
        &=
        \left(1
        -
            \E_{\eps}[
                \phi_b(
                \norm{\vw_1}\cos(\alpha_t) + 
                \norm{\vw_1}\sigma \epsilon)
            ]
        \right)^2,
        \label{lcn_ssd_lower_int1}
    \end{align}
    where in \ref{lcn_ssd_lower_int1}, we used the fact that since $\phi_b(-x) = -\phi_b(x)$, and therefore $\E[\phi_b( \norm{\vw_i}\sigma \epsilon)] = 0$,
    for all $i \neq 1$.
    For brevity, we define $\Bar{\mu} = \norm{\vw_1}\cos(\alpha_1)$, and 
    $\Bar{\sigma} = \norm{\vw_1}\sigma$.
    Then observe that,
    \begin{align}
    \E_{\eps}[
            \phi_b(
            \Bar{\mu} + \Bar{\sigma}\epsilon)
    ]
    &=
     \E_{\eps}[
        \max(0, \Bar{\mu} - b + \Bar{\sigma} \epsilon)
     ]
     -
     \E_{\eps}[
        \max(0, -\Bar{\mu} - b - \Bar{\sigma} \epsilon)
     ],
     \\
     &=
     \E_{\eps}[
        \max(0, \Bar{\mu} - b + \Bar{\sigma} \epsilon)
     ]
     -
     \E_{\eps}[
        \max(0, -\Bar{\mu} - b + \Bar{\sigma} \epsilon)
     ].
     \label{lcn_lower_lemma_exp_two}
    \end{align}
    We begin by evaluating $\E_{\eps}[
        \max(0, \Bar{\mu} - b + \Bar{\sigma} \epsilon)
     ]$,
     \begin{align}
         \E_{\eps}[
            \max(0, \Bar{\mu} - b + \Bar{\sigma} \epsilon)
         ]
         &=
         \tfrac{1}{2}
         \left(
            \E_{\eps}[
                \Bar{\mu} - b + \Bar{\sigma} \epsilon
             ]
             +
             \E_{\eps}[
                |\Bar{\mu} - b + \Bar{\sigma} \epsilon|]
         \right),
        \end{align}
        \vspace{-0.5cm}
        \begin{multline}
         \hspace{5.3cm}=
         \tfrac{1}{2}(\Bar{\mu} - b)
         +
        \tfrac{1}{2}
        \Big(
                 \Bar{\sigma}
                 \sqrt{\tfrac{2}{\pi}}
                 \exp
                 \left(
                    -\tfrac{(\Bar{\mu}-b)^2}{2\Bar{\sigma}^2}
                \right)
             +
             \\
             (\Bar{\mu} - b)
             \left(
                1 - 
                2
                \Phi\left(
                -\tfrac{\Bar{\mu}-b}{\Bar{\sigma}}
                \right)
             \right)
        \Big),
        \end{multline}
        \vspace{-0.5cm}
        \begin{align}
        \hspace{2.3cm}
        =
         (\Bar{\mu} - b)
          \left(
            1 - 
            \Phi\left(
            -\tfrac{\Bar{\mu}-b}{\Bar{\sigma}}
            \right)
         \right)
         +
         \eta_1,
     \end{align}
     where 
     $\eta_1 
        = \Bar{\sigma}
         \sqrt{\tfrac{1}{2\pi}}
         \exp
         \left(
            -\tfrac{(\Bar{\mu}-b)^2}{2\Bar{\sigma}^2}
        \right)
    $. Similarly, for the second term, we have,
    \begin{align}
         \E_{\eps}[
            \max(0, -\Bar{\mu} - b + \Bar{\sigma} \epsilon)
         ]
         =
         (-\Bar{\mu} - b)
          \left(
            1 - 
            \Phi\left(
            \tfrac{\Bar{\mu}+b}{\Bar{\sigma}}
            \right)
         \right)
         +
         \eta_2,
    \end{align}
    where 
     $\eta_2 
        = \Bar{\sigma}
         \sqrt{\tfrac{1}{2\pi}}
         \exp
         \left(
            -\tfrac{(\Bar{\mu}+b)^2}{2\Bar{\sigma}^2}
        \right)
    $.
    Substituting these results back to \ref{lcn_lower_lemma_exp_two},
    \begin{align}
     \E_{\eps}[
            \phi_b(
            \Bar{\mu} + \Bar{\sigma}\epsilon)
    ]
    =
    (\Bar{\mu} - b)
      \left(
        1 - 
        \Phi\left(
        -\tfrac{\Bar{\mu}-b}{\Bar{\sigma}}
        \right)
     \right)
     +
     \eta_1
     +
     (\Bar{\mu} + b)
          \left(
            1 - 
            \Phi\left(
            \tfrac{\Bar{\mu}+b}{\Bar{\sigma}}
            \right)
         \right)
         -
         \eta_2.
    \end{align}
    Now observe that,
    \begin{multline}
        \tfrac{\partial}{\partial b} \E_{\eps}[
            \phi_b(
            \Bar{\mu} + \Bar{\sigma}\epsilon)
    ]
    =
    - \left(
        1 - 
        \Phi\left(
        -\tfrac{\Bar{\mu}-b}{\Bar{\sigma}}
        \right)
     \right)
     -
     \tfrac{\Bar{\mu} - b}{\Bar{\sigma}}
     \left(
        \Phi'\left(
        -\tfrac{\Bar{\mu}-b}{\Bar{\sigma}}
        \right)
     \right)
     +
        \tfrac{\Bar{\mu} - b}{\sqrt{2\pi}\Bar{\sigma}}
         \exp
         \left(
            -\tfrac{(\Bar{\mu}-b)^2}{2\Bar{\sigma}^2}
        \right)
     \\
     +
          \left(
            1 - 
            \Phi\left(
            \tfrac{\Bar{\mu}+b}{\Bar{\sigma}}
            \right)
         \right)
    -
    \tfrac{(\Bar{\mu} + b)}{\Bar{\sigma}}
          \left(
            \Phi'\left(
            \tfrac{\Bar{\mu}+b}{\Bar{\sigma}}
            \right)
         \right)
    +
    \tfrac{(\Bar{\mu} + b)}{\sqrt{2\pi}\Bar{\sigma}}
         \exp
         \left(
            -\tfrac{(\Bar{\mu}+b)^2}{2\Bar{\sigma}^2}
        \right).
    \end{multline}
    Substituting the expression for $\Phi'$,
    \begin{align}
        \tfrac{\partial}{\partial b} \E_{\eps}[
            \phi_b(
            \Bar{\mu} + \Bar{\sigma}\epsilon)
    ]
    &=
    \left(
            1 - 
            \Phi\left(
            \tfrac{\Bar{\mu}+b}{\Bar{\sigma}}
            \right)
         \right)
    - \left(
        1 - 
        \Phi\left(
        -\tfrac{\Bar{\mu}-b}{\Bar{\sigma}}
        \right)
     \right),
     \\
     &=
        \Phi\left(
            \tfrac{b - \Bar{\mu}}{\Bar{\sigma}}
        \right) - 
        \Phi\left(
            \tfrac{\Bar{\mu}+b}{\Bar{\sigma}}
        \right).
    \end{align}
    If $\Bar{\mu} > 0$, then the gradient with respect to $b$ is always negative when $b > 0$, therefore the maxima of $\E_{\eps}[
            \phi_b(
            \Bar{\mu} + \Bar{\sigma}\epsilon)
    ]$
    occurs at $b=0$, with the maxima being $\Bar{\mu} \le 1$. 
    If $\Bar{\mu} < 0$, then the gradient is always positive when $b > 0$, therefore the maxima of $\E_{\eps}[
            \phi_b(
            \Bar{\mu} + \Bar{\sigma}\epsilon)
    ]$
    occurs at $b=+\infty$, with the maxima being $0 \le 1$.
    And finally, for $\Bar{\mu} = 0$, 
    $\E_{\eps}[
            \phi_b(
            \Bar{\mu} + \Bar{\sigma}\epsilon)
    ] = 0 < 1$, by symmetry. Using these observations with \ref{lcn_ssd_lower_int1} proves the result,
    \begin{align}
       R\left(f,
            \rmU \circ P
        \right)
        \ge 
            \left(
            1 - \max(\norm{\vw_1}\cos(\alpha_1), 0)
            \right)^2
    \end{align}
\end{proof}

The next lemma establishes that the lower bound on the risk, as defined in Lemma \ref{ssd_lcn_lemma_lb}, meets the relaxed conditions of our variant of Fano's Theorem \ref{modified_fano}.

\begin{lemma}
\label{ssd_lcn_lemma_triangle}
    Under the notation established in the statement of Lemma \ref{ssd_lcn_lemma_lb},
    we define the set $\tilde{\gU} \subseteq \gU$, such that for all
    $\rmU \neq \rmV \in \tilde{\gU}$,
    $(\rmU^{(1)}\vw^\star)^T(\rmV^{(1)}\vw^\star) < 10^{-3}$, and 
    for all $\rmU \in \tilde{\gU}$, $t \in \{2,\ldots,k\}$,
    $\rmU^{(t)}\vw^\star = e_{dt}$.
    Then, for all $\rmU \neq \rmV \in \tilde{\gU}$,
\begin{align}
    \rho(f, \theta^\star(\rmU \circ P)) < 10^{-2}
    \implies
    \rho(f, \theta^\star(\rmV \circ P)) > 10^{-2},
\end{align}
\end{lemma}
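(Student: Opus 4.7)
The plan is to unfold the definition of $\rho$, convert the hypothesis into a strong alignment condition on $\vw_1$ with $\rmU^{(1)}\vw^\star$, and then combine the norm constraint $\norm{\vw_1}\le 1$ with the near-orthogonality guarantee $(\rmU^{(1)}\vw^\star)^T(\rmV^{(1)}\vw^\star)<10^{-3}$ built into $\tilde{\gU}$ to show $\vw_1$ cannot simultaneously be well aligned with $\rmV^{(1)}\vw^\star$.

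First I would observe that $\norm{\vw_1}\cos(\alpha_1)=\vw_1^T\rmU^{(1)}\vw^\star$, so $\rho(f,\theta^\star(\rmU\circ P))=(1-\max(0,\vw_1^T\rmU^{(1)}\vw^\star))^2$. The assumption $\rho(f,\theta^\star(\rmU\circ P))<10^{-2}$ then immediately gives $\vw_1^T\rmU^{(1)}\vw^\star>0.9$. Using that $\rmU^{(1)}\vw^\star$ is a unit vector, decompose $\vw_1=a\,\rmU^{(1)}\vw^\star+\vw_1^\perp$, where $a=\vw_1^T\rmU^{(1)}\vw^\star\in(0.9,1]$ and $\vw_1^\perp\perp\rmU^{(1)}\vw^\star$. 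The LCN parameter constraint $\norm{\vw_1}\le 1$ then forces $\norm{\vw_1^\perp}^2=\norm{\vw_1}^2-a^2\le 1-a^2<1-0.81=0.19$, so $\norm{\vw_1^\perp}<\sqrt{0.19}<0.44$.

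Next I would upper bound the alignment with $\rmV^{(1)}\vw^\star$. Expanding,
\begin{align*}
\vw_1^T\rmV^{(1)}\vw^\star
=a\,(\rmU^{(1)}\vw^\star)^T(\rmV^{(1)}\vw^\star)+(\vw_1^\perp)^T(\rmV^{(1)}\vw^\star).
\end{align*}
The first term is bounded by $a\cdot 10^{-3}\le 10^{-3}$ by the definition of $\tilde{\gU}$, and the second by Cauchy-Schwarz is at most $\norm{\vw_1^\perp}\cdot\norm{\rmV^{(1)}\vw^\star}<0.44$. Hence $\vw_1^T\rmV^{(1)}\vw^\star<0.45$, and in particular $\max(0,\vw_1^T\rmV^{(1)}\vw^\star)<0.45$. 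Plugging into the definition of $\rho$ gives
\begin{align*}
\rho(f,\theta^\star(\rmV\circ P))=(1-\max(0,\vw_1^T\rmV^{(1)}\vw^\star))^2>(0.55)^2>10^{-2},
\end{align*}
which is the desired conclusion.

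The argument is essentially a triangle-inequality/Pythagoras bookkeeping, and there is no real obstacle: the only subtlety is realizing that a packing condition on the mean vectors ($(\rmU^{(1)}\vw^\star)^T(\rmV^{(1)}\vw^\star)<10^{-3}$) combined with the unit-ball constraint $\norm{\vw_1}\le 1$ suffices to replace the usual semi-metric triangle inequality that the $\rho$-functional itself does not satisfy. The constants $10^{-2}$, $10^{-3}$, $0.9$ are chosen so that the slack in the Pythagorean bound $\sqrt{1-0.81}<0.44$ plus the inner-product slack $10^{-3}$ stays safely below the $0.9$ threshold.
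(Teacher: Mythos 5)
Your proof is correct. The paper proves the same claim by passing to the \emph{chordal (Euclidean) distance} between unit vectors: it normalizes $\vw_1$, notes that $\|u - v\| = \sqrt{2(1-u^\top v)}$ is a genuine metric on the sphere, and applies the reverse triangle inequality to the three unit vectors $\vw_1/\|\vw_1\|$, $\rmU^{(1)}\vw^\star$, $\rmV^{(1)}\vw^\star$, obtaining $\cos(\beta_1)\le 0.7$ and hence $\rho(f,\theta^\star(\rmV\circ P))\ge (0.3)^2$. You instead work directly with $\vw_1$ (unnormalized), decompose it orthogonally along $\rmU^{(1)}\vw^\star$, use the ball constraint $\|\vw_1\|\le 1$ and Pythagoras to cap the orthogonal component below $\sqrt{0.19}$, and finish with Cauchy--Schwarz. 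The two arguments encode the same geometric fact --- a vector in the unit ball that is $0.9$-aligned with one of two nearly orthogonal unit vectors cannot be well aligned with the other --- but your route avoids introducing the chordal metric explicitly and produces a slightly sharper intermediate bound ($\vw_1^\top \rmV^{(1)}\vw^\star<0.45$ versus the paper's $\cos(\beta_1)\le 0.7$); both comfortably clear the $10^{-2}$ threshold. One minor presentational note: it is worth stating, as you implicitly use, that $\rmU^{(1)}\vw^\star$ and $\rmV^{(1)}\vw^\star$ are unit vectors because $\|\vw^\star\|=1$ and $\rmU^{(1)},\rmV^{(1)}\in \gO(d)$.
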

\begin{proof}
    Let $\cos(\alpha_1) = \tfrac{\vw_1^T\rmU^{(1)}\vw^\star}{\norm{\vw_1}}$, and
    $\cos(\beta_1) = \tfrac{\vw_1^T\rmV^{(1)}\vw^\star}{\norm{\vw_1}}$. Now,
    \begin{align}
        \rho(f, \theta^\star(\rmU \circ P)) < 10^{-2}
        &\iff
        \left(
            1
            -
            \max(0, \norm{\vw_1}\cos(\alpha_1))
        \right)^2 < 10^{-2},
        \\
         &\iff
            \max(0, \norm{\vw_1}\cos(\alpha_1))
            >
            0.9.
    \end{align}
    By the triangle inequality, we can get an upper bound on $\cos(\beta_1)$ as,
    \begin{align}
        \sqrt{2(1-\cos(\beta_1))}
        &\ge
        \sqrt{2(1-0.001)}
        -
        \sqrt{2(1-\cos(\alpha_1))},
        \\
        \cos(\beta_i)
        &\le
        1- (\sqrt{(1-0.001)} - \sqrt{(1-0.9)})^2
        \le 0.7.
    \end{align}
    Therefore, $\max(0, \norm{\vw_t}\cos(\beta_t))
            \le
            0.7$, which implies that
    \begin{align}
            \left(
            1
            -
            \max(\norm{\vw_t}\cos(\beta_t), 0)
            \right)^2
            \ge
            (0.3)^2
             > 10^{-2}.
    \end{align}
    \end{proof}

In the following lemma, we prove a sample complexity lower bound of $\Omega(\sigma^2 kd)$ for FCNs on the sub-problem $\text{SSD}_1$ of $\text{DSD}$.

\begin{lemma}
\label{ssd_lcn_thorem}

Let $\gF$ denote the class of functions represented by the set of locally connected neural network models, $\gM_\gL[\gW],$ as defined in \ref{lcn_param}. 
Let $S^n \sim (\text{SSD}_1)^n$ be the $n$ i.i.d. data samples drawn from $\text{SSD}_1$.
Consider the group $\tilde{\gU} \subseteq \gO(kd)$,
such that for all
$\rmU \neq \rmV \in \tilde{\gU}$,
$(\rmU^{(1)}\vw^\star)^T(\rmV^{(1)}\vw^\star) < 10^{-3}$, and 
for all $\rmU \in \tilde{\gU}$, $t \in \{2,\ldots,k\}$,
$\rmU^{(t)}\vw^\star = e_{dt}$.
Let $\xi \in \Xi$ encapsulate the randomization, and let
$\xi \sim P_{\Xi}$. 
If
$\Bar{\theta}(S^n, \xi)$ 
is a $\tilde{\gU}$-equivariant algorithm then, 
for large enough $k,d$, 
\begin{align}
    n_{\delta}(\Bar{\theta}_n)
    =
    \Omega \left(
        \sigma^2
        d
    \right),
\end{align}
where $\delta = 0.5 \times 10^{-2}$.
\end{lemma}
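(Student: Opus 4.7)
The plan is to apply Fano's Theorem for Randomized Algorithms (Theorem~\ref{modified_fano}), using the machinery already assembled in Lemmas~\ref{ssd_lcn_lemma_lb}, \ref{ssd_lcn_lemma_triangle}, and \ref{ssd_kl}, together with Corollary~\ref{gilbert-bound-corollary}. The hard instances are indexed by $\tilde{\gU}$ itself, with distributions $P_\rmU = \rmU\circ\text{SSD}_1$, targets $\theta^\star(P_\rmU) = \gM_L[[\rmU^{(1)}\vw^\star,\ldots,\rmU^{(k)}\vw^\star,b^\star]]$, and separation radius $\delta_0 = 10^{-2}$ matching Lemma~\ref{ssd_lcn_lemma_triangle}. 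The conclusion I seek is that the minimax risk under the surrogate $\rho$ is at least $0.5\times 10^{-2}$ whenever $n = o(\sigma^2 d)$.

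\textbf{Reduction via equivariance.} First I would use the $\tilde{\gU}$-equivariance of $\Bar{\theta}_n$ together with Lemma~\ref{equi_symm} to show, exactly as in the proof of Theorem~\ref{dsd_fnn_sketched_thorem}, that $\E[R(\Bar{\theta}_n,\text{SSD}_1)] = \E[R(\Bar{\theta}_n,\rmU\circ\text{SSD}_1)]$ for every $\rmU\in\tilde{\gU}$. Taking the supremum over $\rmU$ and applying the pointwise lower bound of Lemma~\ref{ssd_lcn_lemma_lb} gives
\[
\E[R(\Bar{\theta}_n,\text{SSD}_1)] \;\ge\; \sup_{\rmU\in\tilde{\gU}} \E[\rho(\Bar{\theta}_n,\theta^\star(\rmU\circ\text{SSD}_1))] \;\ge\; \inf_{\theta\in\Theta} \sup_{\rmU\in\tilde{\gU}} \E[\rho(\theta,\theta^\star(\rmU\circ\text{SSD}_1))] \;=\; \mathfrak{M}_n(\Theta),
\]
where $\Theta$ is the family of $\tilde{\gU}$-equivariant randomized algorithms containing $\Bar{\theta}_n$.

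\textbf{Packing and KL bounds.} To feed Theorem~\ref{modified_fano} three inputs are needed. The relaxed separation property with radius $\delta_0 = 10^{-2}$ is exactly Lemma~\ref{ssd_lcn_lemma_triangle}. For the KL bound, since $\rmU\vmu_1$ has support only in the first patch with block value $\rmU^{(1)}\vw^\star$, Lemma~\ref{ssd_kl} gives
\[
\text{KL}(\rmU\circ\text{SSD}_1 \parallel \rmV\circ\text{SSD}_1) \;=\; \frac{1-(\rmU^{(1)}\vw^\star)^T\rmV^{(1)}\vw^\star}{\sigma^2} \;\le\; \frac{2}{\sigma^2} \;=:\; D.
\]
For the packing size, the first-patch images $\{\rmU^{(1)}\vw^\star : \rmU\in\tilde{\gU}\}$ are unit vectors in $\sR^d$ with pairwise inner products bounded by $10^{-3}$; Corollary~\ref{gilbert-bound-corollary} with $c = 10^{-3}$ and $N = d$ produces such a collection of log-cardinality $\Omega(d)$ for large $d$, and we take $\tilde{\gU}$ to realize this packing. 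Plugging into Theorem~\ref{modified_fano},
\[
\mathfrak{M}_n(\Theta) \;\ge\; 10^{-2}\Bigl(1 - \tfrac{2n/\sigma^2 + \ln 2}{\Omega(d)}\Bigr),
\]
and forcing the right-hand side to exceed $0.5\times 10^{-2}$ requires $n = \Omega(\sigma^2 d)$, which is the claim.

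\textbf{Main care point.} The substantive difficulty is not the calculation but the bookkeeping around the equivariance group: $\tilde{\gU}$ from Lemma~\ref{ssd_lcn_lemma_triangle} is not closed under composition, so ``$\tilde{\gU}$-equivariance'' should really be read as equivariance with respect to a subgroup containing $\tilde{\gU}$, e.g.\ the block-diagonal orthogonal group $\gU_1$ appearing in Theorem~\ref{dsd_ccn_upper_sketched}. Verifying that Lemma~\ref{equi_symm} applied to this larger group still delivers the $\rmU$-invariance of the expected risk for every $\rmU\in\tilde{\gU}$ is the one point needing care; after that, the argument is a direct assembly of the existing lemmas.
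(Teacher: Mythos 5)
Your proposal is correct and follows essentially the same route as the paper: reduce the risk on $\text{SSD}_1$ to a minimax problem over $\tilde{\gU}\circ\text{SSD}_1$ via $\tilde{\gU}$-equivariance and Lemma~\ref{equi_symm}, lower-bound the risk pointwise by $\rho$ using Lemma~\ref{ssd_lcn_lemma_lb}, invoke the separation property from Lemma~\ref{ssd_lcn_lemma_triangle}, bound KL via Lemma~\ref{ssd_kl}, obtain the $\Omega(d)$ log-packing from Corollary~\ref{gilbert-bound-corollary}, and close with Theorem~\ref{modified_fano}. Two small remarks: the paper uses the tighter KL bound $D\le 1/\sigma^2$ because the Gilbert--Varshamov construction yields vectors with nonnegative pairwise inner products (so $1-\cos\alpha\le 1$ rather than $2$), though your $D=2/\sigma^2$ is also valid for an $\Omega$ conclusion; and your ``care point'' about $\tilde{\gU}$ not being a group is reasonable caution but is not a gap, since Definition~\ref{ortho-equi} is stated for arbitrary subsets of $\gO(p)$ and Lemma~\ref{equi_symm} is applied one $\rmU$ at a time (and in the application through Theorem~\ref{dsd_lcn_lower_sketched}, the algorithm is equivariant under the full group $\gU_1\star\gU_2\supset\tilde{\gU}$ anyway).
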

\begin{proof}
We refer to the distribution $\text{SSD}_1$ by $P$. 
Since the algorithm $\bar{\theta}_n$ is $\gU$- equivariant, lemma \ref{equi_symm} gives us that for all $\rmU \in \tilde{\gU}$,
\begin{align}
    \Bar{\theta}(
        \{\vx_i,y_i\}_{n}
    )(\vx)
    &\overset{d}{=}
    \Bar{\theta}(
        \{\rmU\vx_i,y_i\}_{n}
    )(\rmU\vx), \\
    %%%%%%%%%%%%%%%%%%%%%%
    \text{err}
    \left(
        \Bar{\theta}(
            \{\vx_i,y_i\}_{n}
        )(\vx),
        y
    \right)
    &\overset{d}{=}
    \text{err}
    \left(
        \Bar{\theta}(
            \{\rmU\vx_i,y_i\}_{n}
        )(\rmU\vx),
        y
    \right),
    \\
    %%%%%%%%%%%%%%%%%%%%%%
    \E_{S^n \sim P^n}
    \E_{(\vx,y) \sim P}
    \text{err}
    \left(
        \Bar{\theta}(
            \{\vx_i,y_i\}_{n}
        )(\vx),
        y
    \right)
    &=
    \E_{S^n \sim P^n}
    \E_{(\vx,y) \sim P}
    \text{err}
    \left(
        \Bar{\theta}(
            \{\rmU\vx_i,y_i\}_{n}
        )(\rmU\vx),
        y
    \right),
    \\
    %%%%%%%%%%%%%%%%%%%%%%
    \E_{S^n \sim P^n}
    \E_{(\vx,y) \sim P}
    \left[
        \text{err}
        \left(
            \Bar{\theta}_n(\vx),
            y
        \right)
    \right]
    &=
    \E_{S^n \sim (\rmU \circ P)^n}
    \E_{(\vx,y) \sim \rmU \circ P}
    \left[
        \text{err}
        \left(
            \Bar{\theta}_n(\vx),
            y
        \right)
    \right]
    \\
    %%%%%%%%%%%%%%%%%%%%%%
    \E_{S^n \sim P^n}
    \left[
        R\left(
            \Bar{\theta}_n,
            P
        \right)
    \right]
    &=
    \E_{S^n \sim (\rmU \circ P)^n}
    \left[
        R\left(
            \Bar{\theta}_n,
            \rmU \circ P
        \right)
    \right].
\end{align}
Taking $\sup$ on the right-hand side,
\begin{align}
    \E_{S^n \sim P^n}
    \left[
        R\left(
            \Bar{\theta}_n,
            P
        \right)
    \right]
    &=
    \sup_{\rmU \circ P \in \tilde{\gU} \circ P}
    \E
    \left[
        R\left(
            \Bar{\theta}_n,
            \rmU \circ P
        \right)
    \right],
     \label{lcn_ssd_lower_int2}
\end{align}
An application of corollary \ref{gilbert-bound-corollary} gives the bound $\ln(|\Bar{\gU}|) \ge 0.99 d$.

In order to apply our variant of Fano's Theorem \ref{modified_fano}, we set the following variables:
$\mathcal{P} =  \tilde{\gU} \circ P$;
$\gP_\gV = \tilde{\gU} \circ P$;
$\gF, \: \Xi$, and $P_{\Xi}$ are already defined in the lemma;
$
    \Theta = 
    \{ 
        {\theta} 
        \mid
        {\theta} \colon ((\gX,\gY)^n, \Xi) \rightarrow \gF 
    \}
$;
$\theta^\star(\rmU \circ P) = \gM[[\rmU^{(1)}\vw^\star,..,\rmU^{(k)}\vw^\star,b^\star]]$,
where $\rmU \in \tilde{\gU}$, $b^\star$ is some fixed value in $(0,1)$\footnote{The claim and the proof do not depend on the chosen value of $b^\star$.};
and
$
    \rho(f,\theta^\star(\rmU \circ P)) = 
    \left(
        1 - 
        \max(0,
            \norm{\vw_1}
            \cos(\alpha_1)
        )
    \right)^2
$,
where $\rmU \in \tilde{\gU}$, and $\cos(\alpha_1) = \tfrac{\vw_1^T\rmU^{(1)}\vw^\star}{\norm{\vw_1}}$.
Recall from Lemma \ref{ssd_kl} that 
    $\text{KL}(\rmU \circ P \parallel \rmV \circ P) \le \tfrac{0.999}{\sigma^2} < \tfrac{1}{\sigma^2}$.

    We are now ready to apply Fano's Theorem \ref{modified_fano}, using the results from Lemmas \ref{ssd_lcn_lemma_lb},\ref{ssd_lcn_lemma_triangle}, and \ref{lcn_ssd_lower_int2},
    \begin{align}
     \E_{S^n \sim P^n}
    \left[
        R\left(
            \Bar{\theta}_n,
            P
        \right)
    \right]
    &\ge
    \sup_{\rmU \circ P \in \Bar{\gU} \circ P}
    \E
    \left[
        R\left(
            \Bar{\theta}_n,
            \rmU \circ P
        \right)
    \right],
    \\
    &\ge
    \inf_{\theta \in \Theta}
    \sup_{\rmU \circ P \in \Bar{\gU} \circ P}
    \E
    \left[
        R\left(
            {\theta}_n,
            \rmU \circ P
        \right)
    \right],
    \label{lcn_ssd_compare_last_exp}
    \\
    &\ge
    10^{-2}
    \left(
        1 - \tfrac{n/\sigma^2 + \ln(2)}{0.99d}
    \right).
    \end{align}
    From the above, it is easy to see that with $n = \tfrac{1}{4}\sigma^2d$ samples, the algorithm incurs an expected risk greater than $\tfrac{1}{2}10^{-2}$, proving the result.
\end{proof}

\newpage

We now present the formal statement and the proof of Theorem \ref{dsd_lcn_lower_sketched}, which establishes the $\Omega(\sigma^2kd)$ sample complexity lower bound for LCNs when trained on DSD.

\newtheorem*{retheorem_lcn_lower}{Theorem~\ref{dsd_lcn_lower_sketched}}
\begin{retheorem_lcn_lower}[Formal]
Let $\gF$ denote the class of functions represented by the set of locally connected neural network models, $\gM_\gL[\gW],$ as defined in \ref{fnn_param}. 
Let $S^n \sim (\text{DSD})^n$ be the $n$ i.i.d. data samples drawn from $\text{DSD}$.
We define the following groups, 
    $    
        \gU_1
        \coloneqq
        \{
        \text{Block}
        \left(
            \rmU_1, \ldots, \rmU_k
        \right)
        \mid
        \rmU_i \in \gO(d)
        \}
    $,
    $
    \gU_2 \coloneqq 
    \{
        \rmU \in \gO_p(kd)
        \mid
        \text{idx}_{kd}(\rmU \ve_{(i-1)d+1}) + j - 1 = 
        \text{idx}_{kd}(\rmU \ve_{(i-1)d+j}), \:
        \forall i \in [k], j \in [d]
    \}
    $,
    and $\gU = \gU_1 \star \gU_2$.
Let $\{F_t\}_{T}$ be the set of update functions, and
let the
model parameters be initialized as $\vw^0 \sim W$.
If
$\Bar{\theta}_n(S^n, \vw^0; \gM_F[\gW], \{F_t\}_T)$ 
is a $\gU$-equivariant algorithm, then, for large enough $k,d$, the sample complexity is given by,
\begin{align}
    n_{\delta}(\Bar{\theta}_n)
    =
    \max(\Omega \left(
        \sigma^2
        kd
    \right),
    40k),
\end{align}
where $\delta = 0.25 \times 10^{-2}$.
\end{retheorem_lcn_lower}
\begin{proof}
For simplicity will refer to the distribution $\text{DSD}$ by $P$, and the distribution $\text{SSD}_t$ by $Q_t$, for $t \in [k]$.
Since the algorithm $\bar{\theta}_n$ is $\gU$-equivariant, lemma \ref{equi_symm} gives us that for all $\rmU \in \gU$,
\begin{align}
    \Bar{\theta}(
        \{\vx_i,y_i\}_{n}
    )(\vx)
    &\overset{d}{=}
    \Bar{\theta}(
        \{\rmU\vx_i,y_i\}_{n}
    )(\rmU\vx), \\
    \text{err}
    \left(
        \Bar{\theta}(
            \{\vx_i,y_i\}_{n}
        )(\vx),
        y
    \right)
    &\overset{d}{=}
    \text{err}
    \left(
        \Bar{\theta}(
            \{\rmU\vx_i,y_i\}_{n}
        )(\rmU\vx),
        y
    \right),
    \\
    \E_{S^n \sim P^n}
    \E_{(\vx,y) \sim P}
    \text{err}
    \left(
        \Bar{\theta}(
            \{\vx_i,y_i\}_{n}
        )(\vx),
        y
    \right)
    &=
    \E_{S^n \sim P^n}
    \E_{(\vx,y) \sim P}
    \text{err}
    \left(
        \Bar{\theta}(
            \{\rmU\vx_i,y_i\}_{n}
        )(\rmU\vx),
        y
    \right),
    \\
     %%%%%%%%%%%%%%%%%%%%%%
    \E_{S^n \sim P^n}
    \left[
        R\left(
            \Bar{\theta}_n,
            P
        \right)
    \right]
    &=
    \E_{S^n \sim (\rmU \circ P)^n}
    \left[
        R\left(
            \Bar{\theta}_n,
            \rmU \circ P
        \right)
    \right],
     %%%%%%%%%%%%%%%%%%%%%%
     \\
    &=
    \E_{S^n \sim (\rmU \circ P)^n}
    \frac{1}{k}
    \sum_{i=1}^k
    \left[
        R\left(
            \Bar{\theta}_n,
            \rmU \circ Q_i
        \right)
    \right],
     \\
    &=
    \frac{1}{k}
    \sum_{i=1}^k
    \E_{S^n \sim (\rmU \circ P)^n}
    \left[
        R\left(
            \Bar{\theta}_n,
            \rmU \circ Q_i
        \right)
    \right].
    \label{lcn_dsd_eq_sum_k_risk}
\end{align}
To simplify \ref{lcn_dsd_eq_sum_k_risk}, we begin by showing that the expected risk incurred by the algorithm is the same for every distribution $\rmU \circ Q_i$. Specifically, for all $i,j \in [k]$,
\begin{align}
\E_{S^n \sim (\rmU \circ P)^n}
\left[
    R\left(
        \Bar{\theta}_n,
        \rmU \circ Q_i
    \right)
\right]
=
\E_{S^n \sim (\rmU \circ P)^n}
\left[
    R\left(
        \Bar{\theta}_n,
        \rmU \circ Q_j
    \right)
\right].
\label{thought_exp_fnn}
\end{align}
For $i=j$, the result trivially holds. So we can assume that $i \neq j$. Observe that because of the block structure of $\gU_1$, $\rmU\vmu_i \in \text{Span}(\{\ve_{(i-1)d+j}\}_{j \in [d]})$
$\forall i \in [k]$. 
Therefore $\exists \rmU_1 \in \gU_1, \rmU_2 \in \gU_2$, such that,  
$\rmU_1\rmU_2\rmU\vmu_l = \rmU\vmu_l$ for all $l \notin \{i,j\}$,  and 
$\rmU_1\rmU_2\rmU\vmu_i = \rmU\vmu_j$, $\rmU_1\rmU_2\rmU\vmu_j = \rmU\vmu_i$.
Since  $\tilde{\rmU} \coloneqq \rmU_1\rmU_2 \in \gU$ 
and $\Bar{\theta}_n$ is a $\gU$-orthogonally equivariant algorithm, from lemma \ref{equi_symm},
\begin{align}
    \Bar{\theta}(
        \{\vx_i,y_i\}_{n}
    )(\vx)
    &\overset{d}{=}
    \Bar{\theta}(
        \{\rmU_1\vx_i,y_i\}_{n}
    )(\rmU_1\vx), 
    \\
    \text{err}
    \left(
        \Bar{\theta}(
            \{\vx_i,y_i\}_{n}
        )(\vx),
        y
    \right)
    &\overset{d}{=}
    \text{err}
    \left(
        \Bar{\theta}(
            \{\rmU_1\vx_i,y_i\}_{n}
        )(\rmU_1\vx),
        y
    \right),
    \end{align}
\vspace{-0.6cm}
\begin{multline}
    \E_{S^n \sim (\rmU \circ P)^n}
    \E_{\rmU \circ Q_i}
    \left[
        \text{err}
        \left(
            \Bar{\theta}(
                \{\vx_i,y_i\}_{n}
            )(\vx),
            y
        \right)
    \right]
    =
    \E_{S^n \sim (\rmU \circ P)^n}
    \E_{\rmU \circ Q_i}
    \big[
    \text{err} 
        (
            \Bar{\theta}(
                \{\rmU_1\vx_i,y_i\}_{n}
            )
        \\
            (\rmU_1\vx),
            y
        )
    \big],
\end{multline}
\vspace{-0.6cm}
\begin{multline}
    \E_{S^n \sim (\rmU \circ P)^n}
    \E_{\rmU \circ Q_i}
    \left[
        \text{err}
        \left(
            \Bar{\theta}(
                \{\vx_i,y_i\}_{n}
            )(\vx),
            y
        \right)
    \right]
    =
    \E_{S^n \sim (\rmU_1\rmU \circ P)^n}
    \E_{\rmU_1\rmU \circ Q_i}
    \big[
    \text{err} 
        \big(
            \Bar{\theta}(
                \{\vx_i,y_i\}_{n}
            )
            \\
            (\vx),
            y
        \big)
    \big].
\end{multline}
From the construction of $\rmU_1$, we know that $\rmU_1\rmU \circ P \overset{d}{=} \rmU \circ P$, and
$\rmU_1\rmU \circ Q_i \overset{d}{=} \rmU \circ Q_j$,
\begin{align}
    \E_{S^n \sim (\rmU \circ P)^n}
    \E_{\rmU \circ Q_i}
    \left[
        \text{err}
        \left(
            \Bar{\theta}(
                \{\vx_i,y_i\}_{i}
            )(\vx),
            y
        \right)
    \right]
    &=
    \E_{S^n \sim (\rmU \circ P)^n}
    \E_{\rmU \circ Q_j}
    \big[
    \text{err} 
        \left(
            \Bar{\theta}(
                \{\vx_i,y_i\}_{i}
            )
            (\vx),
            y
        \right)
    \big],
    \\
    \E_{S^n \sim (\rmU \circ P)^n}
    \left[
        R\left(
            \Bar{\theta}_n,
            \rmU \circ Q_i
        \right)
    \right]
    &=
    \E_{S^n \sim (\rmU \circ P)^n}
    \left[
        R\left(
            \Bar{\theta}_n,
            \rmU \circ Q_j
        \right)
    \right].
\end{align}
This proves the claim \ref{thought_exp_fnn}. Substituting it back into \ref{lcn_dsd_eq_sum_k_risk},
\begin{align}
    \E_{S^n \sim P^n}
    \left[
        R\left(
            \Bar{\theta}_n,
            P
        \right)
    \right]
    &=
    \E_{S^n \sim (\rmU \circ P)^n}
    \left[
        R\left(
            \Bar{\theta}_n,
            \rmU \circ Q_1
        \right)
    \right],
    \\
    &=
    \sup_{\rmU \in \tilde{\gU}}
    \E_{S^n \sim (\rmU \circ P)^n}
    \left[
        R\left(
            \Bar{\theta}_n,
            \rmU \circ Q_1
        \right)
    \right],
    \label{dsd_fnn_lower_conn_ssd}
\end{align}
where 
$
    \tilde{\gU} \subseteq \gU_1
$
is the set of "hard instances" such that,
for all 
$\rmU \neq \rmV \in \tilde{\gU}$, 
$(\rmU\vmu_1)^T(\rmV\vmu_1) < 10^{-3}$,
and 
for all $\rmU \in \tilde{\gU}$, and $i \in \{2,\ldots,k\}$, ${\rmU}\vmu_i = \ve_{dt}$, 
Let
$\Xi = \gW$, 
$P_\Xi = W$,  
and 
$
    \Theta = 
    \{ 
        {\theta} 
        \mid
        {\theta} \colon ((\gX,\gY)^n, \Xi) \rightarrow \gF 
    \}
$.
It is easy to note that $\bar{\theta}_n \in \Theta$.
Therefore,
\begin{align}
    \E_{S^n \sim P^n}
    \left[
        R\left(
            \Bar{\theta}_n,
            P
        \right)
    \right]
    &\ge
    \inf_{\theta_n \in \Theta}
    \sup_{\rmU \in \tilde{\gU}}
    \E_{S^n \sim (\rmU \circ P)^n}
    \left[
        R\left(
            {\theta}_n,
            \rmU \circ Q_1
        \right)
    \right],
    \label{minimax_lcn_1}
\end{align}

We will now perform a series of reductions to lower bound the above minimax problem, with the minimax problem of learning \(\text{SSD}_1\). The main idea behind these reductions is to demonstrate that a given minimax problem can be 'simulated' by a more tractable one, and thus the tractable problem serves as a lower bound on the original problem.

Define the set of algorithms,  
$
    \Theta_1 = 
    \{ 
        {\theta} 
        \mid
        {\theta} \colon (([k], \gX,\gY)^n, \Xi) \rightarrow \gF 
    \}
$, and
let 
$\rmU \circ \tilde{P}$ be the indexed distribution with
the generative story:
Sample $j \sim \text{Unif}[k]$, 
then sample $(\vx,y) \sim \rmU \circ Q_j$,
and then return $(j, \vx, y)$. 
We can then lower bound \ref{minimax_lcn_1} as,
\begin{align}
    &\ge
    \inf_{\theta \in \Theta_1}
    \sup_{\rmU \in \tilde{\gU}}
    \E_{\vw \sim W}
    \E_{(j,\vx,y)^n \sim (\rmU \circ \tilde{P})^n}
    \left[
        R\left(
            \theta(
               (j,\vx,y)^n,
                \vw
            ),
            \rmU \circ Q_1
        \right)
    \right].
    \label{minimax_lcn_2}
\end{align}
The inequality follows from the fact that for every $\theta^a_n \in \Theta$, there exists $\theta^b_n \in \Theta_1$, that discards the index $j$ and returns the output of $\theta^a_n$.

We define $n_1$ to be the random variable that corresponds to the number of samples drawn from $\rmU \circ Q_1$. 
Using Berstein's inequality, we get that
$
    \tfrac{n}{2k} \le n_1 \le m \coloneqq \tfrac{3n}{2k}
$,
holds
with probability 
$\ge c \coloneqq 1-2\exp(\tfrac{-n}{10k})$. 
We will refer to this event as $E$. Then we can lower bound \ref{minimax_lcn_2},
\begin{align}
    &\ge
    c
    \inf_{\theta \in \Theta_1}
    \sup_{\rmU \in \tilde{\gU}}
    \E_{\vw \sim W}
    \E_{(j,\vx,y)^n \sim (\rmU \circ \tilde{P})^n}
    \left[
        R\left(
            \theta(
               (j,\vx,y)^n,
                \vw
            ),
            \rmU \circ Q_1
        \right)
    \mid 
    E
    \right].
    \label{minimax_lcn_3}
\end{align}

For the next reduction, we define $n_i$ to be the random variable corresponding to the number of samples drawn from the distribution $\rmU \circ Q_i$, for all $i \in [k]$.
Let $\vy \sim (\text{Unif}[\gY])^{n}$
be a uniform random vector over $\{+1,-1\}$ of size $n$
, and
$\veps \sim \gN(\mathbf{0}_{nkd}, \mathbf{I}_{nkd})$
be a vector of i.i.d. standard Gaussian random variables.
Let 
$
    \Theta_2 = 
    \{ 
        {\theta} 
        \mid
        {\theta} \colon 
        ((\gX,\gY)^{m} \times 
        (\sR^{kd})^{k-1} \times 
        (\sN \cup \{0\})^k \times 
        (\sR)^n \times 
        \sR^{nkd} \times 
        \gW) \rightarrow \gF 
    \}
$
be a set of algorithms that take as input the training data, $\{\rmU\vmu\}_{i=2}^k$ mean vectors, the number of samples to be drawn from each mean, pre-sampled values of $\vy$ and $\veps$, and the parameter initialization respectively. It subsequently returns a function within $\gF$. Then we can lower bound \ref{minimax_lcn_3} as,
\begin{align}
   &\ge
    c
    \inf_{\theta \in \Theta_2}
    \sup_{\rmU \in \tilde{\gU}}
    \E_{\vw \sim W}
    \E_{ \{n_i\}_{1}^k}
    \E_{\vy,\veps}
    \E_{S^{m} \sim (\rmU \circ Q_1)^{m}}
    \left[
        R
        \left(
            \theta(
                S^{m},
                \{\rmU\vmu_i\}_{2}^k,
                \{n_i\}_{1}^k,
                \vy,\veps,\vw
            ),
            \rmU \circ Q_1
        \right)
    \mid 
    E
    \right].
    \label{minimax_lcn_4}
\end{align}
The last inequality follows from the fact that for every $\theta^a_n \in \Theta_1$, there exists $\theta^b_n \in \Theta_2$, that first deterministically creates the indexed dataset using
$
    S^{m},
    \{\rmU\vmu_i\}_{2}^k,
    \{n_i\}_{1}^k,
    \vy,\veps
$ and then runs $\theta^a_n$.
For notational brevity, we define 
$   \Xi_1 \coloneqq 
    (\sN \cup \{0\})^k \times 
    (\sR)^n \times 
    \sR^{nkd} \times
    \gW
$, to encapsulate the randomness in 
$\{n_i\}_{1}^k,\vy,\veps$, and $\vw$.
We denote its associated product distribution by $P_{\Xi_1}$. Rewriting \ref{minimax_lcn_4},
\begin{align}
   &=
    c\inf_{\theta \in \Theta_2}
    \sup_{\rmU \in \tilde{\gU}}
    \E_{\vxi \sim P_{\Xi_1}}
    \E_{S^{m} \sim (\rmU \circ Q_1)^{m}}
    \left[
        R
        \left(
            \theta(
                S^{m},
                \{\rmU\vmu_i\}_{2}^k,
                \vxi
            ),
            \rmU \circ Q_1
        \right)
    \right].
     \label{minimax_lcn_5}
\end{align}
From the construction of "hard instances", we know that for all $\rmU \in \tilde{\gU}$, $t \in \{2,\ldots,k\}$,
$\rmU\vmu_t = e_{dt}$. Substituting this back in \ref{minimax_lcn_5}, 
\begin{align}
    &=
    c
    \inf_{\theta \in \Theta_2}
    \sup_{\rmU \in \tilde{\gU}}
    \E_{\vxi \sim P_{\Xi_1}}
    \E_{S^{m} \sim (\rmU \circ Q_1)^{m}}
    \left[
        R
        \left(
            \theta(
                S^{m},
                \{e_{di}\}_{2}^{k},
                \vxi
            ),
            \rmU \circ Q_1
        \right)
    \right].
    \label{lcn_dsd_old_minimax}
\end{align}
Note that the set $\{e_{di}\}_{i=2}^k$ is fixed and known. 
Consider, 
$\Theta_3 \coloneqq
\{ 
    {\theta} 
    \mid
    {\theta} \colon 
    ((\gX,\gY)^{m} \times 
    \Xi_1) \rightarrow \gF 
\}$,
as the set of algorithms.
For every $\theta^a_n \in \Theta_2$, there exists $\theta^b_n \in \Theta_3$ which runs $\theta^a_n$ using the input data, randomization $\xi$, and the known set $\{e_{di}\}_{i=2}^k$.
Therefore, we can bound \ref{lcn_dsd_old_minimax},
\begin{align}
    &\ge
    c
    \inf_{\theta \in \Theta_3}
    \sup_{\rmU \in \tilde{\gU}}
    \E_{\vxi \sim P_{\Xi_1}}
    \E_{S^{m} \sim (\rmU \circ Q_1)^{m}}
    \left[
        R
        \left(
            \theta(
                S^{m},
                \vxi
            ),
            \rmU \circ Q_1
        \right)
    \right],
    \label{fnn_dsd_old_minimax}
\end{align}
We have already proven a lower bound for the above problem in Lemma \ref{ssd_lcn_thorem}, specifically refer to equation \ref{lcn_ssd_lower_int2}. Substituting that result,
\begin{align}
    \E_{S^n \sim P^n}
    \left[
        R\left(
            \Bar{\theta}_n,
            P
        \right)
    \right]
    &\ge
    c
    10^{-2}
    \left(
        1 - \tfrac{m/\sigma^2 + \ln(2)}{0.99d}
    \right),
    \\
    &\ge
    c
    10^{-2}
    \left(
        1 - \tfrac{\tfrac{3n}{2k\sigma^2} + \ln(2)}{0.99d}
    \right),
     \\
    &\ge
     \left(
        1-2\exp(\tfrac{-n}{10k})
    \right)
    10^{-2}
    \left(
        1 - \tfrac{\tfrac{3n}{2k\sigma^2} + \ln(2)}{0.99d}
    \right).
\end{align}
Using $n \ge 40k$, we can bound 
$c
\ge
\left(
    1-2\exp(-\ln(4))
\right)
= \tfrac{1}{2}
$.
And, 
choosing $n= \tfrac{1}{6}\sigma^2kd$, we can bound 
$
     \left(
        1 - \tfrac{\tfrac{3n}{2k\sigma^2} + \ln(2)}{0.99kd}
    \right)
    =
    \left(
        1 - \tfrac{\tfrac{kd}{4} + \ln(2)}{0.99kd}
    \right)
    \ge
    \tfrac{1}{2}
$. Therefore, we have the result,
\begin{align}
    \E_{S^n \sim P^n}
    \left[
        R\left(
            \Bar{\theta}_n,
            P
        \right)
    \right]
    \ge
     \tfrac{1}{4}
    10^{-2}.
\end{align}

\end{proof}

\newpage
\subsection{CNN Upper Bound}
\label{cnn_upper_appendix}
\newtheorem*{retheorem_cnn_upper}{Theorem~\ref{dsd_cnn_upper_sketched}}
\begin{retheorem_cnn_upper}[Formal]
Let $\gF$ denote the class of functions represented by the set of locally connected neural network models, $\gM_\gC[\gW],$ as defined in \ref{cnn_param}. 
Let the input data be drawn from the DSD distribution, $S^n \sim (\text{DSD})^n$, with $\sigma = \tilde{O}(\tfrac{1}{ \sqrt{k}})$.
We define the group 
    $    
        \gU
        \coloneqq
        \{
        \text{Block}
        \left(
            \rmU_1, \ldots, \rmU_k
        \right)
        \mid
        \rmU_i \in \gO(d),
        \rmU_i = \rmU_j
        \}
    $.
    Then there exists a weight initialization distribution $W$
    and update functions $\{F_t\}_T$ such that 
    $\Bar{\theta}_n(\gM_C[\gW], \{F_t\}_T, W, S^n)$ 
    is an $\gU$-equivariant algorithm and,
    if $k,d$ are large enough, then
    \begin{align}
        n_{\delta}(\Bar{\theta}_n)
        =
        \max(
            O \left(
            \sigma^2
            (d+k) \ln(kd)
            \right),
            10
        ),
    \end{align}
    for some constant $\delta = O(1)$.
\end{retheorem_cnn_upper}
\begin{proof}
    The outline of the proof will run parallel to the approach taken in the proof of Theorem \ref{dsd_ccn_upper_sketched}.
    We will first present the algorithm $\Bar{\theta}_n$, then show it is a $\gU$-equivariant algorithm and then derive the required sample complexity bound upper bound.
    
    \underline{1. Algorithm Definition}

    To define the algorithm $\Bar{\theta}_n$, 
    we need to specify its components: the model $\gM_C[\gW]$, 
    the initialization distribution $W$, and
    the update functions $\{F_t\}_T$. 
    At iteration $t=0$, we initialize the model parameter
    $\vv^0 = [\vw^0,b^0]$ as \( \vw^0 \sim \gN(\mathbf{0}, \gamma\rmI_{d})\), where \(\gamma^{-1}=100k^2d^2\), and bias is set as \(b^0 = 0\).
    The superscript denotes the iteration number.
    To specify the update functions, we define the empirical loss function,
    \begin{align}
        l \colon (\gW, (\gX,\gY)^n) \rightarrow \sR 
        \coloneqq
        \tfrac{1}{n} \sum_{j=1}^n 
         \left(
            y_i - \sum_{i=1}^k \phi_b(\vw^T\vx^{(i)}_j)
        \right)^2.
    \end{align}
    
    The algorithm has $T=2$ iterations. For simpler analysis, we divide the dataset, $S^n$, into two equal sized datasets $S^{m}_1$, and $S^{m}_1$, with $m \coloneqq \tfrac{n}{2}$ samples each. The update function for each $t \in \{1,2\}$ is,
    \begin{align}
        F_t(\vv, S_t^m) &\coloneqq 
        \left[ 
            \tfrac{
            \vw - \eta_t\nabla_{\vw}l(\vw,b ;S_t^m)
            }
            {
            \norm{\vw - \eta_t\nabla_{\vw}l(\vw,b ;S_t^m)}}
            \: ; \: 
            b_t
        \right],
        \label{cnn_ssd_update_1}
    \end{align}
    where $\eta_1 = 1, \eta_2 = 10^{3}, b_1 = \tfrac{1}{100}\sqrt{\tfrac{kd}{(k+d)\ln(kd)}}, b_2 = 10^{-4}$.
    
   \underline{2. Algorithm is Equivariant}

    To establish that \(\Bar{\theta}_n\) is \(\gU\)-equivariant, we verify the three conditions specified in Definition \ref{ortho-equi}.
    We define the group, $\gV \coloneqq \{\text{Block}(\rmV, \rmI_1) \mid \rmV \in \gO(d)\}$, where $\rmI_1$ is the identity matrix of size $1$.

    For $\vx,y \in (\gX,\gY)$, $\rmU \in \gU$, $\vw \in \sR^d$, $b \in \sR_{+}$, choose $\rmV = \text{Block}(\{\rmU^{(1)},\rmI_1\}) \in \gV$, without loss of generality, as for all $i,j$, $\rmU^{(i)} = \rmU^{(j)}$.
    Then, the property \ref{equi_p1} of equivariance holds as,
    \begin{align}
        \gM_C[ \vv ](\vx)
        =
        \sum_{i=1}^k
        \phi_b(\vw^T\vx^{(i)})
        =
        \sum_{i=1}^k
        \phi_b(\vw^T(\rmU^{(1)})^T\rmU^{(i)}\vx^{(i)})
        =
        \gM_C[ \rmV\vv ](\rmU\vx).
    \end{align}

    For all $t \in [2]$ and $S_t^m \in (\gX,\gY)^m$ the second property \ref{equi_p2} follows as,
    \begin{align}
        F_t \left(
            \rmV\vv, \rmU \circ S_t^m
        \right)
        &= 
        \left[ 
         \tfrac{\rmU^{(1)}\vw - \eta_t\nabla_{\rmU^{(1)}\vw}l(\rmV\vv ;\rmU \circ S^m_t)}
            {\norm{\rmU^{(1)}\vw - \eta_t\nabla_{\rmU^{(1)}\vw}l(\rmV\vv ;\rmU \circ S^m_t)}}
            \: ; \: 
            b_t
        \right],
        \\
        &=
        \left[
        \tfrac{\rmU^{(1)}\left(\vw - \eta_t\nabla_{\vw}l(\vv ;S^m_t)\right)}
            {\norm{\rmU^{(1)}\left(\vw - \eta_t\nabla_{\vw}l(\vv ;S^m_t)\right)}}
        \: ; \: 
        b_t
        \right],
        \\
        &=
        \left[
        \tfrac{\rmU^{(1)}\left(\vw - \eta_t\nabla_{\vw}l(\vv ;S^m_t)\right)}
            {\norm{\vw - \eta_t\nabla_{\vw}l(\vv ;S^m_t)}}
        \: ; \: 
        b_t
        \right],
        \\
        &=
        \rmV
        F_t \left(
            \vv, S^m_t
        \right).
    \end{align}
    And as for property \ref{equi_p3}, observe that,
    \begin{align}
        \rmV\vv^0 
        = [\rmU^{(1)}\vw^0  b^0],
        \overset{d}{=} 
        [\vw^0,  b^0]
        =
        \vv,
    \end{align}
    holds for all $\rmV \in \gV$.
    
    \underline{3. Algorithm Analysis}
    
    We analyze the algorithm,
    with $n = \max(2\sigma^2(k+d)\ln(kd), 10)$
    samples,
    to establish that $\bar{\theta}_n$ achieve an expected risk of at most $\delta = 2.5 \times 10^{-3}$. 
    We set $\sigma \le \tfrac{1}{100\sqrt{k\ln(kd)^3}}$.
    The outline of the proof is as follows: 
    we first prove that after the first update step, the alignment of $\vw^1$ with unknown signal vector $\vw^\star$ is $\Omega(\sqrt{\tfrac{1}{k}})$.
    In the second step, we use this alignment is reliably threshold out the "noise" patches, while letting the "signal" patch pass through the first hidden layer.
    We then show that this denoising effect, enables us to recover the signal with an alignment of $\Omega(1)$, which would imply that the risk of the CNN on the task $\le \delta$.

    \underline{3a. Update Step 1}
    
    We define
    $
     \hat{\vw}^1
     =
     \vw^0 -
     \nabla_{\vw^0}l(\vw^0,0 ;S_1^{m})
    $
    to be the unnormalized parameter vector $\vw^1$,
     and therefore the alignment with the signal is given by
     $(\vw^1)^T\vw^\star
     =
     \tfrac{(\hat{\vw}^1)^T\vw^\star}{\norm{\hat{\vw}^1}}
     $.
     To analyze $\hat{\vw}^1$, 
     we first evaluate the gradient with respect to $\vw^0$, $\nabla_{\vw^0}l(\vw^0,0 ;S_1^{m})$,
    \begin{align}
         \nabla_{\vw^0}l(\vw^0,0 ;S_1^{m}) 
         &=
         \tfrac{1}{m} \sum_{j=1}^m 
        \nabla_{\vw^0}\left(
            y_i - \sum_{i=1}^k \phi_0((\vw^0)^T\vx^{(i)}_j)
        \right)^2,
        \\
          &= 
         \tfrac{-2}{m} 
         \sum_{j=1}^{m}
         \left(
            y_j 
            - \sum_{i=1}^k 
            \phi_0((\vw^0)^T\vx^{(i)}_j))
        \right)
        \left(
            \sum_{i=1}^k 
            \vx^{(i)}_j\phi'_0((\vw^0)^T\vx^{(i)}_j))
        \right),
        \\
        &= 
         \tfrac{-2}{m} 
         \sum_{j=1}^{m}
         \left(
            1
            - \sum_{i=1}^k 
            y_j(\vw^0)^T\vx^{(i)}_j)
        \right)
        \left(
            \sum_{i=1}^k 
            y_j\vx^{(i)}_j
        \right),
        \\
        &\coloneqq 
         \tfrac{-2}{m} 
         \sum_{j=1}^{m}
         \alpha_j
        \beta_j,
        \label{cnn_itr_1_grad}
    \end{align}
    where $\phi'_0(x) \coloneqq \tfrac{d}{dx}\phi_0(x)$. 
    We have used the facts that $\phi_0$ is the identity function, and $\phi_0'$ is the constant function $1$ . 
    And,
    $
        \alpha_j \coloneqq 
         1
        - \sum_{i=1}^k 
        y_j(\vw^0)^T\vx^{(i)}_j
    $,
    $
        \beta_j \coloneqq
        \sum_{i=1}^k 
        y_j\vx^{(i)}_j
    $.
    
    % Note that .
    % From the parameter initialization distribution, 
    % we know that $\vw^0 \overset{d}{=} \gamma \veps$, $\veps \sim \gN(\mathbf{0}, \rmI_d)$, 
    % and from the data distribution $\text{SSD}_t$,
    % $\vx_j^{(i)} = \sigma \veps_j^{(i)}$, for all $i \neq t$,
    % and $\vx_j^{(t)} = y_j\vw^\star + \sigma \veps_j^{(t)}$. 
    % We substitute these observations into $\beta_j$,
    %  \begin{align}
    %     \beta_j
    %     =
    %     \sum_{i=1}^k 
    %         y_j\vx^{(i)}_j\phi'_0((\vw^0)^T\vx^{(i)}_j))
    %     =
    %     \sum_{i=1}^k 
    %         y_j\vx^{(i)}_j
    %     =
    %     \vw^\star + \sigma\sum_{i=1}^k\veps_j^i
    %     \overset{d}{=}
    %     \vw^\star + \sigma\sqrt{k}\Bar{\veps}_j,
    %     \label{cnn_itr_1_grad_rewrite}
    %  \end{align}
    %  where $\Bar{\veps}_j \sim \gN(\mathbf{0}, \rmI_d)$. 

     To further analyze \ref{cnn_itr_1_grad}, we first prove high probability bounds for $\alpha_j$, $j \in [m]$.
    From the initialization distribution $W$, we know that $\vw^0 \overset{d}{=} \gamma \veps$, where $\veps$ is the Gaussian random vector defined as $\veps \sim \gN(\mathbf{0}, \rmI_d)$.
    And from the input distribution $\text{DSD}$, we know that  
    $\vx_j^{(i)} = y_j r_{ij}\vw^\star + \sigma \veps_j^{(i)}$, for all $i$ in $[k]$.
    Here, $\veps_j^{(i)} \sim \gN(\mathbf{0}, \rmI_d)$ is also a Gaussian random vector, and 
    $r_{ij} = 1$, if the signal patch appears in the $j$-th data sample appears in the $i$-th patch, and $0$ otherwise.
    \begin{align}
        \label{alpha_j_def}
        \alpha_j
        &=
         1
        - \sum_{i=1}^k 
        y_j(\vw^0)^T\vx^{(i)}_j,
        \\
        &=
        1
        -
        \sum_{i=1}^k  r_{ij}y_j^2\gamma\veps^T\vw^\star
        - \sum_{i=1}^k y_j\gamma\sigma\veps^T\veps_j^{(i)},
        \\
        &=
        1
        -
        \gamma\veps^T\vw^\star
        - \sum_{i=1}^k y_j\gamma\sigma\veps^T\veps_j^{(i)},
    \end{align}
    We can now bound the range of $\alpha_j$ as,
    \begin{align}
        1
        +
        |
            \gamma\veps^T\vw^\star
        |
        + 
        |
            \sum_{i=1}^k 
            \gamma
            \sigma
            \veps^T\veps_j^{(i)}
        |
        \ge
        \alpha_j
        \ge 
        1
        -
        |
            \gamma\veps^T\vw^\star
        |
        - 
        |
            \sum_{i=1}^k 
            \gamma
            \sigma
            \veps^T\veps_j^{(i)}
        |.
        \label{cnn_alpha_leftright_pre}
    \end{align}
    We first upper bound 
    $   |\gamma\veps^T\vw^\star|
    $.
    Since the norm of the signal is $1$, $\norm{\vw^\star}=1$, 
    $\veps^T\vw^\star \sim \gN(0,1)$, and
    \begin{align}
        |\gamma \veps^T\vw^\star| 
        \le
        \tfrac{1}{100k^2d^2} |\veps^T\vw^\star| 
        \le \tfrac{1}{8}
        \label{obs_1},
    \end{align}
    with probability $\ge 1 - 2\Phi(-10k^2d^2 )\ge 1 - 10^{-6}$, for large enough $k,d$.
    Next, we provide an upper bound for $|\sum_{i=1}^k \gamma\sigma\veps^T\veps_j^{(i)}|$, for all $j$. For this we analyze,
    
    \begin{align}
        \max_{j \in [m]}
        |\sum_{i=1}^k \gamma\sigma \veps^T\veps_j^{(i)}|
        &=
        \gamma\sigma \max_{j \in [m]}
        |\veps^T \sum_{i=1}^k\veps_j^{(i)}|
        \overset{d}{=} 
        \gamma\sigma \max_{j \in [m]}
        |\sqrt{k}\veps^T \Bar{\veps}_j|,
        \\
        &= 
        \gamma\sigma\sqrt{k} \max_{j \in [m]}
        |\tfrac{\norm{\veps}}{\norm{\veps}}
        \veps^T\Bar{\veps}_j
        |
        \le
        6\gamma\sigma \sqrt{kd} \:
         \max_{j \in [m]}
         \big|\tfrac{\veps^T\Bar{\veps}_j}{\norm{\veps}}
        \big|,
        \label{_eq_1}
    \end{align}
    with probability $\ge 1 - 2\times10^{-6}$. 
    The last inequality \ref{_eq_1} follows from the concentration of the norm of a Gaussian random variable,
    $\mathbb{P}[\norm{\veps} \ge 6\sqrt{d}] \le 2\exp(-\tfrac{36d}{2d}) \le 10^{-6}$. We define $\vu = \tfrac{\veps}{\norm{\veps}}$, and $\eps_j = \vu^T\Bar{\veps}_j$, which is a standard Gaussian random variable.
    Then, from the concentration inequality,
     $
     \mathbb{P}[
         \max_{j \in [m]}
         |
         \eps_j
        |
        \ge 
        \sqrt{32\ln(m)}
     ]
     \le
    \tfrac{2}{m^9}
    \le
    10^{-6}
    $. Substituting this in  \ref{_eq_1}, 
    \begin{align}
        \max_{j \in [m]}
        |\gamma\sigma \veps^T \sum_{i=1}^k\veps_j^{(i)}|
        &\le
        6\gamma\sigma \sqrt{kd}
        \max_{j \in [m]}
         |\eps_j
        |,
        \\
        &\le
        6\tfrac{1}{100k^2d^2}
        \tfrac{1}{100 \sqrt{k\ln(kd)^3}}
        \sqrt{kd}
        \max_{j \in [m]}
         |\eps_j
        |,
        \\
        &\le
        6\tfrac{1}{100k^2d^2}
        \tfrac{1}{100 \sqrt{k\ln(kd)^3}}
        \sqrt{kd}
        \sqrt{32 \ln(m)} \le \tfrac{1}{8},
        \label{obs_2}
    \end{align}
    for large enough $k,d$.
    Using \ref{obs_1}, \ref{obs_2} in \ref{cnn_alpha_leftright_pre}, we bound $\alpha_j$, for all $j \in [m]$, as,
     \begin{align}
         1
        +
        |
            \gamma\veps^T\vw^\star
        |
        + 
        |
            \sum_{i=1}^k 
            \gamma
            \sigma
            \veps^T\veps_j^{(i)}
        |
        &\ge
        \alpha_j
        \ge 
        1
        -
        |
            \gamma\veps^T\vw^\star
        |
        - 
        |
            \sum_{i=1}^k 
            \gamma
            \sigma
            \veps^T\veps_j^{(i)}
        |.
        \\
        \tfrac{5}{4}
        &\ge 
        \alpha_j
        \ge
        \tfrac{3}{4}.
        \label{cnn_itr_1_alpha_bound}
     \end{align}
     Also, note that 
     $\beta_j =  
     \sum_{i=1}^k y_j\vx^{(i)}_j
     \overset{d}{=} \vw^\star + \sigma\sqrt{k}\Bar{\veps}_j
     $.
     We are now in the position to analyze $\hat{\vw}^1$,
     \begin{align}
         \hat{\vw}^1
         &=
         \vw^0 -
         \nabla_{\vw}l(\vw^0,0 ;S_1^{m}),
          \\
        &=
        \vw^0 +
         \tfrac{1}{m} 
         \sum_{j=1}^{m}
         2\alpha_j
        \beta_j,
        \\
        &\overset{d}{=}
        \vw^0 +
         \tfrac{1}{m} 
         \sum_{j=1}^{m}
         2\alpha_j
        \left(
           \vw^\star + \sigma\sqrt{k}\Bar{\veps}_j
        \right),
        \\
        &\overset{d}{=}
        \vw^0 +
         \tfrac{2\sum_{j=1}^{m}\alpha_j}{m}
         \vw^\star
         +
         \tfrac{2\sigma\sqrt{k\sum_{j=1}^{m}\alpha_j^2}}{m}
         \Bar{\veps},
         \label{cnn_itr_1_w_hat}
     \end{align}
     where $\Bar{\veps}$ is the Gaussian random vector $\sim \gN(\mathbf{0}, \rmI_d)$. Note that from the concentration of the norm of a Gaussian random variable $\mathbb{P}[\norm{\Bar{\veps}} \ge 6\sqrt{d}] \le 2\exp(-\tfrac{36d}{2d}) \le 10^{-6}$, and
      $\mathbb{P}[\norm{\Bar{\veps}} \le \sqrt{d}/6] \le 10^{-6}$,
     
    Recall that our aim is to bound
     $(\vw^1)^T\vw^\star
     =
     \tfrac{(\hat{\vw}^1)^T\vw^\star}{\norm{\hat{\vw}^1}}
     $. For this, we first upper bound $\norm{\hat{\vw}^1}$,
     \begin{align}
         \norm{\hat{\vw}^1}
         &=
         \norm{\vw^0 +
         \tfrac{2\sum_{j=1}^{m}\alpha_j}{m}
         \vw^\star
         +
         \tfrac{2\sigma\sqrt{k\sum_{j=1}^{m}\alpha_j^2}}{m}
         \Bar{\veps}},
         \\
         &\le
         \norm{\vw^0} +
         \norm{\tfrac{2\sum_{j=1}^{m}\alpha_j}{m}
         \vw^\star}
         +
         \norm{
         \tfrac{2\sigma\sqrt{k\sum_{j=1}^{m}\alpha_j^2}}{m}
         \Bar{\veps}},
         \\
         &\overset{\ref{cnn_itr_1_alpha_bound}}{\le}
         \norm{\vw^0} +
         \tfrac{5}{2}
         \norm{\vw^\star}
         +
         \tfrac{5\sigma}{2}\sqrt{\tfrac{k}{m}}
         \norm{
         \Bar{\veps}},
         \\
         &=
         \gamma\norm{\veps} +
         \tfrac{5}{2}
         +
         \tfrac{5\sigma}{2}\sqrt{\tfrac{k}{\sigma^2(d+k)\ln(kd)}}
         \norm{
         \Bar{\veps}},
         \\
         &\le
         \tfrac{6\sqrt{d}}{100k^2d^2}
         +
         \tfrac{5}{2}
         +
         \tfrac{5\sigma}{2}\sqrt{\tfrac{6kd}{\sigma^2(d+k)\ln(kd)}}
        \le 
        10\sqrt{\tfrac{kd}{(k+d)\ln(kd)}},
        \label{cnn_ssd_gamma_eps_root_range}
     \end{align}
    for large enough $k,d$. Similarly, we lower bound $\norm{\hat{\vw}^1}$,
     \begin{align}
         \norm{\hat{\vw}^1}
         &\ge
         \norm{
         \tfrac{2\sigma\sqrt{k\sum_{j=1}^{m}\alpha_j^2}}{m}
         \Bar{\veps}}
         -
         \norm{\vw^0} 
         -
         \norm{\tfrac{2\sum_{j=1}^{m}\alpha_j}{m}
         \vw^\star},
         \\
         &\overset{\ref{cnn_itr_1_alpha_bound}}{\ge}
         \tfrac{3\sigma}{2}\sqrt{\tfrac{k}{m}}
         \norm{
         \Bar{\veps}}
         -\norm{\vw^0} -
         \tfrac{5}{2}
         \norm{\vw^\star},
         \\
         &\ge
         \tfrac{3\sigma}{2}\sqrt{\tfrac{kd}{6\sigma^2(k+d)\ln(kd)}}
         -
         \tfrac{6\sqrt{d}}{100k^2d^2}
         -
         \tfrac{5}{2}
        \ge 
        \tfrac{1}{4}\sqrt{\tfrac{kd}{(k+d)\ln(kd)}},
     \end{align}
     where the last inequality holds for a large enough $k,d$. Next, we lower bound $(\hat{\vw}^1)^T\vw^\star$, 
     \begin{align}
         (\hat{\vw}^1)^T\vw^\star
         &=
         (\vw^0)^T\vw^\star +
         \tfrac{2\sum_{j=1}^{m}\alpha_j}{m}
         (\vw^\star)^T\vw^\star +
         \tfrac{2\sigma\sqrt{k\sum_{j=1}^{m}\alpha_j^2}}{m}
         (\Bar{\veps})^T\vw^\star,
         \\
         &=
         \gamma\veps^T\vw^\star +
         \tfrac{2\sum_{j=1}^{m}\alpha_j}{m}
          +
         \tfrac{2\sigma\sqrt{k\sum_{j=1}^{m}\alpha_j^2}}{m}
         (\Bar{\veps})^T\vw^\star,
         \\
         &\ge
         -
         |\gamma\veps^T\vw^\star|
         +
         \tfrac{3}{2}
          -
         \tfrac{5\sigma\sqrt{k}}{\sqrt{m}}|(\Bar{\veps})^T\vw^\star|,
         \\
         &\overset{\ref{obs_1}}{\ge}
         -
         \tfrac{1}{8}
         +
         \tfrac{3}{2}
          -
         \tfrac{5\sigma\sqrt{k}}{\sqrt{\sigma^2 (k+d)\ln(kd)}}|(\Bar{\veps})^T\vw^\star|,
         \\
         &\ge
         \tfrac{11}{8}
          -
         \tfrac{1}{12}|(\veps^1)^T\vw^\star|
         \ge
         \tfrac{11}{8} - \tfrac{7}{10}
         \ge \tfrac{6}{10},
     \end{align}
     with probability $\ge 1 - 2\Phi(-\tfrac{84}{10})\ge 1 - 10^{-6}$.
     And similarly we upper bound $(\hat{\vw}^1)^T\vw^\star$, 
     \begin{align}
         (\hat{\vw}^1)^T\vw^\star
         &\le
         \tfrac{5}{2}
         +
         |\gamma\veps^T\vw^\star|
          +
         \tfrac{5\sigma\sqrt{k}}{\sqrt{m}}|(\veps^1)^T\vw^\star|,
         \\
         &\le
         \tfrac{5}{2} + \tfrac{1}{8} + \tfrac{7}{10}
         \le 4.
     \end{align}
     Therefore, 
     $40\sqrt{\tfrac{kd}{(k+d)\ln(kd)}} \ge (\vw^1)^T\vw^\star \ge \tfrac{6}{100}\sqrt{\tfrac{kd}{(k+d)\ln(kd)}}$, with probability $\ge 1- 10^{-5}$. 
     We can now express $\vw^1$ as $\lambda \vw^\star + \sqrt{1-\lambda^2} \vw^\star_{\perp}$, 
     such that  
     $40\sqrt{\tfrac{kd}{(k+d)\ln(kd)}} \ge \lambda \ge \tfrac{6}{100}\sqrt{\tfrac{kd}{(k+d)\ln(kd)}}$, 
     $\norm{\vw^\star_{\perp}} = 1$,
     and $(\vw^\star)^T\vw^\star_{\perp} = 0$. 

     \underline{3b. Update Step 2}
     
     In this step, we will now show that the $\tfrac{6}{100}\sqrt{\tfrac{kd}{(k+d)\ln(kd)}}$ alignment achieved in the first step, enables the network to filter out the noise patches, while letting from the signal patch pass through. 
     This denoising will enables us to achieve a stronger a $1-10^{-3}$ alignment with the signal vector.
     
    We begin by analyzing the push forward of all noise patches in $S_2^m$, through the CNN model,
     \begin{align}
         \max_{i\in[k], j\in[n]\setminus[m]}|\sigma(\vw^1)^T\veps_j^{(i)}| 
         &\le
         \tfrac{1}{100 \sqrt{k\ln(kd)^3}}
         \max_{i,j}|(\vw_i^1)^T\veps_j^{(i)}|,
         \\
          &\le
         \tfrac{1}{100 \sqrt{k\ln(kd)^3}}
         \sqrt{32\ln(\sigma^2 k(k+d)\ln(kd))},
         \label{obs_3}
         \\
         &
         \le  \tfrac{1}{4\sqrt{k}}
         \le  \tfrac{1}{100}\sqrt{\tfrac{kd}{(k+d)\ln(kd)}}
         \label{obs_4}
     \end{align}
     To derive inequality \ref{obs_3}, we have used the concentration of the maximum of the absolute value of $mk$ i.i.d. Gaussian random variables,
     $
     \mathbb{P}[
        \max_{i,j}|(\vw^1)^T\veps_j^{(i)}|
        \ge 
        \sqrt{32\ln(mk)}
     ]
     \le
    \tfrac{2}{(mk)^9}
    \le
    10^{-6}
     $.
     Recall from the analysis of the first update step that,
     \begin{align}
         40\sqrt{\tfrac{kd}{(k+d)\ln(kd)}} \ge (\vw^1)^T\vw^\star \ge \tfrac{6}{100}\sqrt{\tfrac{kd}{(k+d)\ln(kd)}}.
         \label{obs_5}
     \end{align}
     From \ref{obs_4}, \ref{obs_5}, and $b_1 = \tfrac{1}{100}\sqrt{\tfrac{kd}{(k+d)\ln(kd)}}$, we filter out the noise and let the signal pass for all $j$,
     \begin{align}
         40\sqrt{\tfrac{kd}{(k+d)\ln(kd)}} \ge \phi_{b_1}(y_j(\vw^1)^T\vx_j^{(i)}) &\ge 
         \tfrac{4}{100}\sqrt{\tfrac{kd}{(k+d)\ln(kd)}},
         \:\: \text{where } r_{ij}=1,
         \label{cnn_itr_2_input_range_1}\\
         \phi_{b_1}(y_j(\vw^1)^T\vx_j^{(i)}) &= 0,
         \:\: \text{where } r_{ij}=0.
         \label{cnn_itr_2_input_range_2}
     \end{align}
    We will follow in the footsteps of update step $1$ and seek to bound $(\vw^2)^T\vw^\star$.  
    First, we define 
    $
     \hat{\vw}^2
     =
     \vw^1 -
     \eta_2\nabla_{\vw}l(\vw^1,b_1 ;S_2^{m})
    $, and therefore
     $(\vw^2)^T\vw^\star
     =
     \tfrac{(\hat{\vw}^2)^T\vw^\star}{\norm{\hat{\vw}^2}}
     $.
     Now, we begin by evaluate the gradient of the empirical loss function with respect to $\vw^1$,
     
    \begin{align}
         \nabla_{\vw^1}l(\vw^1,b_1 ;S_2^{m}) 
          &= 
         \frac{-1}{m} 
         \sum_{j=1}^{m}
         2\left(
            y_j 
            - \sum_{i=1}^k 
            \phi_{b_1}((\vw^1)^T\vx^{(i)}_j))
        \right)
        \left(
            \sum_{i=1}^k 
            \vx^{(i)}_j\phi'_{b_1}((\vw^1)^T\vx^{(i)}_j))
        \right),
        \\
        &= 
         \frac{-1}{m} 
         \sum_{j=1}^{m}
         2\left(
            1
            - \sum_{i=1}^k 
            \phi_{b_1}(y_j(\vw^1)^T\vx^{(i)}_j))
        \right)
        \left(
            \sum_{i=1}^k 
            r_{ij}y_j\vx^{(i)}_j
        \right)
        \label{cnn_eq_itr2_grad_sum}
    \end{align}
    where \ref{cnn_eq_itr2_grad_sum} follows from \ref{cnn_itr_2_input_range_1}, \ref{cnn_itr_2_input_range_2}.
    Define 
    $\alpha_j \coloneqq 
        1    
            - \sum_{i=1}^k 
            \phi_{b_1}(y_j(\vw^1)^T\vx^{(i)}_j))
    $,
    for $j \in [n] \setminus [m]$. 
    Then,
    $
        1
        \ge
        1 - \tfrac{4}{100}\sqrt{\tfrac{kd}{(k+d)\ln(kd)}}
        \ge  
        \alpha_j 
        \ge
        1 - 40\sqrt{\tfrac{kd}{(k+d)\ln(kd)}}$. 
        We also define $\vx_j^{(t)}$ to be the patch of the $j$-th data sample that corresponds to the occurrence of the signal, that is $r_{tj} = 1$.
        From \ref{cnn_eq_itr2_grad_sum},
    \begin{align}
        \nabla_{\vw^1}l(\vw^1,b_1 ;S_2^{m}) 
        &= 
         \tfrac{-1}{m} 
         \sum_{j=1}^{m}
         2\alpha_j
            y_j\vx^{(t)}_j
        \overset{d}{=} 
         \tfrac{-1}{m} 
         \sum_{j=1}^{m}
         2\alpha_j
         \left(
            \vw^\star+
            \sigma \veps_j^{(t)}
         \right),
        \\
        &=
        -
         \sum_{j=1}^{m}
        \tfrac{2\alpha_j}{m} 
         \vw^\star
         -
         \tfrac{1}{m} 
         \sum_{j=1}^{m}
        2\sigma \alpha_j \veps_j^{(t)},
        \\
        &\overset{d}{=}
        -
         \sum_{j=1}^{m}
        \tfrac{2\alpha_j}{m} 
         \vw^\star
         -
         \tfrac{2\sigma
         \sqrt{\sum_{j=1}^{m}
         \alpha_j^2}}{m}
         \hat{\veps},
    \end{align}
    where $\hat{\veps} \sim \gN(\mathbf{0}, \rmI_d)$.
    We define 
     $a \coloneqq \sum_{j=1}^{m}
          \tfrac{2\alpha_j}{m}$.
    And, observe that from the concentration of the norm of a Gaussian random variable $\mathbb{P}[\norm{\hat{\veps}} \ge 6\sqrt{d}] \le 2\exp(-\tfrac{36d}{2d}) \le 10^{-6}$.
    With these results, we are now ready to bound
     $(\vw^2)^T\vw^\star
     =
     \tfrac{(\hat{\vw}^2)^T\vw^\star}{\norm{\hat{\vw}^2}}
     $,
     \begin{align*}
         \norm{\hat{\vw}^2}
         &=
         \norm{\vw^1
          +
         \eta_2
         \sum_{j=1}^{m}
        \tfrac{2\alpha_j}{m} 
         \vw^\star
         +
         \eta_2\tfrac{2\sigma
         \sqrt{\sum_{j=1}^{m}
         \alpha_j^2}}{m}
        \hat{\veps}},
        \\
        &\le 
         1
          +
         a\eta_2
          \norm{
         \vw^\star}
         +
         \tfrac{2\sigma \eta_2}
         {\sqrt{m}}
        \norm{\hat{\veps}},
        \\
        &\le 
         1
          +
          a\eta_2
         +
          \tfrac{12\sigma\eta_2 \sqrt{d}}{\sqrt{\sigma^2 (k+d)\ln(kd)}}
          \le 
         1
          +
        \eta_2
        (a
         +
         10^{-3}),
     \end{align*}
     for a large enough $k,d$. And now we lower bound $(\hat{\vw}^2)^T\vw^\star$,
     \begin{align}
         (\hat{\vw}^2)^T\vw^\star &=
         (\vw^1)^T\vw^\star
          +
         \eta_2\sum_{j=1}^{m}
        \tfrac{2\alpha_j}{m} 
         (\vw^\star)^T\vw^\star
         +
         \eta_2\tfrac{2\sigma
         \sqrt{\sum_{j=1}^{m}
         (\alpha_j^2)}}{m}
        \hat{\veps}^T\vw^\star
        \\
        &\overset{d}{=}
        -(\vw^1)^T\vw^\star+
        \eta_2\tfrac{\sum_{j=1}^m 2\alpha_j}{m}
        +
         \eta_2\tfrac{2\sigma
         \sqrt{\sum_{j=1}^{m}
         (\alpha_j^2)}}{m}
        \epsilon;
        \:\:
        \epsilon \in \gN(0,1),
        \\
        &\ge
        -1+
        \eta_2 a
        -
        \tfrac{2\eta_2 \sigma\epsilon}{\sqrt{m}}
        \\
        &\ge
        -1+
        \eta_2 a
        -
        \eta_2
        \tfrac{2 \sigma\epsilon}{\sqrt{\sigma^2 (k+d)\ln(kd)}}
        \ge
        -1 + \eta_2(a - 10^{-3}),
     \end{align}
     where we have used the fact that 
     $\sP[ |\tfrac{2\epsilon}{\sqrt{(k+d)\ln(kd)}}| \ge 10^{-3}]$ holds with probability $\le 10^{-6}$, for a large enough $k,d$.
     Therefore the alignment can be lower bounded as, 
     \begin{align}
         \tfrac{(\hat{\vw}^2)^T\vw^\star}
         {\norm{\hat{\vw}^2}}
         \ge
         \tfrac{
              1
          +
        10^3
        (a
         +
         10^{-3})
         }{
             1
          +
        10^3
        (a
         +
         10^{-3})
         }
         \ge
         0.96,
     \end{align}
     as $1 \ge a \ge \tfrac{2}{3}$ for large $k,d$, and this occurs with a probability $\ge 1 - 2 \times 10^{-5}$. 
     
     \underline{3c. CNN has Low Risk}
     
     We now show that this large constant alignment guarantees a low risk. We bound the push forward of the noise through the CNN,
     \begin{align}
         |\sigma \max_{j \in [m]}((\vw^2)^T\veps_i)|
         \le
         \tfrac{
            6\sqrt{\ln(k)}
        }
        {
        100\sqrt{k\ln(kd)^3}
        }
        \le
        10^{-4}
     \end{align}
     To derive the last inequality, we have used the concentration of the maximum of the absolute value of $k$ i.i.d. Gaussian random variables,
     $
     \mathbb{P}[
         \max_{j \in [k]}
         |
         (\vw^2)^T\veps_j
        |
        \ge 
        \sqrt{32\ln(k)}
     ]
     \le
    \tfrac{2}{m^9}
    \le
    10^{-6}
    $.
     For this data sample, let $t \in [k]$ be the index of the signal patch, then,
     \begin{align}
         \phi_{b_2}(y_j(\vw_i^2)^T\vx_j^{(t)}) &\ge 0.959, \\
         \phi_{b_2}(y_j(\vw_i^2)^T\vx_j^{(i)}) &= 0, \:\: \forall \: i\neq t.
     \end{align}
     To bound the risk in the failure case, we note that for any $\vv \in \gW$,
    \begin{align}
        \E[
            (y - \sum_{i=1}^k \phi_b(\vw^T\vx^{(i)}))^2
        ]
        &=
        \E[
            (1 - \sum_{i=1}^k \phi_b(y\vw^T\vx^{(i)}))^2
        ],
        \\
        &=
        \tfrac{1}{k}
        \sum_{j=1}^k
        \E_{(\vx,y) \sim \text{SSD}_j}[
            (1 - \sum_{i=1}^k \phi_b(y\vw^T\vx^{(i)}))^2
        ],
        \\
        &=
        \tfrac{1}{k}
        \sum_{j=1}^k
        \E[
            (1 - \sum_{i \neq j} \phi_b(\sigma\eps_{ij}) - \phi_b(\cos(\alpha_j) + \sigma\eps_{jj}))^2
        ],
    \end{align}
    To evaluate the above expression, we observe that the expectation can be written as,
    \begin{multline}
        \E[
            (1 - \sum_{i \neq j} \phi_b(\sigma\eps_{ij}) - \phi_b(\cos(\alpha_j) + \sigma\eps_{jj}))^2
        ] = 
        \text{Var}[
        (1 - \sum_{i \neq j} \phi_b(\sigma\eps_{ij}) - \phi_b(\cos(\alpha_j) + \sigma\eps_{jj}))
        ]
        \\+
        \E[
            (1 - \sum_{i \neq j} \phi_b(\sigma\eps_{ij}) - \phi_b(\cos(\alpha_j) + \sigma\eps_{jj}))
        ]^2,
    \end{multline}
    \begin{align}
        &=
        \sum_{i \neq j} \text{Var}[\phi_b(\sigma\eps_{ij})] 
        +  \text{Var}[\phi_b(\cos(\alpha_j) + \sigma\eps_{jj})]
        +
        \left(
             \E[\phi_b(\cos(\alpha_j) + \sigma\eps_{jj})]
        \right)^2,
        \\
        &\le
        \sum_{i} \sigma^2 
        +
        \cos^2(\alpha_j) \le k\sigma^2 + 1 \le 2.
    \end{align}
    Substituting this back,
    \begin{align}
         \E[
            (y - \sum_{i=1}^k \phi_b(\vw_i^T\vx^{(i)}))^2
        ]
        \le 
        \tfrac{1}{k}
        \sum_{j=1}^k
        2
        = 2
    \end{align}
     Finally, the risk of the classifier is,
     \begin{align}
         \E
        \left[
            R\left(
                \Bar{\theta}_n,
                P
            \right)
        \right]
        \le
        (1-0.959)^2 + 4 \times 10^{-5}
        \le
        \delta.
     \end{align}
\end{proof}

\newpage
\section{Experiments}
\setcounter{figure}{0}

In this section, we validate our theoretical bounds with empirical results. We begin by presenting the test-error experiments, where we evaluate the test error of the three models across various training sample sizes. The results for these experiments show an order-of-magnitude decrease in the sample efficiency when comparing CNNs to LCNs, and comparing LCNs to FCNs.

We then present our sample complexity experiments, wherein we explicitly calculate the sample complexity of CNNs and LCNs for various $(k,d)$ pairs. However, these experiments are significantly more compute-intensive than the test error experiments.
While the computational demands are manageable for CNNs, they increase significantly for LCNs and become prohibitively large for FCNs. This is primarily because FCNs require at least $10\text{-}20$ times more samples than LCNs.
Nonetheless, for both CNNs and LCNs, we successfully verify that the empirical sample complexity satisfies the respective theoretical bounds.
Specifically, for CNNs, we show a \(O(k)\) sample complexity growth with a fixed \(d\) and a \(O(d)\) growth with a fixed \(k\). For LCNs, we establish that the sample complexity grows as \(O(k^2), \Omega(k)\) with a fixed \(d\) and as \(\Theta(d)\) with a fixed \(k\).

\subsection{Test Error Experiments}

In this experiment, we evaluate the test error of each of the three models when trained with a sample size of $\{10,50,100,250,500\}$ for every $(k,d)$ pair with $k,d \in \{10,20\}$. For each training session, we conduct a grid search over the learning rates for patch parameters being $\{10^{-1}, 10^{-2}, 10^{-3}\}$, and for the biases being $\{10^{-2}, 10^{-3}, 10^{-4}\}$. We choose the model with the lowest test error. The experiment is replicated $5$ times, and we report the mean and standard deviation of the test errors.

\begin{figure}[h]
    \centering
    % First row
    \begin{minipage}{.5\textwidth}
        \centering
        \includegraphics[width=6.0cm]{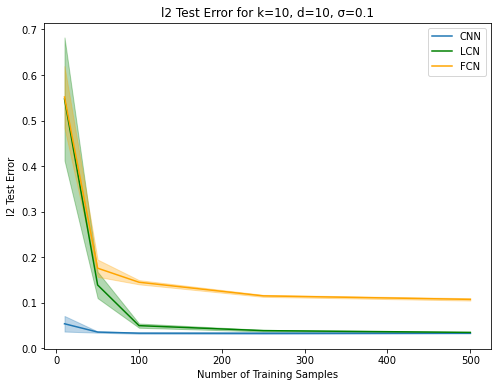} % First figure in the first row
    \end{minipage}%
    \begin{minipage}{.5\textwidth}
        \centering
        \includegraphics[width=6.0cm]{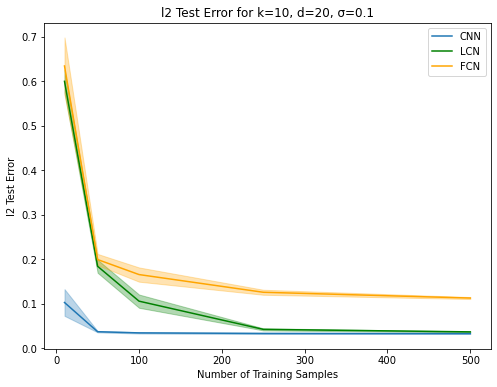} % Second figure in the first row
    \end{minipage}
    % Second row
    \begin{minipage}{.5\textwidth}
        \centering
        \includegraphics[width=6.0cm]{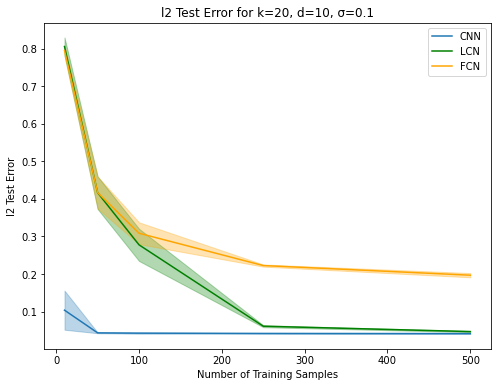} % First figure in the second row - replace with your image file
    \end{minipage}%
    \begin{minipage}{.5\textwidth}
        \centering
        \includegraphics[width=6.0cm]{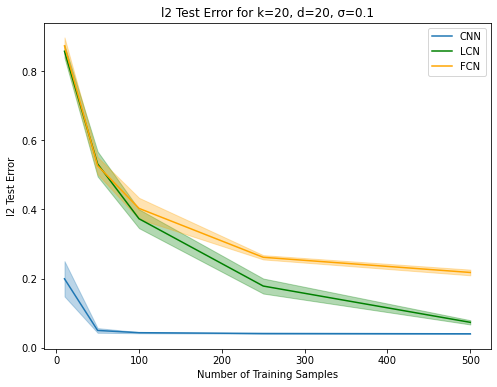} % Second figure in the second row - replace with your image file
    \end{minipage}
    \caption{Test error incurred by CNNs, LCNs and FCNs for various values of $(k,d)$}
    \label{fig:cnn_lcn_nsamp_vs_k}
\end{figure}

Across all $(k, d)$ pairs we observe that LCNs require an order-of-magnitude ($10\text{-}20$ times) more samples than CNNs to achieve comparable test errors. This demonstrates the larger sample efficiency of CNNs over LCNs. Extrapolating the trend line for FCNs, it is evident that they would need even orders-of-magnitude more samples than LCNs for comparable error levels. These observations are consistent with our theoretical predictions of sample complexities: $\Omega(k^2d)$ for FCNs, $O(k(k+d))$ and $\Omega(kd)$ for LCNs, and $O(k+d)$ for CNNs.

\subsection{Sample Complexity Experiments}

In our first experiment, we fix the patch dimension $d$ at 20 and vary the number of patches $k$ across the range $\{10, 15, 20, 25, 30\}$. For each $(k,d)$ pair, we plot the sample complexity for both CNNs and LCNs. We evaluate the sample complexity via the following steps:
\begin{enumerate}
    \item Target Loss Evaluation: We compute the optimal loss based on the ground truth and add a fixed tolerance of $0.03$ to establish the target loss.
    \item Determining Sample Range: Through trial and error, we determine that a maximum of $1000$ samples is sufficient for any model across all $k$ values.
    \item Binary Search Method: To find the minimum number of samples required to reach the target loss, we perform a binary search. In each step, we conduct a grid search over the learning rates for weights being $[10^{-1}, 10^{-2}, 10^{-3}]$ and biases being $[10^{-2}, 10^{-3}, 10^{-4}]$, and select the model with the lowest test error.
    \item Repetitions for Reliability: We repeat the steps (1-3) five times, plotting the mean and standard deviation of the sample complexities.
\end{enumerate}

\begin{figure}[h]
    \centering
    \begin{minipage}{.5\textwidth}
        \centering
        \includegraphics[width=6.0cm]{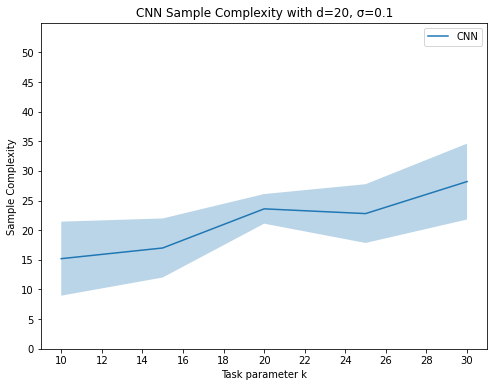} % Adjust the file name and width as needed
    \end{minipage}%
    \begin{minipage}{.5\textwidth}
        \centering
        \includegraphics[width=6.0cm]{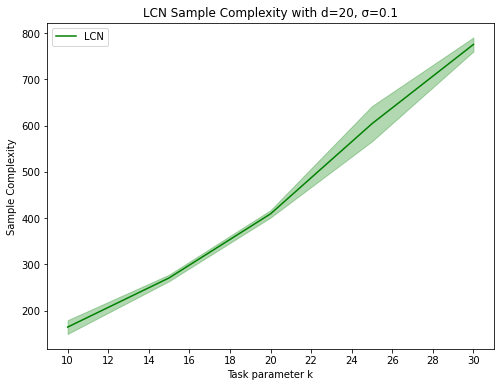} % Adjust the file name and width as needed
    \end{minipage}
    \label{cnn_lcn_nsamp_vs_k}
    \caption{Sample complexity for CNNs (left) and LCNs (right) across various values of $k$}
\end{figure}

For a fixed $d$, the sample complexity for CNNs exhibits an $O(k)$ growth as (Figure 3, left), which is consistent with our CNN upper bound. Similarly, for LCNs, the complexity growth is consistent with our theoretical results of $O(k^2)$ and $\Omega(k)$ (Figure 3, right).
Additionally, note that LCNs require about $10$ to $20$ times more samples than CNNs, which corresponds to the multiplicative $d$ factor in LCNs' sample complexity bound.

In our second experiment, we set the number of patches $k$ at 20 and vary the patch dimension $d$ across the range $[10, 15, 20, 25, 30]$. The same steps (1-4) are repeated for this setup.

\begin{figure}[H]
    \centering
    \begin{minipage}{.5\textwidth}
        \centering
        \includegraphics[width=6.0cm]{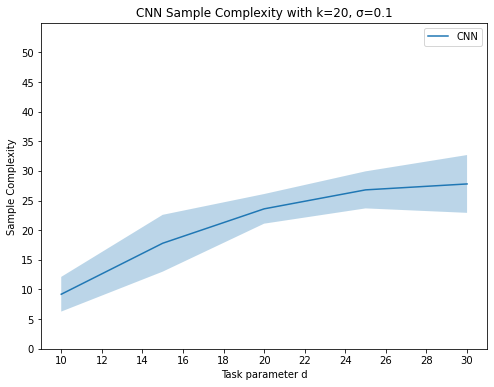} % Adjust the file name and width as needed
    \end{minipage}%
    \begin{minipage}{.5\textwidth}
        \centering
        \includegraphics[width=6.0cm]{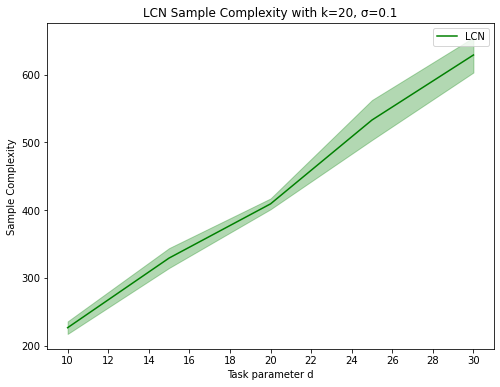} % Adjust the file name and width as needed
    \end{minipage}
    \label{cnn_lcn_nsamp_vs_k}
    \caption{Sample complexity for CNNs (left) and LCNs (right) across various values of $d$}
\end{figure}

For a fixed $k$, we observe that the CNN sample complexity grows as $O(d)$ (Figure 4, left), and the LCN sample complexity grows as $\Theta(d)$ (Figure 4, right), both in line with our theoretical guarantees.
Furthermore, akin to our findings in the first experiment, LCNs require approximately $20$ times more samples than CNNs, owing to the multiplicative $k$ factor in their sample complexity.

\end{document}